\newcommand{\undersett}[2]{\underset{\text{#1}}{#2}}
\newtheorem{thmlem}{Lemma}
\newtheorem{thmcol}{Corollary}
\newtheorem{thmprop}{Proposition}
\newtheorem{thmthm}{Theorem}
\theoremstyle{definition}
\newtheorem*{rep@theorem}{\rep@title}
\newcommand{\newreptheorem}[2]{%
\newenvironment{rep#1}[1]{%
 \def\rep@title{#2 \ref{##1} (Restated)}%
 \begin{rep@theorem}}%
 {\end{rep@theorem}}}
\def\scrC{\mathscr{C}}
\def\bx{\mathbf{x}}
\def\bK{\mathbf{K}}
\def\cB{\mathcal B}
\def\hcB{\hat{\cB}}
\def\cD{\mathcal D}
\def\cI{\mathcal I}
\def\cJ{\mathcal J}
\def\cC{\mathcal C}
\def\cT{\mathcal T}
\def\cX{\mathcal X}
\def\cY{\mathcal Y}
\def\cF{\mathcal F}
\def\cH{\mathcal H}
\def\cU{\mathcal U}
\def\cK{\mathcal K}
\def\cS{\mathcal S}
\def\cF{\mathcal{F}}
\def\cO{\mathcal{O}}
\def\hB{\hat{B}}
\def\hcO{\hat{\mathcal{O}}}
\def\tcO{\tilde{\mathcal{O}}}
\def\hR{\hat{R}}
\def\hcS{\hat{\mathcal{S}}}
\def\tcB{\tilde{\mathcal{B}}}
\def\ho{\hat{o}}
\def\hs{\hat{s}}
\def\rulet{}
\def\supp{\textnormal{supp}}
\DeclareMathOperator*{\E}{\mathbb{E}}
\DeclareMathOperator*{\argmin}{arg\,min}
\begin{document}

\runningauthor{Michael Oberst*, Fredrik D. Johansson*, Dennis Wei* et al.}

\twocolumn[
\aistatstitle{Characterization of Overlap in Observational Studies}

\aistatsauthor{Michael Oberst* \And Fredrik D. Johansson* \And Dennis Wei* }%
\aistatsaddress{ MIT-IBM Watson AI Lab \\ MIT, CSAIL \& IMES \And Chalmers University of Technology \And MIT-IBM Watson AI Lab \\ IBM Research}%
\aistatsauthor{ Tian Gao \And Gabriel Brat \And David Sontag \And Kush R. Varshney }%
\aistatsaddress{ MIT-IBM Watson AI Lab \\ IBM Research \And Harvard Medical School \And MIT-IBM Watson AI Lab \\ MIT, CSAIL \& IMES \And MIT-IBM Watson AI Lab \\ IBM Research }%
]

\begin{abstract}
  Overlap between treatment groups is required for non-parametric estimation of causal effects.  If a subgroup of subjects always receives the same intervention, we cannot estimate the effect of intervention changes on that subgroup without further assumptions.  When overlap does not hold globally, characterizing local regions of overlap can inform the relevance of causal conclusions for new subjects, and can help guide additional data collection. To have impact, these descriptions must be interpretable for downstream users who are not machine learning experts, such as policy makers.  We formalize overlap estimation as a problem of finding minimum volume sets subject to coverage constraints and reduce this problem to binary classification with Boolean rule classifiers. We then generalize this method to estimate overlap in off-policy policy evaluation. In several real-world applications, we demonstrate that these rules have comparable accuracy to black-box estimators and provide intuitive and informative explanations that can inform policy making. 
\end{abstract}

\section{INTRODUCTION}
\label{sec:introduction}
To accurately estimate the causal effect of an intervention, it is essential that intervention alternatives have been observed in comparable contexts, i.e., that there is {\em overlap} between the distributions of individuals receiving each intervention \citep{rosenbaum1983central,d2017overlap}. In randomized experiments, overlap is guaranteed for the study population by randomizing the intervention. However, this is not the case in observational studies where interventions are chosen according to an existing, in some cases deterministic, policy. In such settings, overlap may hold only for an unidentified subset of cases, with the causal effect being unidentifiable outside of this subset. We motivate our paper with the following use cases:

\emph{Scenario 1: From study to policy.} When researchers publish the findings of a clinical trial, they also share the eligibility criteria (e.g., {\em Age $\geq$ 18, Serum M protein $\geq$ 1g/dl or Urine M protein $\geq$ 200 mg/24 hrs, Recent diagnosis} \citep{myelomatrial2}) and cohort statistics in order to characterize the cohort of study subjects.
This gives policy makers means to assess the external validity of the results, i.e., to whom the results apply. We seek to provide the same for observational studies, with our algorithms producing an interpretable description of subjects with treatment group overlap. 

\emph{Scenario 2: Evaluating guidelines.} There are over 471 different guidelines for how to manage hypertension \citep{guidelines}. We could evaluate these---and new guidelines---using off-policy evaluation methods~\citep{Precup2000} on observational data derived from electronic medical records. Off-policy evaluation of a guideline is only possible on subsets of the population where there is some probability that the guideline was followed (which we will also call overlap). The estimated policy value should be accompanied by a description of the validity (overlap) region.

Beyond causal estimation, overlap is of interest in many other branches of machine learning: In domain adaptation, the overlap between source and target domains is the set of inputs for which we can expect a trained model to transfer well~\citep{ben2010theory,johansson2019support}; In classification, overlap between inputs with different labels signifies regions that are hard to classify; In algorithmic fairness~\citep{dwork2012fairness}, overlap between protected groups may shed light on disparate treatment of individuals from different groups who are otherwise comparable in task-relevant characteristics; In reinforcement learning, lack of overlap has been identified as a failure mode for deep Q-learning using experience replay~\citep{fujimoto19a}.

\begin{figure}[t!]
    \centering
    \includegraphics[width=.8\columnwidth]{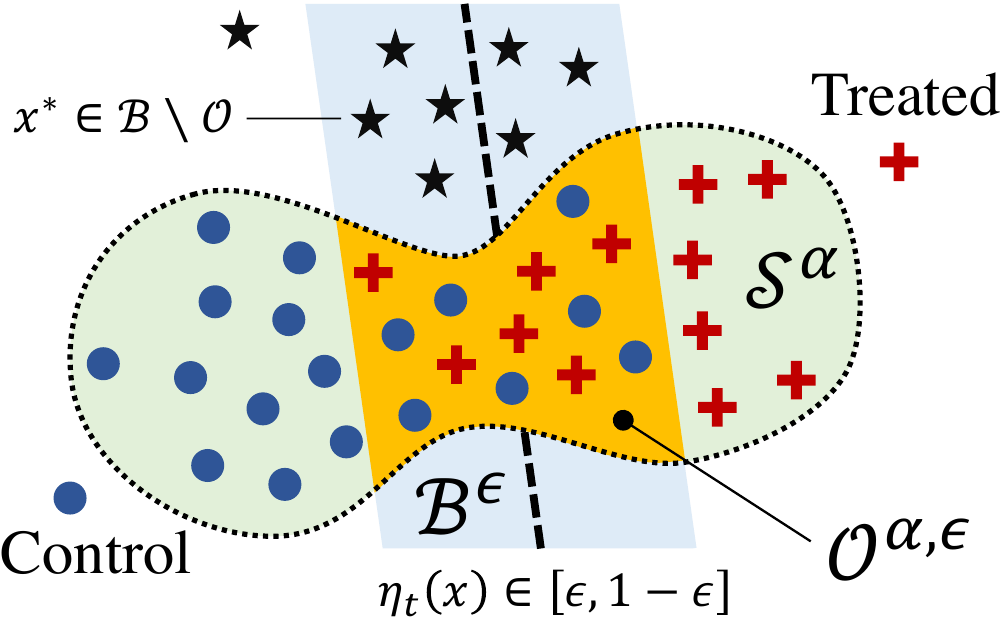}%
    \caption{Overlap $\cO^{\alpha, \epsilon}$ between treatment groups with joint support $\cS^\alpha$. A point $x^* $ has group propensity $\eta_t$ bounded away from 0 and 1, but is outside of $\cO^{\alpha, \epsilon}$.}
    \label{fig:overlap}
\end{figure}

Our main contributions are as follows:
\begin{enumerate*}[label=(\roman*)]
    \item We propose desiderata in overlap estimation, and note how existing methods fail to satisfy them.
    \item We give a method for \emph{interpretable characterization of distributional overlap}, which satisfies these desiderata, by reducing the problem to two binary classification problems, and using a linear programming relaxation of learning optimal Boolean rules.
    \item We give generalization bounds for rules minimizing empirical loss.
    \item We demonstrate that small rules often perform comparably to black-box estimators on a suite of real-world tasks.
    \item We evaluate the interpretability of rules for describing treatment group overlap in post-surgical opioid prescription in a user study with medical professionals.
    \item We show how a generalized definition and method applies to policy evaluation and apply it to describing overlap in policies for antibiotic prescription.
\end{enumerate*}

\section{RELATED WORK}
\label{sec:related}
Treatment group overlap is a central assumption in the estimation of causal effects from observational data. Comparing group-specific covariate bounds and lower-order moments is a common first step in assessing overlap~\citep{rosenbaum2010design,zubizarreta2012using,fogarty2016discrete} but fails to identify local regions of overlap when they exist (see the example of $\cO^{\alpha, \epsilon}$ in Figure~\ref{fig:overlap}). An alternative is to estimate the \emph{treatment propensity}---the probability that a subject was prescribed treatment. Treatment propensities bounded away from $0$ and $1$ at a point $X$ indicates that treatment groups overlap at $X$~\citep{rosenbaum1983central,li2018balancing}. 

In studies with partial overlap, it is common to restrict the study cohort by thresholding treatment propensity or discarding unmatched subjects after applying matching methods~\citep{rosenbaum1989optimal,iacus2012causal,kallus2016generalized,visconti2018handling}. For example, \citet{crump2009dealing} proposed an optimal propensity threshold that minimizes the variance of the estimated average treatment effect on a sub-population. However, neither propensity thresholding nor matching are sufficient for guiding policy in new cases: they do not provide a self-contained, interpretable description of where treatment groups overlap \emph{within} the study, nor do they provide insight into \emph{external} validity by describing the limits of the study cohort. 

\citet{fogarty2016discrete} address the first concern above by learning ``interpretable study populations''  through identifying the largest axis-aligned box that contains only subjects with bounded propensity. However, this approach is very limited in capacity and does not address external validity. For this reason, we strive to provide interpretable descriptions of overlap, both in terms of treatment propensity and the study support. 

Rule-based models have been considered in classification tasks~\citep{rivest1987learning,angelino2017learning,yang2017scalable,lakkaraju2016interpretable,wang2017bayesian,dash2018boolean,freitas2014comprehensible,wang2015falling}, subgroup discovery~\citep{herrera2011overview} and density estimation~\citep{ram2011density,goh2015cascaded} but have to the best of our knowledge not been applied or tailored to support or overlap estimation. 

\section{DEFINING OVERLAP}
\label{sec:background}
We address \emph{interpretable description of population overlap}. Our primary motivation is to aid \emph{policy making} based on observational studies, the success of which relies on understanding and communicating the studies' validity region---the set of cases for which there is evidence that a particular policy decision is preferable.
We identify the following desiderata for descriptions of overlap:
\begin{enumerate*}[label=(D.\arabic*)]
  \item They cover regions where all populations (treatment groups) are well-represented;\label{des:1}
  \item They exclude all other regions, including those outside the support of the study (see Figure~\ref{fig:overlap});\label{des:2}
  \item They can be expressed using a small set of simple rules.\label{des:3}
\end{enumerate*} 
Next, we define overlap according to \ref{des:1} and \ref{des:2}. 
We address \ref{des:3} in Section~\ref{sec:algorithm}.

Let subjects $i = 1, ..., m$ be observed through samples $(x_i, t_i)$ of covariates $X \in \cX \subseteq \mathbb{R}^d$ and a group indicator $T \in \cT$. In our running example, $X$ represents patient attributes and $T$ their treatment. We assume that subjects are independently and identically distributed according to a density $p(X, T)$, and that $\cX$ is bounded. Let $p_t(X) := p(X\mid T=t)$ denote the covariate density of group $t \in \cT$ and $\eta_t(x) := p(T=t \mid X=x)$ the propensity of membership in group $t \in \cT$ for subjects with covariates $x\in \cX$. We denote the probability mass of a set $S\subseteq \cX$ under $p$ by $P(S) := \int_{x \in S} dp$ and the support of $p$ by $\supp(p) := \{x\in \cX : p(x) > 0\}$.

In the common case of two groups, $\cT = \{0,1\}$, overlap is typically defined as either a) the intersection of supports, $\supp(p_0) \cap \supp(p_1)$, or b) the set of covariate values for which all group propensities $\eta_t$ are bounded away from zero~\citep{d2017overlap,li2018balancing}. We let  $\cB^\epsilon$ denote this latter set of values with \emph{$\epsilon$-bounded propensity} for a fixed parameter $\epsilon\in (0,1)$ and an arbitrary set of groups $\cT$,
\begin{equation}\label{eq:boundprop}
\cB^\epsilon := \{x \in \cX ; \forall t \in \cT \colon \eta_t(x) > \epsilon\}~.
\end{equation}
Neither $\cB^\epsilon$ nor the support intersection fully capture our desired notion of overlap: The former does not satisfy \ref{des:2} since a point may have bounded propensity (true or estimated) but lie outside the population support $\supp(p)$ (see Figure~\ref{fig:overlap}). Note that interpretable description alone does not address this. The latter is non-informative for variables with infinite support (e.g., a normal random variable), and even with finite support, we may wish to exclude distant outliers.

Our preferred definition of overlap combines the requirement of bounded propensity with a generalization of support called $\alpha$-\emph{minimum-volume sets}~\citep{scholkopf2001estimating}. Let $\scrC$ be a set of measurable subsets of $\cX$, let $V(C)$ denote the volume of a set $C \in \scrC$. An $\alpha$-\emph{minimum-volume} set $\cS^\alpha$ of $p$ is then
\begin{equation}\label{eq:MVset}
\cS^\alpha := \argmin_{C}\{V(C) \;; P(C) \geq \alpha, C \in \scrC\}~,
\end{equation}
with $\cS^1=\supp(p)$. For $\alpha < 1$, $\cS^\alpha$ is not always unique, but the intersection $S$ of two $\alpha$-MV sets has mass $P(S) \geq 2\alpha-1$. In this work, we let $\alpha < 1$ in order to handle distributions with infinite support and unwanted outliers, and \emph{refer to $\cS^\alpha$ as the support of $p$}. 
We define the \emph{$\alpha, \epsilon$-overlap set}, for $\alpha, \epsilon \in (0,1)$, to be 
\begin{equation}\label{eq:overlapdef}
\cO^{\alpha, \epsilon} :=  \cS^\alpha \cap \cB^\epsilon~.
\end{equation}
We define the problem of overlap estimation under definition \eqref{eq:overlapdef} as \emph{characterizing the set $\cO^{\alpha, \epsilon}$ given thresholds $\alpha$ and $\epsilon$}. In line with \ref{des:3}, these characterizations should be useful in policy making, and interpretable by domain experts, at small cost in accuracy. For notational convenience, we sometimes leave out superscripts from $\cS^\alpha, \cB^\epsilon$ and $\cO^{\alpha, \epsilon}$, assuming that $\alpha, \epsilon$ are fixed.

\textbf{Remark.} Defining overlap instead as the intersection of group-specific $\alpha$-MV sets is feasible, but scales poorly with $|\cT|$; it does not facilitate the generalization to policy evaluation described below; and the intersection of many descriptions may be hard to interpret.

\subsection{Generalization to Policy Evaluation}
\label{sec:overlaprulespolicy}
The definition of $\cB^\epsilon$ in \eqref{eq:boundprop} is motivated by causal effect estimation---comparison of outcomes under two or more alternative interventions.  We may instead be interested in policy evaluation, which involves estimating the expected outcome under a conditional intervention $\pi$, which assigns a treatment $t$ to each $x$ following a conditional distribution $\pi(T | X)$~\citep{Precup2000}.  To perform this evaluation, we only require that the propensity $p(T|X)$ of observed treatments be bounded away from zero for treatments which have non-zero probability under $\pi$. To describe the inputs for which this is satisfied, we generalize $\cB^\epsilon$ to be a function of the target policy $\pi$,
\begin{equation}\label{eq:boundproppolicy}
\cB^\epsilon(\pi) := \{x \in \cX ; \forall t : \pi(t\mid x) > 0 : \eta_t(x) > \epsilon\}~.
\end{equation}
More details are given in the supplement regarding the use of OverRule in this setting.

\section{OVERRULE: BOOLEAN RULES FOR OVERLAP}
\label{sec:algorithm}

We propose OverRule\footnote{Code available at \url{https://github.com/clinicalml/overlap-code}}, an algorithm for identifying the overlap region $\cO$ in \eqref{eq:overlapdef} by first estimating the $\alpha$-MV support set $\cS$~\eqref{eq:MVset} and then the bounded-propensity set $\cB$~\eqref{eq:boundprop} restricted to $\cS$, thereby satisfying desiderata~\ref{des:1}--\ref{des:2}.  We aim to fulfill desideratum~\ref{des:3} by using Boolean rules---logical formulae in either disjunctive (DNF) or conjunctive (CNF) normal form---which have received renewed attention because of their interpretability~\citep{dash2018boolean,su2016learning}. See Figures~\ref{fig:jobsrules}--\ref{fig:opioid_rules} for examples of learned rules. 
OverRule proceeds in the following steps: 
\begin{enumerate}[label=(\roman*)]
\item Fit $\alpha$-MV set $\hcS^\alpha$ of $p(X)$ using Boolean rules\label{step:srule}%
\item Fit model of group propensity $\hat{\eta}_{(\cdot)}$ over $\hcS^\alpha$ and let $\tilde{b}(x) = \prod_{t\in \cT}\mathds{1}[\hat{\eta}_t(x) > \epsilon]$ define membership in $\tcB^\epsilon$\label{step:bbase}%
\item Approximate $\tcB^\epsilon$ using Boolean rules to yield $\hcB^\epsilon$ \label{step:brule} and estimate overlap region by $\hcO^{\alpha, \epsilon} = \hcB^\epsilon \cap \hcS^\alpha$.
\end{enumerate}

In this section, we demonstrate how steps \ref{step:srule} \& \ref{step:brule} can be reduced to binary classification. This enables us to exploit the many existing methods for rule-based classification~\citep{freitas2014comprehensible} to improve the interpretability of $\hcO$. Finally, we give results bounding the generalization error of estimates of both $\cS$ and $\cS \cap \cB$.

\textbf{Remark.} It was observed in evaluations with a medical practitioner that fitting rules for $\cS$ and $\cB$ separately improved interpretability as it makes clear which rules apply to which task and prevents the bulk of the rules from being consumed by one of the two tasks.
%
%
\subsection{Estimation of \texorpdfstring{$S^\alpha$}{Salpha} as Binary Classification}
\label{sec:MVruleSet}
In the first step of OverRule, we learn a Boolean rule to approximate the $\alpha$-MV set $\cS^\alpha$ of the marginal distribution $p(X)$ by reducing the problem to binary classification between observed samples $\cD \coloneqq \{x_i\}_{i=1}^m$ and uniform background samples. 
For clarity, we focus only on DNF rules---disjunctions of conjunctive clauses such as $(\text{Age} < 30 \land \text{Female}) \lor (\text{Married})$. As pointed out by \cite{su2016learning}, a CNF rule can be learned by swapping class labels and fitting a DNF rule. 

We adapt previous notation and let $\scrC$ be a class of candidate $\alpha$-MV sets $\cC$ corresponding to Boolean rules, i.e., each $\cC$ consists of the points in $\cX$ that satisfy a rule. 
We will often not distinguish between a rule and its corresponding set $\cC$ and thus will speak of the ``volume'' of a rule or clause. 
We aim to solve a normalized and regularized version of the $\alpha$-MV problem in \eqref{eq:MVset},
\begin{equation}
\argmin_{\cC\in\scrC} \;
Q(\cC) := \undersett{Volume}{\bar{V}(\cC)} + \undersett{Regularization\;\;\;}{R(\cC) \ \ \text{ s.t. }}\undersett{Coverage}{P(\cC) \geq \alpha}
\label{eq:MVobj}
\end{equation}
where the volume $\bar{V}(\cC) = V(\cC) / V(\cX) \in [0,1]$ is normalized to that of $\cX$. We assume that the regularization term $R(\cC)$ controls complexity by placing penalties $\lambda_0$ on each clause in the rule and $\lambda_1$ on each condition in a clause. Thus, for a Boolean rule with clauses $k = 1,\dots,K$, each with $p_k$ conditions, we have\footnote{It is possible to generalize \eqref{eqn:R} to place different penalties on different conditions but we adopt \eqref{eqn:R} for simplicity.}
\begin{equation}\label{eqn:R}
R(\cC) = K\lambda_0 + \lambda_1 \sum_{k=1}^K p_k.
\end{equation}
It is also assumed that the trivial ``all-true'' and ``all-false'' rules have complexity $R(\cC) = 0$.

The volume $\bar{V}(\cC)$ may be difficult to compute repeatedly during optimization 
and $\scrC$ is often too large to allow pre-computation of $\bar{V}(\cC)$ for all $\cC$. In particular, for DNF rules, each $\cC$ is a union of potentially several overlapping clauses (see Figures~\ref{fig:jobsrules}--\ref{fig:opioid_rules} or the illustration in the supplement); even if the volume spanned by each clause is quick to compute on the fly, the overall volume may not be. 
As an alternative, the normalized volume $\bar{V}(\cC)$ can be estimated by means of uniform samples $\{x_{m+1},\dots,x_{m+n}\}$ over $\cX$. Let $\cU$ be the index set of these uniform samples. 
Then, $\frac{1}{n} \sum_{i\in\cU} \mathds{1}[x_i \in \cC]$ is distributed as a scaled binomial random variable with mean $\bar{V}(\cC)$ and variance $\bar{V}(\cC) (1-\bar{V}(\cC)) / n$. Theorem~\ref{thm:MV} below provides guidance in selecting the number of uniform samples $n$ to ensure a good estimate.

Given the above empirical estimator of volume, we reduce problem \eqref{eq:MVobj} to a classification problem between the marginal density $p(X)$ and a uniform distribution over $\cX$. This reduction was also mentioned in the conclusion of \cite{scott2006learning}. We also replace the probability mass constraint with its empirical version over $\cD$ with $\cI = \{1,\dots,m\}$. The result is a Neyman-Pearson-like classification problem with a false negative rate constraint of $1-\alpha$ (instead of the usual false positive constraint), as given below.
\begin{equation}\label{eqn:binaryclass}
\begin{aligned}
\hcS := \; & \underset{\mathcal{C}}{\argmin}
& & \frac{1}{\lvert\cU\rvert} \sum_{i\in\cU} \mathds{1}[x_i \in \cC]  + R(\cC) \\
& \text{subject to}
& & \sum_{i\in\cI} \mathds{1}[x_i \in \cC] \geq \alpha m~.
\end{aligned}
\end{equation}

The following theorem bounds the regret of the minimizer of \eqref{eqn:binaryclass} with respect to \eqref{eq:MVobj} and is proven in the supplement. The assumption of binary variables simplifies the analysis and is not a fundamental limitation.
\begin{thmthm}\label{thm:MV}
Let $q^*(\alpha)$ denote the minimum regularized volume attained in \eqref{eq:MVobj} over the class of DNF rules with probability mass $\alpha$. Assume that a) the regularization $R$ follows \eqref{eqn:R} with fixed parameters $\lambda_0, \lambda_1$, b) all variables $X_j$ are binary-valued, and c) the class $\scrC$ is restricted to rules satisfying necessary conditions of optimality for \eqref{eq:MVobj} (see Lemmas in the supplement). Then with probability greater than $1-2\delta$, the empirical estimate $\hat{\cS}$ in \eqref{eqn:binaryclass} satisfies
\[
Q(\hat{\cS}) \leq q^*(\alpha + \epsilon_m) + 2\epsilon_n \quad \text{and} \quad 
    P(\hat{\cS}) \geq \alpha - \epsilon_m,
\]
where
$\epsilon_m = \sqrt{\frac{\lambda_1^{-1} \log(2d) + \left\lfloor 1 + \log_2 \lambda_1^{-1} \right\rfloor \log \lambda_1^{-1} + \log(4/\delta)}{2m}}$
and $\epsilon_n$ is defined analogously.
\end{thmthm}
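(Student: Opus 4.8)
I would view \eqref{eqn:binaryclass} as the empirical surrogate of the Neyman--Pearson-type problem \eqref{eq:MVobj}: the empirical false-positive rate on the uniform samples plays the role of the volume $\bar{V}$, and the empirical coverage on $\cD$ plays the role of the constraint $P(\cC)\ge\alpha$. Given this, the theorem is a uniform-convergence regret bound, and the proof has two ingredients: (a) the admissible class $\scrC$ (DNF rules meeting the necessary optimality conditions) is \emph{finite}, with $\log|\scrC|$ controlled by $\lambda_1$ and $d$; and (b) Hoeffding concentration plus a union bound over $\scrC$, combined with the usual ``inflate the coverage constraint'' trick. The two displayed inequalities are, respectively, the objective side and the feasibility side of (b).

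\textbf{Step 1: bounding $|\scrC|$ (the main work).} The ``all-true'' rule is feasible for \eqref{eq:MVobj} at any coverage level $\beta\le 1$ and has $\bar{V}=1$, $R=0$, so $q^*(\beta)\le 1$; hence every rule optimal for \eqref{eq:MVobj} (at any coverage level) --- and thus every rule in $\scrC$ --- has $R(\cC)\le 1$, which by \eqref{eqn:R} forces $\sum_{k=1}^{K} p_k\le\lambda_1^{-1}$. The supplement's optimality lemmas additionally bound an individual clause: for binary features a clause with $p_k$ distinct literals has normalized volume $2^{-p_k}$, and deleting one literal enlarges the rule's volume by at most $2^{-p_k}$, can only increase coverage, and lowers $R$ by $\lambda_1$; so at an optimum no clause can have $2^{-p_k}<\lambda_1$, i.e. $p_k\le\lfloor 1+\log_2\lambda_1^{-1}\rfloor=:p_{\max}$ (all these conditions are independent of the coverage level, so the optimizers of \eqref{eq:MVobj} at level $\alpha$ \emph{and} at level $\alpha+\epsilon_m$ both lie in $\scrC$). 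A rule in $\scrC$ is then specified by a multiset of clause sizes --- a partition of an integer $\le\lambda_1^{-1}$ into parts of size $\le p_{\max}$ --- together with an assignment of literals, of which there are at most $\prod_{k}(2d)^{p_k}=(2d)^{\sum_k p_k}\le(2d)^{\lambda_1^{-1}}$; a short computation then gives $\log|\scrC|\le\lambda_1^{-1}\log(2d)+\lfloor 1+\log_2\lambda_1^{-1}\rfloor\log\lambda_1^{-1}+O(1)$, which is the numerator of $\epsilon_m^2$ once the absolute constant is absorbed into the $\log(4/\delta)$ term.

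\textbf{Steps 2--4: concentration and constraint inflation.} Write $\hat{P}(\cC)=\frac{1}{m}\sum_{i\in\cI}\mathds{1}[x_i\in\cC]$ and $\hat{V}_n(\cC)=\frac{1}{n}\sum_{i\in\cU}\mathds{1}[x_i\in\cC]$. Applying Hoeffding's inequality to each fixed $\cC$ and a union bound over the finite class $\scrC$, with the confidence split across the two independent samples, with probability at least $1-2\delta$ we have simultaneously $\sup_{\cC\in\scrC}|P(\cC)-\hat{P}(\cC)|\le\epsilon_m$ and $\sup_{\cC\in\scrC}|\bar{V}(\cC)-\hat{V}_n(\cC)|\le\epsilon_n$. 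On this event: since $\hcS\in\scrC$ satisfies the empirical constraint $\hat{P}(\hcS)\ge\alpha$ of \eqref{eqn:binaryclass}, we get $P(\hcS)\ge\hat{P}(\hcS)-\epsilon_m\ge\alpha-\epsilon_m$, the second conclusion. For the first, let $\cC^\dagger\in\scrC$ attain $q^*(\alpha+\epsilon_m)$, so $P(\cC^\dagger)\ge\alpha+\epsilon_m$ and hence $\hat{P}(\cC^\dagger)\ge P(\cC^\dagger)-\epsilon_m\ge\alpha$, i.e. $\cC^\dagger$ is feasible for \eqref{eqn:binaryclass}; optimality of $\hcS$ then gives $\hat{V}_n(\hcS)+R(\hcS)\le\hat{V}_n(\cC^\dagger)+R(\cC^\dagger)$, and converting empirical volumes to true volumes on both sides (valid since $\hcS,\cC^\dagger\in\scrC$) yields $Q(\hcS)=\bar{V}(\hcS)+R(\hcS)\le\hat{V}_n(\hcS)+\epsilon_n+R(\hcS)\le\hat{V}_n(\cC^\dagger)+\epsilon_n+R(\cC^\dagger)\le\bar{V}(\cC^\dagger)+R(\cC^\dagger)+2\epsilon_n=q^*(\alpha+\epsilon_m)+2\epsilon_n$.

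\textbf{Main obstacle.} The delicate part is Step 1: verifying via the supplement's structural lemmas that restricting to $\scrC$ costs nothing (the relevant optimizers of \eqref{eq:MVobj} remain in $\scrC$) while the per-clause literal bound $p_k\le\lfloor 1+\log_2\lambda_1^{-1}\rfloor$ and the enumeration of admissible clause-size partitions combine to produce precisely the stated $\epsilon_m$. Steps 2--4 are the routine Neyman--Pearson-style argument --- Hoeffding, a union bound, and inflating the coverage constraint by $\epsilon_m$ at a cost of $2\epsilon_n$ on the objective.
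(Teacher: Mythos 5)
Your proposal is correct and follows essentially the same route as the paper's proof: a finite-class bound on $\lvert\scrC\rvert$ derived from the necessary optimality conditions (per-clause degree cap $p_{\max}$ and the total-complexity constraint $R(\cC)\leq 1$), followed by Hoeffding plus a union bound over $\scrC$ on both the empirical volume and the empirical coverage, and the standard inflation of the coverage constraint by $\epsilon_m$ at a cost of $2\epsilon_n$ in the objective. The only differences are cosmetic: you establish the $q^*(\alpha+\epsilon_m)$ step directly by exhibiting the inflated-level optimizer $\cC^\dagger$ as feasible for \eqref{eqn:binaryclass}, where the paper routes through an intermediate problem and cites Scott and Nowak, and your count of $\lvert\scrC\rvert$ via ordered tuples of literal assignments, $(2d)^{\sum_k p_k}\leq(2d)^{1/\lambda_1}$, is a slightly more elementary version of the supplement's simplex-counting Lemmas that yields the same numerator for $\epsilon_m$ up to the absorbed constant.
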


\textbf{Remark.} The error term $\epsilon_m$ bounds the amount by which the probability constraint may be violated and contributes $q^*(\alpha+\epsilon_m) - q^*(\alpha)$ to the possible regret. Given the number of data samples $m$, penalty $\lambda_1$ ($\lambda_0$ does not appear in this simplified bound) could be chosen to keep $\epsilon_m$ small, although user preferences for rule complexity are likely to be more important in setting $\lambda_0$, $\lambda_1$. Given $\lambda_1$, the number of uniform samples $n$ could in turn be chosen to reduce $\epsilon_n$. Note that $\epsilon_m, \epsilon_n$ are largely controlled by $\lambda_1$ and depend only logarithmically on the dimension $d$.

%
%
\subsection{Estimation of \texorpdfstring{$\cB^\epsilon$}{Bepsilon} as Binary Classification}
\label{sec:overlaprules}
To estimate the set $\cB^\epsilon$ of inputs with bounded group propensity $\eta_t(X) := p(T=t\mid X)$, we follow in the tradition of using black-box (potentially non-parametric) estimators of propensity to identify overlapping or balanced cohorts in the study of causal effects~\citep{crump2009dealing,fogarty2016discrete}. This is typically done by fitting a classifier (e.g., logistic regression) for predicting $T$ given $X$, and letting $\hat{\eta}_t(x)$ be the estimated probability of class $t$ for input $x$.
Given such an estimate, we assign a label $\tilde{b}_i$ to each data point $x_i \in \cD$ indicating significant propensity for every group, 
\begin{equation}\label{eq:propensityestimator}
\forall i \in [m]: \tilde{b}_i = \prod_{t\in \cT} \mathds{1}[\hat{\eta}_t(x_i)\geq \epsilon]~.
\end{equation}
Let $\tcB = \{x_i : \tilde{b}_i = 1\}$. Similar to the case of $\cS^\alpha$, we may now reduce estimation of $\cB^\epsilon$ to binary classification. Given $\hcS$, the minimizer of~\eqref{eqn:binaryclass}, we again set up a Neyman-Pearson-like classification problem, now regarding the intersection $\hcS\cap\tcB$ as the positive class: 
\begin{align}\label{eqn:binaryclass2}
    \hcB := & \argmin_C \;\; \frac{1}{\lvert\hcS \setminus \tcB\rvert} \sum_{i: x_i\in\hcS \setminus \tcB} \mathds{1}[x_i \in \cC]  + R(\cC)
    \\
    & \text{subject to} \;\; \sum_{i: x_i\in\hcS\cap\tcB} \mathds{1}[x_i \in \cC] \geq \beta \lvert\hcS\cap\tcB\rvert~. \nonumber
\end{align}
The sets $\hcS \setminus \tcB$ and $\hcS\cap\tcB$ are defined by the solution to \eqref{eqn:binaryclass} and the base estimator \eqref{eq:propensityestimator}.
To accommodate the policy evaluation setting described in Section~\ref{sec:background}, we can modify the pseudo-labels defined in \eqref{eq:propensityestimator} to be $\tilde{b}_i(\pi) = \prod_{t \in \pi(x_i)} \mathds{1}[\hat{p}(T=t \mid X=x_i) \geq \epsilon]$, where $\pi(x_i) \coloneqq \{t: \pi(t | x_i) > 0\}$, and solve \eqref{eqn:binaryclass2} using $\tcB(\pi) = \{x_i : \tilde{b}_i(\pi) = 1\}$ in place of $\tcB$.  The resulting full procedure is given in the supplement.

\paragraph{Generalization of the final estimator.} In the supplement, we state and prove a theorem bounding the generalization error of our final estimator, $\hcO = \hcS \cap \hcB$. It shows that for good base estimators  $\hcS, \tilde{\cB}$, the error of $\hcO$ with respect to the true overlap $\cO$ is dominated by its error with respect to the base estimators. Hence, practitioners may make an informed tradeoff between accuracy and interpretability based on this metric.

%
%
\subsection{Optimizing Boolean Rules}
\label{sec:booleanrules}
Next, we describe a procedure for optimizing \eqref{eqn:binaryclass} over a class $\scrC$ of Boolean DNF rules. The same procedure also solves 
\eqref{eqn:binaryclass2}.

We assume that base features $X$ have been binarized to form literals such as $(\text{Age} > 30)$ or $(\text{Sex} = \text{Female})$, as is standard in e.g.~decision tree learning. 
A conjunction may thus be represented as the product of binary indicators of these literals. 
We let $\cK$ index the set of all possible (exponentially many) conjunctions of literals, e.g. $(\text{Age} > 30) \wedge \text{Female}$. Then, for $k\in \cK$, let $a_{ik} \in \{0,1\}$ denote the value taken by the $k$-th conjunction at sample $x_i$. Let the DNF rule be parametrized by $r \in \{0,1\}^{|\cK|}$ such that $r_k=1$ indicates that the $k$-th conjunction is used in the rule. 

Define an error variable $\xi_i$ for $i$ in $\cU \cup \cI$ representing the penalty for covering or failing to cover point $i$, depending on its set membership.
Then, problem \eqref{eqn:binaryclass} may be reformulated as follows, 
\begin{align}\label{eqn:overlap}
    & \underset{r}{\text{minimize}}\ \ \  \frac{1}{\lvert\cU\rvert} \sum_{i\in\cU} \xi_i + R(r) 
     \\ 
     & \text{subject to} \ 
     \left\{
     \begin{array}{ll}
     \displaystyle
      r_k \in \{0,1\}, \; k\in \cK, \\
     \displaystyle \xi_i \geq 1 - \sum_{k\in\cK} a_{ik} r_k, \;\; \xi_i \geq 0, \; i \in \cI,  \\
     \displaystyle \sum_{i\in\cI} \xi_i \leq (1-\alpha) m 
     \\
     \displaystyle \xi_i = \max_{k \in \cK}(a_{ik} r_k), \; i \in \cU.
    \end{array} \right. \nonumber
\end{align}
Problem \eqref{eqn:overlap} is an IP with an exponential number of variables and is intractable as written.  We follow the column generation approach of \cite{dash2018boolean} to effectively manage the large number of variables and solve \eqref{eqn:overlap} approximately. As in that previous work, we bound from above the $\max$ in the last constraint of \eqref{eqn:overlap} with the sum (Hamming loss instead of zero-one loss) as it gives better numerical results. 
The choice of regularization in \eqref{eqn:R} implies 
$
R(r) = \sum_{k \in \cK} \lambda_k r_k
$
with
$
\lambda_k = \lambda_0 + \lambda_1 p_k.
$
Thus the objective becomes linear in $r$, 
$
\sum_{k\in\cK} \left( 1/\lvert\cU\rvert \sum_{i\in\cU} a_{ik} +  \lambda_k \right) r_k
$, and the $\xi_i$, $i\in\cU$ constraints are absorbed into the objective. We then follow the overall procedure in \citep{dash2018boolean} of solving the linear programming (LP) relaxation, using column generation to add variables only as needed. 

We make the following departures from \cite{dash2018boolean}. As noted, \eqref{eqn:overlap} has a constraint on false negative rate instead of a corresponding objective term and a complexity penalty $R(r)$ while \cite{dash2018boolean} use a constraint.  As a result, the LP reduced costs, needed for column generation, are different. With dual variables $\mu_i \geq 0$, $i\in\cI$ corresponding to the $\xi_i$, $i\in\cI$ constraints in \eqref{eqn:overlap}, the reduced cost of conjunction $k$ is now
$
    1/\lvert\cU\rvert \sum_{i\in\cU} a_{ik} + \lambda_k - \sum_{i\in\cI} \mu_i a_{ik}$,
which remains a linear function of $a_{ik}$, allowing the same column generation method to be used.
We also avoid the need for an IP solver as used in \cite{dash2018boolean} by a) solving the column generation problem using a beam search algorithm from~\citep{wei2019generalized}, and b) restricting \eqref{eqn:overlap} to the final columns once column generation terminates, converting to a weighted set cover problem, and applying a greedy algorithm to obtain an integer solution.

\section{EXPERIMENTS}
\label{sec:experiments}

In our experiments, we seek to address the following questions, while relating the performance of OverRule to that of MaxBox (MB) \citep{fogarty2016discrete}, which is also designed to produce interpretable study populations. 
\begin{enumerate*}[label=(\roman*)]
    \item {\bf Why is support estimation important?} In Section~\ref{sec:irisexp} we give a conceptual illustration using the Iris dataset, where MaxBox returns a description that empirically includes a large space outside of the true overlap region.
    \item {\bf How well does OverRule approximate the base estimators / true overlap region?} In Section~\ref{sec:jobsexp} we use the Jobs \citep{lalonde1986evaluating} dataset to show that performance of OverRule is comparable to that of the base estimators, and generally surpasses the performance of MaxBox. 
    \item {\bf Do the resulting rules yield any insights?} We apply OverRule to overlap estimation in two real-world clinical datasets on (1) post-surgical opioid prescriptions, and (2) policy evaluation in antibiotic prescriptions.  For the former, we conducted a user study with three clinicians to interpret and critique the output, with additional comparison to the output of MaxBox.  
\end{enumerate*}

OverRule and MaxBox algorithms are both \textit{meta-algorithms} in the sense that they take (as input) labels indicating whether each data point is in the overlap set.  To generate these labels, we use a variety of base overlap estimators:
\begin{enumerate*}[label=(\roman*)]
    \item \textit{Covariate Bounding Boxes}: The intersection of covariate (marginal) bounding boxes (CBB), analogous to classical balance checks in causal inference. The bounding boxes are selected to cover the $[(1-\alpha)/2, (1+\alpha)/2]$ quantiles of the data. 
    \item \textit{Propensity Score Estimators}: Standard propensity score estimators as described in \eqref{eq:propensityestimator} and \citet{crump2009dealing} with logistic regression (PS-LR) or $k$-nearest neighbors (PS-$k$NN) estimates of the propensity.  These can be viewed as a binary version of overlap weights~\citep{li2018balancing}.
    \item \textit{One-Class SVMs}: One-Class Support Vector Machines (OSVM) to first estimate conditional supports and then use their intersection as overlap labels.
\end{enumerate*}
Details on hyperparameter selection and feature binarization are given in the supplement, along with general guidance on hyperparameter selection depending on user goals, from optimizing an observable metric (e.g., accuracy w.r.t the base estimator), to generating shorter rule sets, to exploring structure in the data.
%
%
\subsection{Illustrative Example: Iris}
\label{sec:irisexp}
We use the Iris dataset to illustrate the importance of combining explicit support estimation (lacking in MaxBox) with an interpretable characterization of the overlap region (lacking in propensity score models).  We use OverRule to identify the overlap between members of two species of Iris, as represented by their sepal and petal dimensions. In Figure~\ref{fig:iris2d}, we visualize the estimates $\hcO$ learned using OverRule and MaxBox in the space of sepal length and width. In contrast, the coefficients of a logistic regression propensity score model, $[-1.7, -1.5,  2.5,  2.6]^\top$ reveal very little about which points lie in the overlap set.

\begin{figure}[t!]
    \centering
    \includegraphics[width=.9\columnwidth]{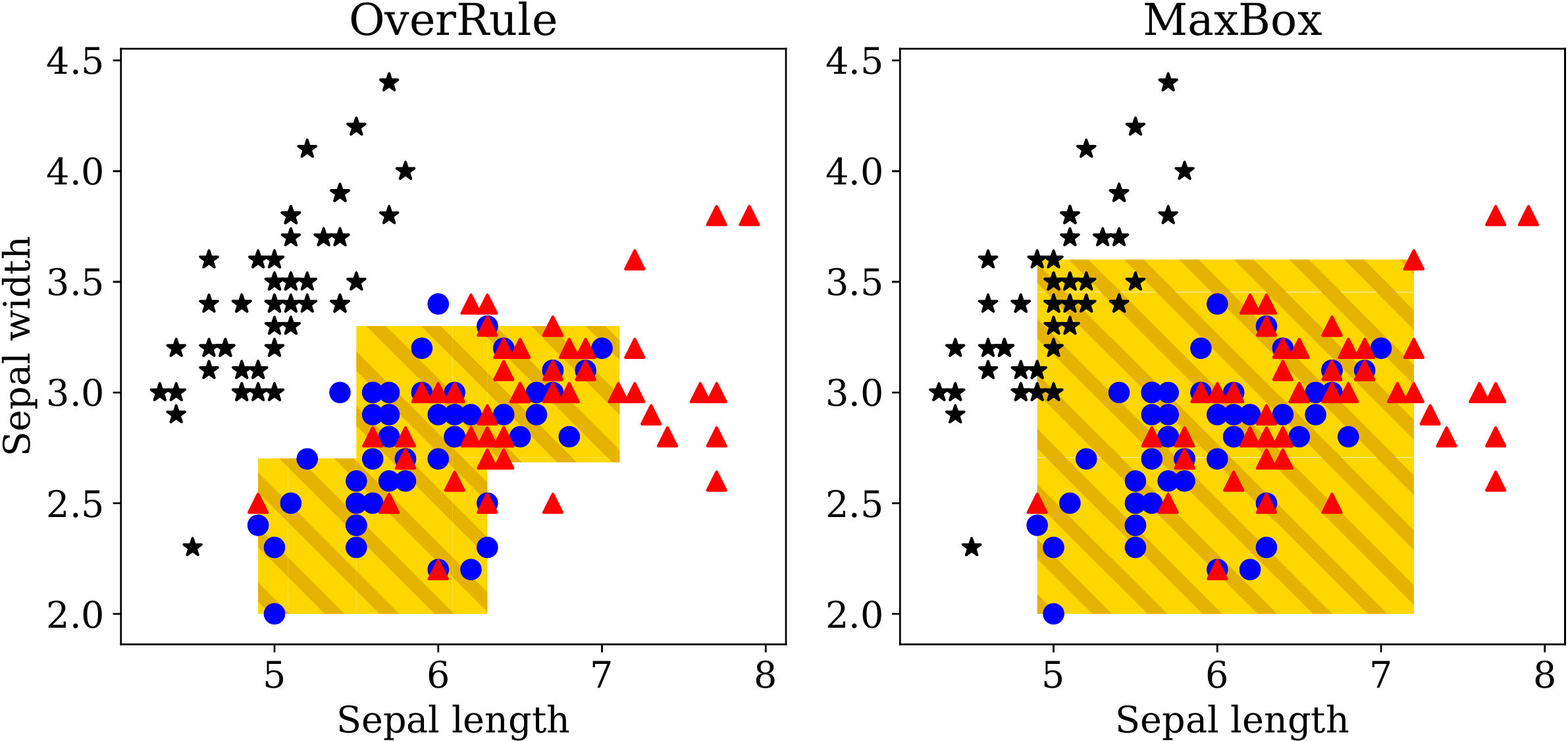}%
    \caption{Overlap (orange stripes) between Versicolor (blue circles) and Virginica (red triangles) species in the Iris dataset as identified by OverRule (left) and MaxBox (right) using the same base estimator of propensity. Black stars indicate samples of the (unobserved) Setosa species. We see that MaxBox identifies several of the Setosa samples as being in the overlap set, despite it being outside of the support of the observed data.}
    \label{fig:iris2d}
\end{figure}

%
%
\subsection{Job Training Programs}
\label{sec:jobsexp}
\begin{table}[t]
    \centering
    \caption{\label{tbl:results} Overlap estimation in Jobs. Balanced accuracy (Acc), false positive rate (FPR), false negative rate (FNR), and number of literals (L) with standard deviations over 5-fold CV. MB and OR indicate MaxBox and OverRule. MB did not run with CBB.}
    {\small
    \begin{tabular}{lcccc}
        \toprule
         & {\bf Acc} & {\bf FPR} & {\bf FNR} & {\bf L}\\ \midrule
        \multicolumn{4}{l}{\bf Baselines (base estimators):} \\ 
       CBB & $0.75 \pm 0.02$ & $0.12 \pm 0.01$  & $0.38 \pm 0.03$ & --- \\
       OSVM & $0.82 \pm 0.01$ & $0.22 \pm 0.03$  & $0.14 \pm 0.02$ & --- \\
       PS-$k$-NN & $0.90 \pm 0.02$ & $0.14 \pm 0.02$  & $0.05 \pm 0.02$ & --- \\
       PS-LR & $0.96 \pm 0.01$ & $0.10 \pm 0.01$  & $0.09 \pm 0.03$ & --- \\ \midrule
        \multicolumn{4}{l}{\bf MaxBox with base estimator:} \\ 
        OSVM & $0.68 \pm 0.01$ & $0.09 \pm 0.02$ & $0.54 \pm 0.01$ & 16 \\
        PS-$k$NN & $0.84 \pm 0.01$ & $0.03 \pm 0.01$ & $0.29 \pm 0.02$ & 16 \\
        PS-LR & $0.80 \pm 0.02$ & $0.04 \pm 0.01$ & $0.35 \pm 0.04$ & 16 \\ \midrule
        \multicolumn{4}{l}{\bf OverRule with base estimator:} \\ 
       CBB & $0.83 \pm 0.01$ & $0.16 \pm 0.01$ & $0.19 \pm 0.02$ & 20 \\
       OSVM & $0.84 \pm 0.02$ & $0.25 \pm 0.03$ & $0.07 \pm 0.02$ & 23 \\
       PS-$k$NN & $0.89 \pm 0.02$ & $0.16 \pm 0.02$ & $0.06 \pm 0.02$ & 40 \\
       PS-LR & $0.88 \pm 0.02$ & $0.15 \pm 0.04$ & $0.09 \pm 0.01$ & 21 \\
        \bottomrule
    \end{tabular}
    }
\end{table}
\begin{figure}
    \centering
    \includegraphics[width=0.9\columnwidth]{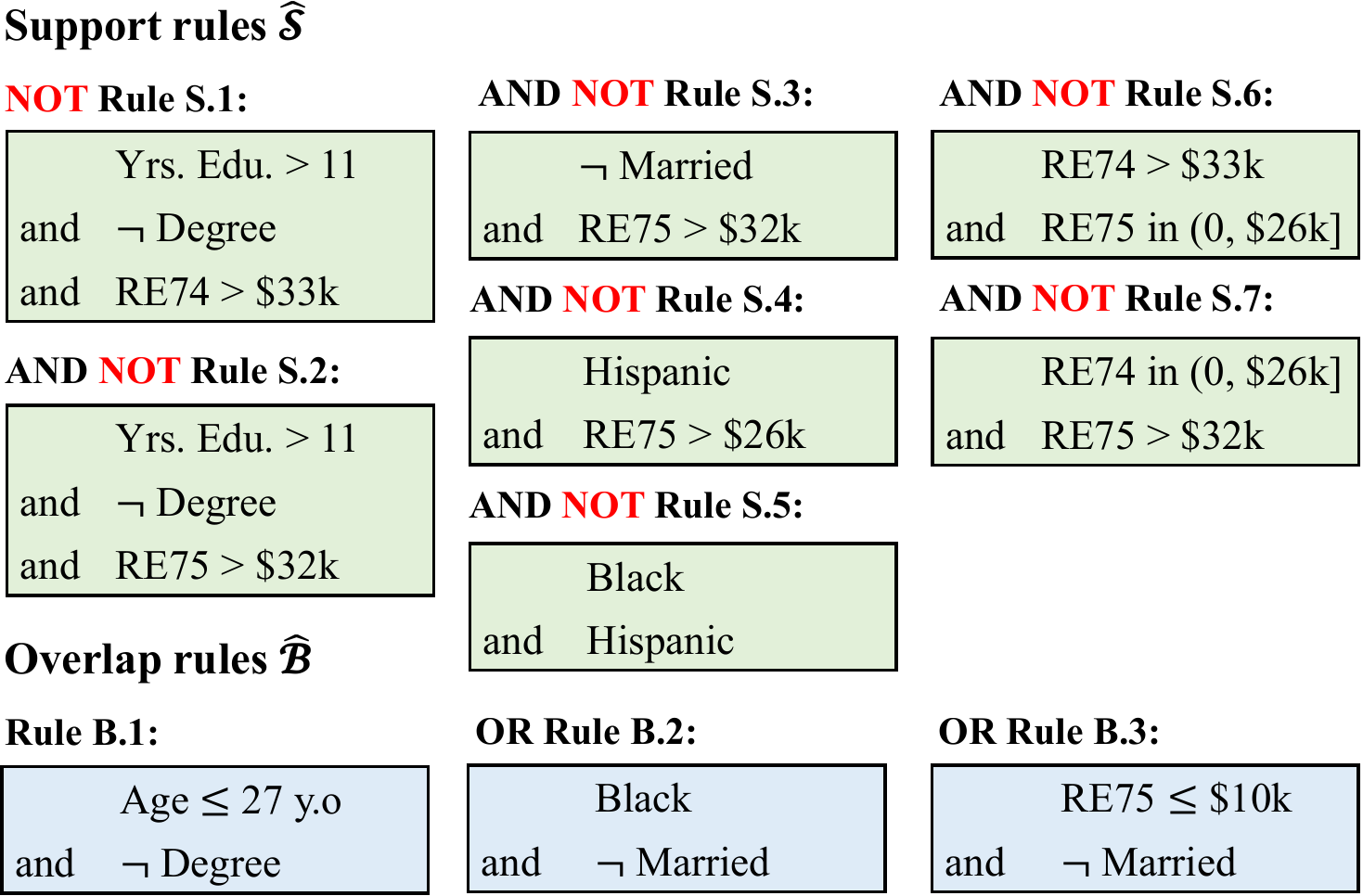}
    \caption{OverRule description of the overlap region $\cO$ in the Jobs dataset learned using the LR propensity base estimator, achieving held-out balanced accuracy of 0.88.  $\lnot$ indicates a negation, and CNF support rules are given with rule-level negations applied for readability. If \emph{none} of the support rules (top) and \emph{any} of the overlap rules (bottom) apply, a subject is in $\cO$.}
    \label{fig:jobsrules}
\end{figure}
In this section, we demonstrate that OverRule compares favorably to MaxBox in terms of approximating both the derived overlap labels (using a base estimator), as well as the ``ground truth'' overlap labels in a real dataset.  To do so, we use data from a famous trial performed to study the effects of job training~\citep{lalonde1986evaluating,smith2005does}, in which eligible US citizens were randomly selected into ($T=1$), or left out of ($T=0$) job training programs. The RCT ($E=1$), which satisfies overlap by definition, has since been combined with non-experimental control samples ($E=0, T=0$), forming a larger observational set (Jobs), to serve as a benchmark for causal effect estimation~\citep{lalonde1986evaluating}. Here, we aim to characterize the overlap between treated and control subjects. 

Due to the trial's eligibility criteria, the experimental and non-experimental cohorts barely overlap; standard logistic regression separates the experimental and non-experimental groups with held-out balanced accuracy of 0.96. Since all treated subjects were part of the experiment, the experimental cohort perfectly represents the overlap region. For this reason, we use the experiment indicator $E$ as ground truth for $\cO$, at the risk of introducing a small number of false negatives. In studies of causal effects in this data, the following features were included to adjust for confounding: Age, \#Years  of education (Educ), Race (black/hispanic/other), Married, No degree (NoDegr), Real earnings in 1974 (RE74) and 1975 (RE75). These are the features $X$ for which we estimate overlap. 

We present results in Table~\ref{tbl:results} and Figure~\ref{fig:jobsrules}, where all balanced accuracies are w.r.t.~the ground truth indicator $E$. For the propensity base estimators, the OverRule approximations achieve slightly lower balanced accuracies than the base estimator, but with a simpler description, while for the other base estimators the accuracy is actually better. OverRule compares favorably to MaxBox on balanced accuracy, although MaxBox generally achieves a lower FPR, likely because it does not try to retain a fixed fraction $\beta$ of the overlap set. In the supplement, we show that the held-out balanced accuracy quickly converges as the number of literals in the rules increases and correlates strongly with the quality by which the rule set approximates the base estimator. 

The learned support rules in Figure~\ref{fig:jobsrules} demonstrate that support estimation can find gaps in the dataset that are intuitive, such as a lack of individuals with high income but no degree (Rules S.1-2) or whose income changes dramatically from 1974 to 1975 (Rules S.6-7).  The learned overlap rules conform to expectations, as the eligibility criteria for the RCT allow only subjects who were currently unemployed and had been so for most of the time leading up to the trial---factors that correlate with age and education (Rule B.1), previous income (Rule B.3), and marital status (Rules B.2-3)~\citep{smith2005does}.

%
%
\subsection{Post-surgical Opioid Prescriptions}%
\label{sec:opioidexp}%

Opioid addiction affects millions of Americans. Understanding the factors that influence the risk of addiction is thus of great importance. To this end, \cite{brat2018postsurgical} and \cite{zhang2017exploring} study the effect of choices in opioid prescriptions on the risk of future misuse. Here, we study a group of \emph{post-surgical} patients who were given opioid prescriptions within 7 days of surgery, replicating the cohort eligibility criteria of~\cite{brat2018postsurgical} using a subset of the MarketScan insurance claims database. We compare groups of patients with morphine milligram equivalent (MME) doses above and below the 85th percentile in the cohort, MME=450. Subjects were represented by basic demographics (age, sex), diagnosis history, and procedures billed as surgical on the index date (not mutually exclusive). Cohort statistics are given in the supplement. 
\begin{figure}[t!]
    \centering
    \includegraphics[width=.90\columnwidth]{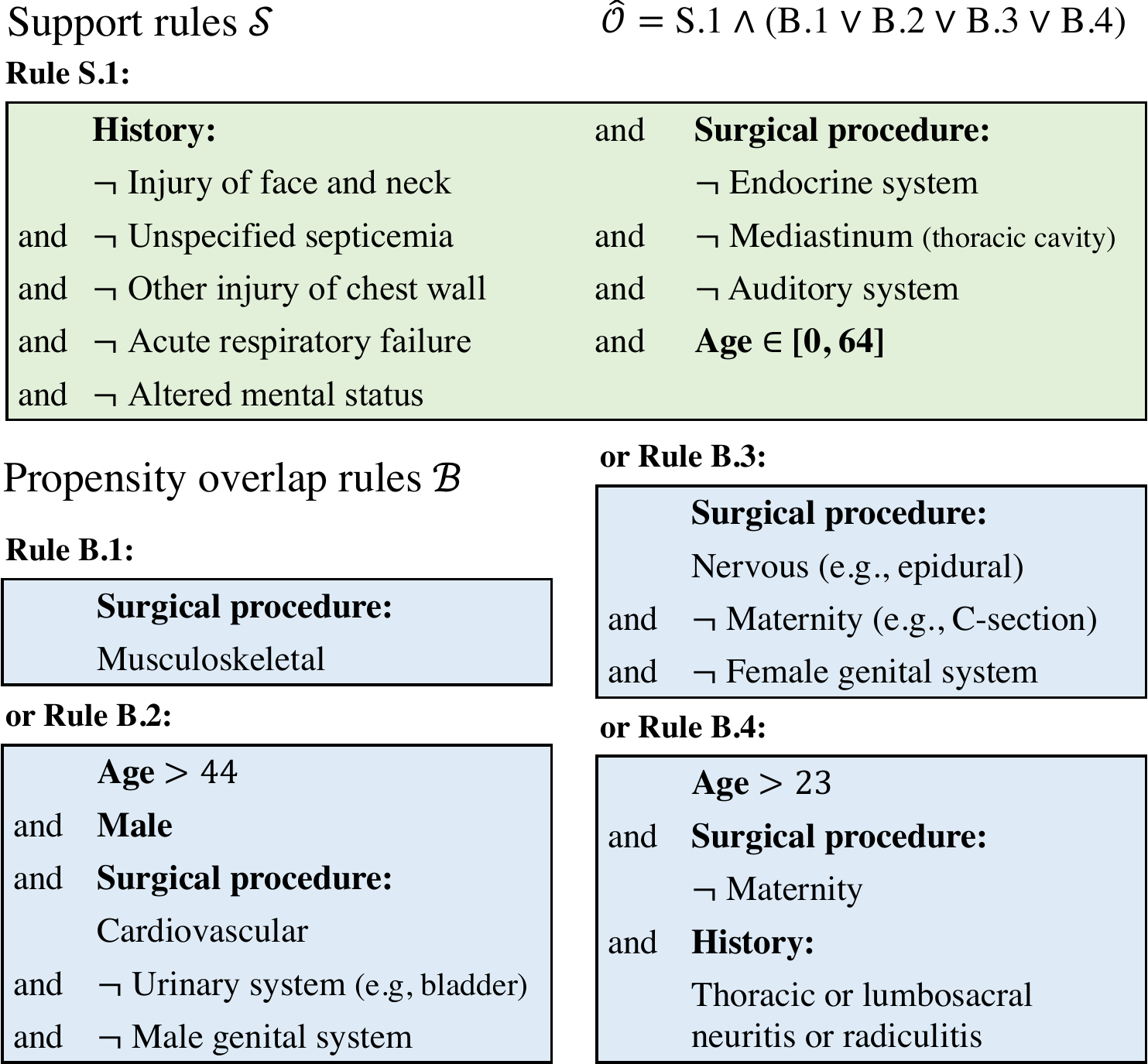}%
    \caption{OverRule description of post-surgical patients likely to receive both high and low opioid doses. A patient is in the overlap set if the support rule (top) applies and \emph{any} propensity overlap rule (bottom) applies. $\lnot$ indicates negation. The rules cover 27\% of patients with balanced accuracy of 0.90 w.r.t.~the base estimator.
    Surgical procedures are not mutually exclusive. \label{fig:opioid_rules}
    }
    \vspace{-2mm}
\end{figure}%
We fit three models: An OverRule model (OR) using DNF support rules and a random forest base estimator, a MaxBox model (MB)~\citep{fogarty2016discrete} with the same base estimator, and another OverRule model describing the complement of $\cO$ (OR-C).  The balanced accuracies of these models w.r.t. the base were 0.90 (OR), 0.77 (MB) and 0.92 (OR-C). Learning took 10 minutes for OverRule (Python) and 7 minutes for MaxBox (R). Other hyperparameter details are in the supplement.

In Figure~\ref{fig:opioid_rules}, we summarize the rules learned by OR which cover 27\% of the overall population.  MB learned: \small (Musculoskeletal surg. $\land$ $\lnot$Mediastinum surg. $\land$ $\lnot$Male genital surg.  $\land$ $\lnot$Maternity surg. $\land$ $\lnot$Lumbosacral spondylosis without myelopathy) \normalsize which covers 17\% of patients. The rules learned by OR-C are presented in the supplement. 

To evaluate the interpretability of the output, we conducted a qualitative user study through a moderated discussion with three participants: two attending surgeons (P1 \& P2) and a 4th year medical student (P3) at a large US teaching hospital. Before seeing the outputs of any method, the participants were asked to give their expectations for what to find in the overlap set. 

The participants expected that the overlap set would mostly correspond to patients in the higher dose range, as these patients are often considered also for smaller doses, and that overlap would be driven largely by surgery type. All participants expected {\rulet Musculoskeletal} and {\rulet Cardiovascular} surgery patients to be predominantly in the higher dose group, and sometimes in the lower, and one suggested that {\rulet Maternity} surgeries (e.g., C-sections) would be only in the lower range.  These comments are all consistent with the findings of OverRule, which identified all of these surgery types as important. MaxBox identified only {\rulet Musculoskeletal} surgery patients as overlapping. One participant expected history of psychiatric disease and {\rulet Tobacco use disorder} to be predictive of higher prescription doses for some patients, and thus overlap. Neither method identified psychiatric disease, but {\rulet Tobacco use disorder} was identified by OR-C as predictive (see supplement).

The participants found the support rules ($\hcS$) output by OR (Figure~\ref{fig:opioid_rules} top) intuitive. P1 stated that {\rulet Endocrine} surgeries are not typically followed by opioid prescriptions. They found the MaxBox and OR rule descriptions easy to interpret, and discussion focused on their clinical meaning. The first three propensity overlap rules B.1-B.3 were all consistent with expectation as described above, with the caveat that {\rulet Cardiovascular} patients are not typically stratified by {\rulet Urinary} and {\rulet Genital} surgeries. This was later partially explained by catheters being billed as {\rulet Urinary} and P3 interpreted this as a proxy for more severe {\rulet Cardiovascular} surgeries. P1 pointed out the value in discovering such surprising patterns that may be hidden in black-box analyses.
The OR-C rules were found hard to interpret due to many double negatives (``excluded from exclusion''), but were ultimately deemed clinically sound. 

\textbf{Remark}: We noted that these support rules primarily exclude individually rare features, in lieu of e.g., finding that certain non-rare surgery types do not co-occur.  This motivated both (1) an empirical study (w/semi-synthetic data) of how support rule hyperparameters influence the recovery of these interactions, and (2) the generation of new rules. Both are in the supplement.  
%
%
\subsection{Policy Evaluation of Antibiotic Prescription Guidelines}
\label{sec:uti}
Using the policy evaluation formulation of $\cB^\epsilon(\pi)$ (Section \ref{sec:overlaprulespolicy}), we apply OverRule to assess overlap for a policy that follows clinical guidelines published by the Infectious Disease Society of America (IDSA) for treatment of uncomplicated urinary tract infections (UTIs) in female patients \citep{Gupta2011}.  Using medical records from two academic medical centers, we apply OverRule to a cohort of 65,000 UTI patients to test whether it can recover a clinically meaningful overlap set.  From a qualitative perspective, we discussed the resulting rules with an infectious disease specialist, who verified that they have a clear clinical interpretation as identifying primarily outpatient cases and uncomplicated inpatient cases, which are where the guidelines are applied in practice.  Detailed results (including quantitative results) are given in the supplement. 

\section{CONCLUSION}%
\label{sec:discussion}%
We have presented OverRule---an algorithm for learning rule-based characterizations of overlap between populations, or the inputs for which policy evaluation from observational data is feasible. The algorithm learns to exclude points that are marginally out-of-distribution, as well as points where some population/policy has low density. We gave theoretical guarantees for the generalization of our procedure and evaluated the algorithm on the task of characterizing overlap in observational studies. These results demonstrated that our rule descriptions often have similar accuracy to black-box estimators and outperform a competitive baseline. In an application to study treatment-group overlap in post-surgical opioid prescription, a qualitative user study found the results interpretable and clinically meaningful. Similar observations were made in an application to evaluation of antibiotic prescription policies. Future research challenges include investigating the scalability of the method with the dimensionality of the input.

\subsubsection*{Acknowledgments} 
We thank Chloe O’Connell and Charles S. Parsons for providing clinical feedback on the opioid misuse experiment, and Sanjat Kanjilal for providing clinical feedback on the antibiotic prescription experiment. We also thank Bhanukiran Vinzamuri for assistance with the opioids data, David Amirault for insightful suggestions and feedback, and members of the Clinical Machine Learning group for feedback on earlier drafts. This work was partially supported by the MIT-IBM Watson AI Lab and the Wallenberg AI, Autonomous Systems and Software Program (WASP) funded by the
Knut and Alice Wallenberg Foundation.

\setcounter{table}{0}
\setcounter{figure}{0}
\setcounter{equation}{0}
\setcounter{footnote}{0}
\def\theequation{S\arabic{equation}}
\def\thetable{S\arabic{table}}
\def\thefigure{S\arabic{figure}}
\renewcommand{\thethmthm}{S\arabic{thmthm}}
\renewcommand{\thethmprop}{S\arabic{thmprop}}
\renewcommand{\thethmlem}{S\arabic{thmlem}}
\renewcommand{\thethmcol}{S\arabic{thmcol}}
\renewcommand{\thethmconj}{S\arabic{thmconj}}
\renewcommand{\thethmasmp}{S\arabic{thmasmp}}
\renewcommand{\thethmcorr}{S\arabic{thmcorr}}

\appendix
\section*{Supplementary Material}

The supplement is structured as follows:
\begin{itemize}
    \item {\bf Guidance on hyperparameter selection}: We take a deeper dive into the impact of hyperparameter selection on support and overlap estimation, including an in-depth empirical evaluation with concrete recommendations on how to set hyperparameters for support estimation given an a-priori belief that higher-order intersections of variables may be excluded from the cohort.
    \item {\bf Application to Policy Evaluation}: We discuss in more depth how the OverRule algorithm can be applied to finding areas of sufficient coverage for policy evaluation tasks.
    \item {\bf Additional experimental results}:  In addition to providing additional detail on the experiments presented in the main paper, we also present several results that were only alluded to in the main paper.  This includes the detailed results for the policy evaluation task (antibiotic prescription), as well as additional rules learned for the opioids prescription task.
    \item {\bf Theoretical results}: We include proofs for our theoretical results, as well as an additional Theorem bounding the generalization error of our two-stage estimator in terms of the error of the base estimators. 
\end{itemize}

In addition, to build further intuition for Boolean rules, we illustrate a Boolean rule in the DNF form in a 2D example in Figure~\ref{fig:notation}.
\begin{figure*}[ht]
    \centering
    \includegraphics[width=.9\textwidth]{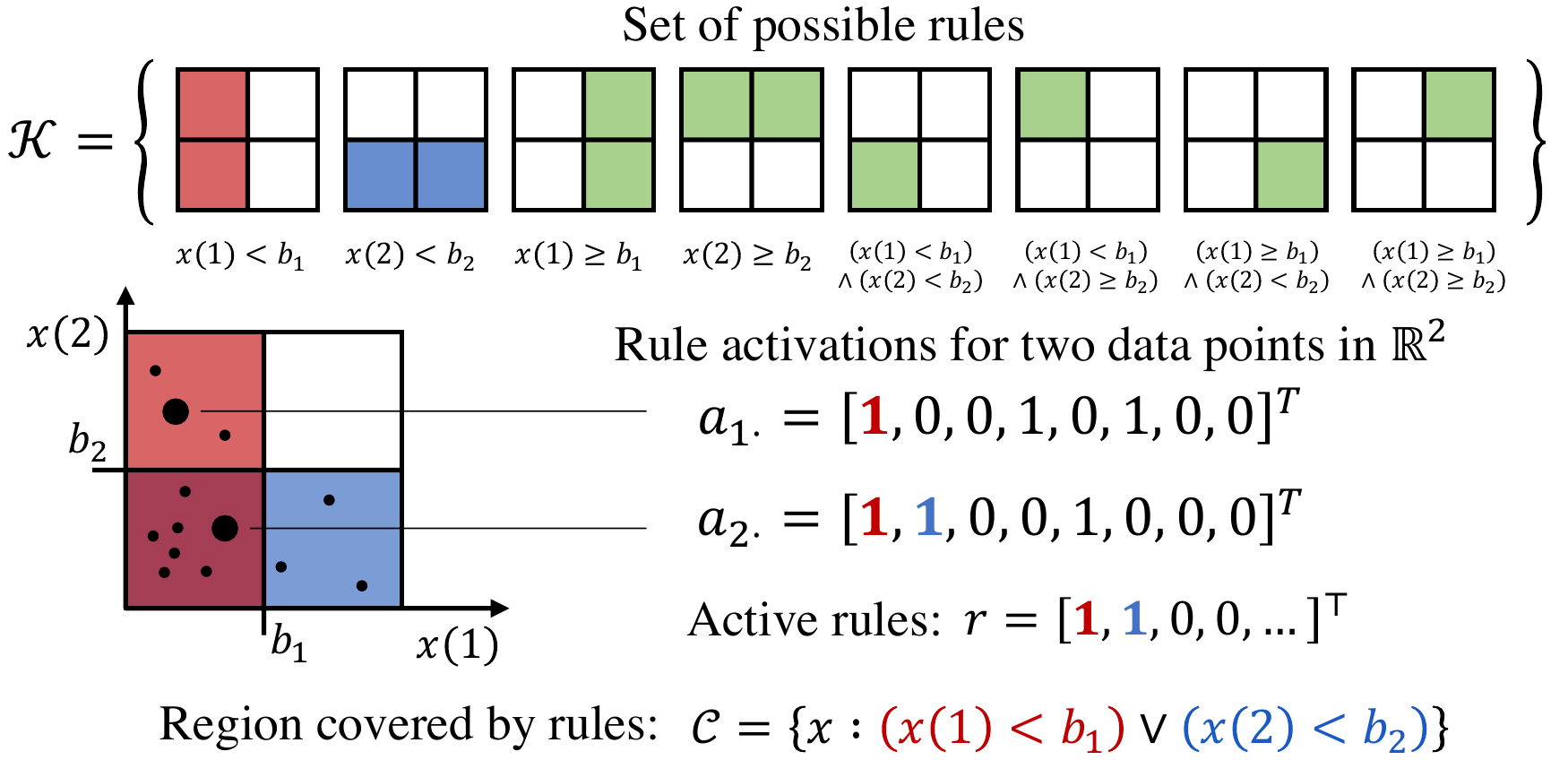}%
    \caption{Boolean rules on disjunctive normal form (DNF). We highlight data points represented by their activations, $a_{1\cdot}, a_{2\cdot}$ of rules from the set $\cK$ of all possible rules. $\cC$ is the region described by the rule set and $r$ indicators for the rules.}
    \label{fig:notation}
\end{figure*}

Code for this paper can be found at \url{https://github.com/clinicalml/overlap-code}

\section{Choosing Hyperparameters}%
\label{sec:hyperparameters}

\subsection{Overview}

Considering OverRule along with the base estimator, there are a few distinct sets of hyperparameters to choose
\begin{itemize}
    \item {\bf Support Rules}: The support rule estimation task requires a specification of DNF versus CNF form, a specification of $\alpha, \lambda_0, \lambda_1$ used in the objective, and the number of samples to draw from the reference measure.
    \item {\bf Base Estimator and Overlap Labels}: In addition to the hyperparameters of the base estimator itself, a threshold $\epsilon$ must be chosen to generate overlap labels
    \item {\bf Overlap Rules}: These rules similarly require a specification of DNF or CNF form, and specification of $\beta, \lambda_0, \lambda_1$.
\end{itemize}

For the base estimator itself, the hyperparameters can be tuned in the usual way using cross-validation using a metric of interest (e.g., AUC).  The choice of $\epsilon$ is studied in the existing literature \citep{crump2009dealing} and ultimately depends on the downstream causal inference task, though $\epsilon=0.1$ is sometimes considered as a rule of thumb.  For the support rules, we typically set the number of reference measure samples to be as large as computationally feasible.

For the overlap and support rules, the remaining hyperparameters can be chosen (1) by using cross-validation to optimize for balanced accuracy (or some other metric, like false positive rates) with respect to the overlap labels or uniform background samples, (2) with some other objective in mind, e.g., setting the $\lambda$ parameters to be large to discourage many rules, even if more rules would increase accuracy, or (3) with the goal in mind of choosing values (or exploring a range of values) most likely to discover ``interesting patterns'' in the cohort.

We expand upon a concrete instance of this latter goal in the remainder of this section, particularly as regards hyperparameter selection for support estimation, where extremely high accuracy is particularly easy to achieve and is thus less informative for the purposes of hyperparameter selection.

\subsection{Choosing Support Hyperparameters to highlight exclusions}%
\label{sub:hyperparameter_support}

{\bf Motivation}: In the context of our motivating applications, the primary purpose of support estimation is to identify regions where we do not have any (or have very few) observations.  For instance, if there are no men in our dataset who also have cardiac arrhythmia\footnote{This would be surprising, as men with arrhythmia are fairly common in the general population}, then this would be a clinically relevant fact that should be highlighted.  Thus, we would like to select hyperparameters which minimize our risk of overlooking these types of exclusions.  

In this section we give some guidance on how to select hyper-parameters for support estimation with this particular goal in mind, based on synthetic and real-data experiments.  To recap, these hyper-parameters include
\begin{enumerate*}[label= (\roman*)]
  \item $\alpha$, the support level, and 
  \item $\lambda_0, \lambda_1$, regularization parameters for learning support rules.
\end{enumerate*}
There are also relevant hyperparameters in the underlying algorithm of \citet{wei2019generalized}, primarily the width of the beam search used during column generation.

\textbf{Summary:} For this purpose, we recommend setting $\alpha \approx 1$, and in particular we consistently observed best results for $\alpha \geq 0.98$. We observe that for $\alpha$ sufficiently close to 1, the results are less sensitive to different values of $\lambda_0, \lambda_1$.  In addition, we recommend setting the width of the beam search in the algorithm of \citet{wei2019generalized} to be on the same order of magnitude as the number of binary features.  

These recommendations have the effect of encouraging the algorithm to consider higher-order interactions between variables that describe regions with little or no support in the data (e.g., \enquote{there are no men with cardiac arrhythmia}), and we verify this through experiments where we selectively remove regions of the data, and verify whether or not the algorithm can recover these regions. 

Concretely, we use both a synthetic and semi-synthetic case where we manually exclude all points which satisfy a simple boolean rule, and look to identify that exclusion automatically.  That is to say, in both cases we take a dataset and \textbf{remove} data points $\bx \in {\{0, 1\}}^d$ which satisfy a rule of the form $x_i = 1 \land x_j = 1$ for two features $x_i, x_j$, and then check if our algorithm incorporates this into the learned rule set. 
\begin{itemize}
  \item \textbf{Synthetic Case:} In this setting, we generate data comprised of 22 independent binary features, such that 10 features are rare (binomial with $p=0.01$), 12 features are common ($p=0.5$), and we remove all data points which satisfy a conjunction of the last two common features.
  \item \textbf{Semi-Synthetic Case (Antibiotic Prescription):} In this setting, we used the medical records dataset described in Section~\ref{sec:uti}, and removed all men with cardiac arrhythmia, which compromised 5\% of the total population.
\end{itemize}

This particular type of exclusion benefits from a CNF formulation (AND of ORs) of the support task.  This is because the exclusion can be described in a parsimonious way (independently of other aspects of support) as a single additional rule.  As discussed in Section~\ref{sec:MVruleSet}, it is straightforward to convert the CNF formulation to a DNF formulation and vice versa.  However, we note that the CNF formulation (for a fixed number of reference samples) can be more computationally intensive than the DNF formulation.

\subsubsection{Synthetic Experiments}%
\label{sub:synthetic_exp}

For the synthetic case, our goal is to build intuition that we can validate in the semi-synthetic setting.  We will first describe our data-generating process in more detail, and then describe the results and conclusions from an exhaustive hyperparameter search.

\textbf{Synthetic Data Generation:} We generate data as follows.  Note that we are only concerned (for the moment) with estimating support, so we do not include any notion of treatment groups.
\begin{itemize}
  \item We sample 10,000 data points $x \in {\{0, 1\}}^{d}$ where $d = 22$, by sampling (for each data point):
    \begin{itemize}
      \item 10 \enquote{rare} binary features $r_1, \ldots, r_{10}$, generated independently with $p = 0.01$
      \item 12 \enquote{common} binary features $c_{1}, \ldots, c_{12}$, generated independently with $p = 0.01$
      \item Thus, each data point is given by $\bx = [r_1, \ldots, r_{10}, c_1, \ldots, c_{12}]$
    \end{itemize}
  \item We remove all data points which satisfy $c_{11} = 1 \land c_{12} = 1$, which is approximately 25\% of all data points.  Our goal is to recover the corresponding \textbf{inclusion rule} as part of the final rule set of $c_{11} = 0 \lor c_{12} = 0$.
\end{itemize}

\textbf{Hyperparameter Search \& Outcomes:} With this setup, we estimate support using the algorithm given in the main paper, using every combination of the following hyperparameters
\begin{itemize}
  \item $\alpha \in \{0.95, 0.96, 0.97, 0.98, 0.99\}$, the constraint on covering our data.
  \item $\lambda_0 \in \{0, 10^{-6}, 10^{-4}, 10^{-2}\}$, and $\lambda_1 \in \{10^{-6}, 10^{-4}, 10^{-2}\}$, the regularization terms.
  \item $B \in \{10, 15, 20, 25, 30\}$, the width of the beam search used in \citet{wei2019generalized}
\end{itemize}
For each combination of hyperparameters, we run the experiment three times, generating a new set of fake data with each run.  The same three random seeds are used across all hyperparameter combinations.  We recorded a number of relevant outcomes, including 
\begin{itemize}
  \item Does the final rule set include the inclusion rule $c_{11} = 0 \lor c_{12} = 0$?
  \item How many rules are considered in the final rule set, and how long (on average) are these rules?
  \item How many \enquote{perfect} rules are found, which exclude none of the generated data points?
\end{itemize}

\textbf{Observations:} The full results of the hyperparameter search are given in Table~\ref{tab:synth_full_results}, but we summarize our observations and recommendations here.
\begin{itemize}
  \item \textit{Recovery by LP $ \rightarrow $ recovery by rounded rules:} Across all hyperparameter settings, if the desired inclusion rule was found during column generation (and thus considered by the LP), it was uniformly included in the final rounded rule.\footnote{This is not a general rule; While it holds in the synthetic case, it will not hold exactly in the semi-synthetic case with real data, as demonstrated in the next section.} Thus, our goal is to ensure that the desired inclusion is picked up by the LP during column generation.
  \item \textit{Beam Search Width should be higher than \# features:}  Recall that the LP relaxation with column generation starts by considering only rules with a single literal, and beam search is used to select additional rules for consideration, with a maximum width of $B$.  If $B$ is lower than the number of rare features, then the first $B$ rules considered will tend to be rules on single rare features.  This prevents the beam search from exploring interactions between more prevalent features.  Setting the beam-search width to a sufficiently high number ($\approx$ total features) forces the column generation to explore all rules with two literals, helpful for recovery of our desired inclusion rule.  This is demonstrated in Table~\ref{tab:set_b}.
  \item \textit{Higher values of $\alpha$ produce more stable results across $\lambda$}. Higher values of $\alpha$ tends to render the results less sensitive to choice of regularization $\lambda$, and tends to produce more reliable results in terms of recovery of our desired rule.  As demonstrated in Tables~(\ref{tab:set_a1}-\ref{tab:set_a3}), lower values of $\alpha$ are more sensitive to $\lambda_1$ in terms of both recovering the desired exclusion, as well as the number of rules found.  At higher values of $\alpha$, there is more consistent recovery of \enquote{perfect} rules, which exclude none of the sample points (and hence do not contribute to the constraint).
\end{itemize}
\begin{table}[h]
  \centering
  \caption{Beam Search Width and proportion of runs (across all other hyperparameter settings of $\alpha, \lambda_0, \lambda_1$) in which the synthetic region was correctly identified by the final rule set (\enquote{Rounded}).  Once the beam search width is sufficiently high (larger than the number of rare features), further increasing it does not appear to help.}%
  \label{tab:set_b}%
  \begin{tabular}{lccccc}
    Beam Width & 10 & 15 & 20 & 25 & 30 \\
    \midrule
    Recovered &0.07 & 0.87 & 0.87 & 0.87 & 0.87\\
  \end{tabular}
\end{table}
\begin{table}[h]
\centering
\begin{subtable}[h]{1.0\linewidth}
    \centering
    \caption{Recovery of inclusion rule}
    \label{tab:set_a1}%
    \begin{tabular}{l|ccc}
      & $\lambda_1=$1e-6 & $\lambda_1=$1e-4 & $\lambda_1=$1e-2 \\
    \midrule
$\alpha=0.95$ & 1.0 & 1.0 & 0 \\
$\alpha=0.96$ & 1.0 & 1.0 & 0 \\
$\alpha=0.97$ & 1.0 & 1.0 & 1.0 \\
$\alpha=0.98$ & 1.0 & 1.0 & 1.0 \\
$\alpha=0.99$ & 1.0 & 1.0 & 1.0 \\
    \end{tabular}
  \end{subtable}
\begin{subtable}[h]{1.0\linewidth}
    \centering
    \caption{Avg. \# of rules}
    \label{tab:set_a2}%
    \begin{tabular}{l|ccc}
    & $\lambda_1=$1e-6 & $\lambda_1=$1e-4 & $\lambda_1=$1e-2 \\
    \midrule
$\alpha=0.95$ & 23.67 & 15.75 & 5.0\\
$\alpha=0.96$& 35.58 & 33.33 & 4.0\\
$\alpha=0.97$& 39.83 & 31.92 & 4.0\\
$\alpha=0.98$& 44.17 & 47.17 & 23.83\\
$\alpha=0.99$& 31.42 & 31.25 & 27.67\\
    \end{tabular}
  \end{subtable}
\begin{subtable}[h]{1.0\linewidth}
    \centering
    \caption{Avg. \# of Perfect Rules}
    \label{tab:set_a3}%
    \begin{tabular}{l|ccc}
    & $\lambda_1=$1e-6 & $\lambda_1=$1e-4 & $\lambda_1=$1e-2 \\
    \midrule
 $\alpha=0.95$&  12.5 & 9.25 & 0.0\\
$\alpha=0.96$& 20.75 & 18.67 & 0.0\\
$\alpha=0.97$& 24.67 & 24.92 & 1.0\\
$\alpha=0.98$& 30.17 & 28.33 & 14.0\\
$\alpha=0.99$& 23.0 & 24.08 & 20.42\\
  \end{tabular}
  \end{subtable}
\caption{Value of $\alpha$ parameter and $\lambda_1$ parameters, for a fixed beam search width ($B = 15$), along with (a) the proportion of runs (across all other hyperparameter settings) in which the synthetic region was correctly identified by the final rule set, (b) the number of rules in the final solution, and (c) the number of perfect rules, defined as those which exclude none of the samples but which exclude some number of reference points.  Note that these results marginalize over $\lambda_0$, and (b-c) are averaged across all runs.}%
\label{fig:set_a}
\end{table}

\textbf{Discussion / Intuition:} Due to the greedy nature of the column generation procedure, a common failure mode is to only consider rules that include rare features, because those singleton rules exclude a significant amount of reference measure, and excluding rare features does not violate the $\alpha$-constraint.  For instance, a support rule of the form \enquote{not one of these K rare features} will (roughly speaking) exclude $K$ percent of the samples (if each rare feature has 1\% prevalence), while producing a volume of $2^{-K}$.  Thus, an overly greedy approach can obtain an objective value that is exponentially small in the number of rare features excluded, as long as it does not hit the $\alpha$ constraint.  This has the effect of \enquote{crowding out} more complex rules. 

Take a concrete example in Table~\ref{tab:set_a2} to build intuition for how the greedy set covering algorithm can fail in this case: Suppose $\lambda_0 = 0$, $\lambda_1 = 0.01$, and $\alpha = 0.95$, and suppose that our current solution excludes $5$ rare features before hitting the $\alpha$ constraint, then the reference volume is given by $2^{-5} \approx 0.03$.  In this case, adding the desired inclusion rule will reduce the volume by $1/4$ (a reduction in absolute terms which is $< 0.01$) while increasing the regularization penalty by $0.02$.  Thus, it will not be included.

To avoid this failure mode, we can increase $\alpha$, which has the effect of reducing the number of singleton rules $K$ that can be added before violating the constraint.  

\subsubsection{Semi-Synthetic Experiments}%
\label{sub:semi_synthetic_exp}

In the semi-synthetic experiment, our goal is to verify that the intuition from the synthetic setting carries over to a real dataset.

\textbf{Semi-Synthetic Data Generation:} We generated the dataset for this experiment as follows.  
\begin{enumerate}
  \item \textit{Subsampling:} We randomly sample 5000 patients from the full cohort of 65k patients, due to computational constraints.  In this subset, there were 185 binary features, and 5 continuous features.
  \item \textit{Synthetic Exclusion:} We remove all male patients with cardiac arrhythmia, which was around 5\% of the total population.
  \item \textit{Pre-Processing:} Given the prevalence of very rare binary features, we removed all binary features with a prevalence of less than 1\%, as well as all samples that had any of these features, resulting in the removal of 118 binary features and ~850 samples.  This was done both for computational reasons (to reduce the number of features) as well as to condition the problem such that it is more realistic for the support estimation to recover higher-order interactions.
  \item \textit{Final Dataframe}: The final dataset had 66 binary features and 5 continuous features, with the latter being converted into binary features via the use of deciles.
\end{enumerate}

\textbf{Hyperparameter Search:} We then followed a similar approach to the synthetic experiment, using every combination of the following hyperparameters.  For each combination, we ran the algorithm three times, inducing randomness over the data by taking a random 80\% of the data with each iteration.
\begin{itemize}
  \item $\alpha \in \{0.95, 0.96, 0.97, 0.98, 0.99\}$
  \item $\lambda_0 \in \{10^{-6}, 10^{-4}, 10^{-2}\}, \lambda_1 \in \{10^{-6}, 10^{-4}, 10^{-2}\}$
\end{itemize}

In this case, we fixed the width of the beam search at $B=1000$ (which encourages a more thorough search during column generation, as discussed above), and also found that we needed to adjust the value of $K$, another hyperparameter from the column generation algorithm, to be roughly on the same order as $B$.  The parameter $K$ controls how many rules get added to the LP at each iteration.  We also fixed the maximum number of iterations at 10.  We recorded all the same outcomes as were used in the synthetic case.

\textbf{Observations}: We observed that a number of patterns from the synthetic case carried over to the semi-synthetic case.
\begin{itemize}
  \item \textit{Inclusion in LP (mostly) implies inclusion in final rules:} When the desired inclusion rule appears among the rules considered during column generation, it mostly appears in the final rounded rules, in ~80\% of runs.  We conjecture that this is due to a large number of \enquote{perfect} rules existing in this dataset, which are also two-variable interactions, though many of these appear to be noise (see example inclusion rules below).
  \item \textit{Increasing $\alpha$ leads to more consistent recovery in the LP} of the desired inclusion rule.  However, as discussed, this does not always translate into the desired inclusion rule showing up in the final rounded rule set.  See Table~\ref{tab:set_a_abx}
  \item \textit{Higher values of $\alpha$ are less sensitive to choice of $\lambda$}: In Tables~(\ref{tab:set_a1_abx}-\ref{tab:set_a2_abx}) we demonstrate that, similar to the synthetic case, the number of rules and the number of \enquote{perfect} rules is highly sensitive to $\lambda_1$ when $\alpha$ is lower, but for $\alpha \geq 0.98$ it yields consistent results across different values of $\lambda$.
\end{itemize}

\textbf{Example \enquote{Perfect} Rules}: These rules exclude none of the samples in our data, while excluding reference points.  While occasional rules appear to be based on reasonable exclusions (such as a lack of pregnant veterans, given that 80\% of veterans are male in our data), most appear to be combinations of rare features (such as rare medications) that simply do not appear together in our data.  These are three representative rules from one run (where $\alpha=0.99, \lambda_0=\lambda_1=1e-6$, resulting in 23 rules, of which 17 were \enquote{perfect}):
\begin{itemize}
  \item not (Pregnant and Veteran)  
  \item not (Complicated Hypertension and Previous Medication of Cephalexin)
  \item not (Previous Medication of Doxycycline and Norfloxacin)
\end{itemize}

\begin{table}[h]
  \centering
  \caption{Values of $\alpha$ and the proportion of runs in which the desired inclusion rule was included in the LP during column generation, as well as included in the final rule set.  Results are averaged over values of $\lambda_0, \lambda_1$, with the exception of $\lambda_0=\lambda_1=1e-2$, because this did not run for $\alpha=0.97$}%
  \label{tab:set_a_abx}
  \begin{tabular}{l|cc}
    & LP & Final Rule Set\\
    \midrule
 $\alpha=0.95$ &  0.50 & 0.50 \\
 $\alpha=0.96$ &  0.75 & 0.71 \\
 $\alpha=0.97$ &  1.00 & 0.88 \\
 $\alpha=0.98$ &  1.00 & 0.62 \\
 $\alpha=0.99$ &  1.00 & 0.62 \\
  \end{tabular}
\end{table}

\begin{table}
\centering
\begin{subtable}[h]{1.0\linewidth}
    \centering
    \caption{Recovery of inclusion rule}
    \label{tab:set_a1_abx}%
    \begin{tabular}{l|ccc}
      & $\lambda_1=$1e-6 & $\lambda_1=$1e-4 & $\lambda_1=$1e-2 \\
    \midrule
$\alpha=0.95$ & 0.7 & 1.0 & 0.0 \\ 
$\alpha=0.96$ & 1.0 & 0.8 & 0.0 \\
$\alpha=0.97$ & 0.8 & 0.7 & 1.0 \\
$\alpha=0.98$ & 0.7 & 0.5 & 0.7 \\ 
$\alpha=0.99$ & 0.8 & 0.7 & 0.3 \\ 
    \end{tabular}
  \end{subtable}
\begin{subtable}[h]{1.0\linewidth}
    \centering
    \caption{Avg. \# of rules}
    \label{tab:set_a2_abx}%
    \begin{tabular}{l|ccc}
    & $\lambda_1=$1e-6 & $\lambda_1=$1e-4 & $\lambda_1=$1e-2 \\
    \midrule
$\alpha=0.95$ &210.2 & 115.8 & 6.0 \\
$\alpha=0.96$& 334.3 & 148.0 & 5.0 \\
$\alpha=0.97$& 25.2 & 75.2 & 49.8 \\ 
$\alpha=0.98$& 25.0 & 24.7 & 24.3 \\ 
$\alpha=0.99$& 23.3 & 23.3 & 23.7 \\ 
    \end{tabular}
  \end{subtable}
\begin{subtable}[h]{1.0\linewidth}
    \centering
    \label{tab:set_a3_abx}%
    \caption{Avg. \# of Perfect Rules}
    \begin{tabular}{l|ccc}
    & $\lambda_1=$1e-6 & $\lambda_1=$1e-4 & $\lambda_1=$1e-2 \\
    \midrule
 $\alpha=0.95$&200.2 & 105.8 & 0.0 \\ 
$\alpha=0.96$& 326.0 & 140.0 & 0.0 \\
$\alpha=0.97$& 19.5 & 69.0 & 42.2 \\
$\alpha=0.98$& 21.3 & 21.0 & 20.7 \\
$\alpha=0.99$& 19.0 & 18.7 & 19.7 \\
  \end{tabular}
  \end{subtable}
\caption{Value of $\alpha$ parameter and $\lambda_1$ parameters, along with (a) the proportion of runs (across all other hyperparameter settings) in which the synthetic region was correctly identified by the final rule set, (b) the number of rules in the final solution, and (c) the number of perfect rules, defined as those which exclude none of the samples but which exclude some number of reference points.  Note that these results marginalize over $\lambda_0 \in \{1e-6, 1e-4\}$ because $\lambda_0=\lambda_1=1e-2$ did not run for $\alpha=0.97$, and (b-c) are averaged across all runs.}%
\label{fig:set_a_abx}
\end{table}

\begin{table}[ht]
  \tiny
  \centering
  \caption{\textbf{Rec}: Proportion of runs where synthetic exclusion was recovered.  \textbf{\# R}: Number of rules in final output.  \textbf{\# PR}: Number of \enquote{perfect} rules which exclude zero data points. \textbf{Length}: Average length of rules.  Each entry is the average of three independent runs with different random seeds, and run with $B=15$}%
  \label{tab:synth_full_results}
  \begin{tabular}{ccc|cccc}
    $\alpha$ & $\lambda_0$ & $\lambda_1$ & Rec & \# R & \# PR & Length \\
    \toprule
    0.95 & 0     & 1e-06 & 1.00 & 31.00 & 17.00 & 2.36\\
         &      & 1e-04 & 1.00 & 19.33 & 12.00 & 2.25\\
         &      & 1e-02 & 0.00 & 5.00 & 0.00 & 1.00\\
         & 1e-06 & 1e-06 & 1.00 & 30.67 & 17.00 & 2.37\\
         &       & 1e-04 & 1.00 & 19.33 & 12.00 & 2.25\\
         &       & 1e-02 & 0.00 & 5.00 & 0.00 & 1.00\\
         & 1e-04 & 1e-06 & 1.00 & 27.00 & 15.00 & 2.36\\
         &       & 1e-04 & 1.00 & 18.33 & 12.00 & 2.23\\
         &       & 1e-02 & 0.00 & 5.00 & 0.00 & 1.00\\
         & 1e-02 & 1e-06 & 1.00 & 6.00 & 1.00 & 1.17\\
         &       & 1e-04 & 1.00 & 6.00 & 1.00 & 1.17\\
         &       & 1e-02 & 0.00 & 5.00 & 0.00 & 1.00\\
  \midrule
  0.96 & 0     & 1e-06 & 1.00 & 46.33 & 28.33 & 2.69\\
       &       & 1e-04 & 1.00 & 43.67 & 25.00 & 2.43\\
       &       & 1e-02 & 0.00 & 4.00 & 0.00 & 1.00\\
       & 1e-06 & 1e-06 & 1.00 & 45.33 & 27.67 & 2.70\\
       &       & 1e-04 & 1.00 & 43.67 & 25.67 & 2.41\\
       &       & 1e-02 & 0.00 & 4.00 & 0.00 & 1.00\\
       & 1e-04 & 1e-06 & 1.00 & 45.67 & 26.00 & 2.67\\
       &       & 1e-04 & 1.00 & 41.00 & 23.00 & 2.41\\
       &       & 1e-02 & 0.00 & 4.00 & 0.00 & 1.00\\
       & 1e-02 & 1e-06 & 1.00 & 5.00 & 1.00 & 1.20\\
       &       & 1e-04 & 1.00 & 5.00 & 1.00 & 1.20\\
       &       & 1e-02 & 0.00 & 4.00 & 0.00 & 1.00\\
  \midrule
  0.97 & 0     & 1e-06 & 1.00 & 49.67 & 31.00 & 2.74\\
       &       & 1e-04 & 1.00 & 38.00 & 30.00 & 2.51\\
       &       & 1e-02 & 1.00 & 4.00 & 1.00 & 1.25\\
       & 1e-06 & 1e-06 & 1.00 & 49.67 & 31.00 & 2.73\\
       &       & 1e-04 & 1.00 & 38.00 & 30.00 & 2.51\\
       &       & 1e-02 & 1.00 & 4.00 & 1.00 & 1.25\\
       & 1e-04 & 1e-06 & 1.00 & 48.33 & 29.00 & 2.71\\
       &       & 1e-04 & 1.00 & 37.33 & 29.33 & 2.55\\
       &       & 1e-02 & 1.00 & 4.00 & 1.00 & 1.25\\
       & 1e-02 & 1e-06 & 1.00 & 11.67 & 7.67 & 2.27\\
       &       & 1e-04 & 1.00 & 14.33 & 10.33 & 2.43\\
       &       & 1e-02 & 1.00 & 4.00 & 1.00 & 1.25\\
  \midrule
  0.98 & 0     & 1e-06 & 1.00 & 47.00 & 33.67 & 2.82\\
       &       & 1e-04 & 1.00 & 50.67 & 30.33 & 2.74\\
       &       & 1e-02 & 1.00 & 27.33 & 16.00 & 1.97\\
       & 1e-06 & 1e-06 & 1.00 & 46.67 & 33.33 & 2.81\\
       &       & 1e-04 & 1.00 & 50.67 & 30.33 & 2.74\\
       &       & 1e-02 & 1.00 & 27.00 & 15.67 & 1.97\\
       & 1e-04 & 1e-06 & 1.00 & 46.00 & 31.33 & 2.74\\
       &       & 1e-04 & 1.00 & 50.67 & 31.00 & 2.74\\
       &       & 1e-02 & 1.00 & 28.00 & 16.33 & 1.99\\
       & 1e-02 & 1e-06 & 1.00 & 37.00 & 22.33 & 2.29\\
       &       & 1e-04 & 1.00 & 36.67 & 21.67 & 2.26\\
       &       & 1e-02 & 1.00 & 13.00 & 8.00 & 1.95\\
  \midrule
  0.99 & 0     & 1e-06 & 1.00 & 33.00 & 23.33 & 2.33\\
       &       & 1e-04 & 1.00 & 33.00 & 27.33 & 2.33\\
       &       & 1e-02 & 1.00 & 28.33 & 21.00 & 1.96\\
       & 1e-06 & 1e-06 & 1.00 & 33.00 & 21.67 & 2.36\\
       &       & 1e-04 & 1.00 & 34.33 & 24.67 & 2.30\\
       &       & 1e-02 & 1.00 & 28.33 & 21.00 & 1.96\\
       & 1e-04 & 1e-06 & 1.00 & 31.33 & 25.67 & 2.34\\
       &       & 1e-04 & 1.00 & 27.00 & 20.67 & 2.17\\
       &       & 1e-02 & 1.00 & 28.33 & 21.00 & 1.96\\
       & 1e-02 & 1e-06 & 1.00 & 28.33 & 21.33 & 2.08\\
       &       & 1e-04 & 1.00 & 30.67 & 23.67 & 2.11\\
       &       & 1e-02 & 1.00 & 25.67 & 18.67 & 1.96\\
  \end{tabular}
\end{table}

\clearpage
\section{Application of OverRule to Policy Evaluation}
\label{sec:policyevalappendix}
\label{sec:policytheory}

In this section we give the detailed algorithm for applying OverRule to policy evaluation, as described in the main paper.  In this context, we wish to evaluate not a specific treatment decision (e.g., the average treatment effect of giving a drug vs. withholding it), but rather a conditional \textit{policy} representing a personalized treatment regime, which we will refer to as the \textit{target} policy.  This problem falls under the setting of off-policy policy evaluation when this target policy $\pi$ differs from the policy which generated the data, which we observe in the observational data as $p(T = t \mid x)$.

\textbf{Rationale for $\cB^\epsilon(\pi)$}: In the main paper, we drew a connection between the set $\cB^\epsilon$ and the following set, a function of the target policy $\pi$, 
$
\cB^\epsilon(\pi) := \{x \in \cX ; \forall t : \pi(t\mid x) > 0 :  p(T=t \mid x) > \epsilon\}
$. In this section, we recall the theoretical rationale for why we are restricted to this set, if we wish to evaluate the policy $\pi$ given samples generated according to $p(T = t \mid x)$.

Following similar notation to \cite{pmlr-v84-kallus18a}, we will let $X \in \cX$ correspond to covariates, $Y \in \cY$ to an outcome of interest, $T \in \cT$ to a treatment decision.  We write $\pi(t | x_i)$ as the probability of each treatment under the policy, which may be stochastic.  We write $Y(t)$ to represent the potential outcome under treatment $t$.  In this setting, we wish to evaluate the expected value of $Y$ under the target policy, which we denote as $\E[Y(\pi)]$. 
\begin{thmprop}[Informal]
    The expectation $\E[Y(\pi)]$ is only defined w.r.t. the observed distribution $p(X, T, Y)$ for the subset $B \in \cX$ such that $\forall x \in B, \ \pi(T = t \mid X = x) > 0 \implies p(T = t \mid X = x) > 0$
    \label{thmprop:coverage}
\end{thmprop}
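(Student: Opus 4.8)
The plan is to read the statement as an \emph{identifiability} claim and argue in two directions: on $B$ the policy value $\E[Y(\pi)]$ is a functional of the observed law $p(X,T,Y)$, whereas any probability mass placed by $\pi$ on treatments $t$ with $p(T=t\mid x)=0$ leaves $\E[Y(\pi)]$ undetermined. First I would recall the structural assumptions standard in this literature — consistency $Y=Y(T)$ and unconfoundedness $Y(t)\indep T\mid X$ for every $t\in\cT$ — under which
\[
\E[Y(\pi)] \;=\; \E_{X}\Bigl[\textstyle\sum_{t\in\cT}\pi(t\mid X)\,\E[Y(t)\mid X]\Bigr] \;=\; \E_{X}\Bigl[\textstyle\sum_{t\in\cT}\pi(t\mid X)\,\E[Y\mid X,T=t]\Bigr],
\]
the second equality using unconfoundedness and consistency. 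The point is that $\E[Y\mid X=x,T=t]$ is a function of $p(X,T,Y)$ only when $p(T=t\mid x)>0$; at a point where $p(T=t\mid x)=0$ it conditions on a $p$-null event and is not determined by the observed distribution.

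For the \enquote{if} direction I would note that when $x\in B$, every surviving term $\pi(t\mid x)\,\E[Y\mid X=x,T=t]$ (those with $\pi(t\mid x)>0$) is well-defined, so the integrand in the display is a measurable function of $x$ fixed by $p(X,T,Y)$; consequently $\E[Y(\pi)]$ is identified whenever $P(X\in B)=1$, and in general the only source of ambiguity is the contribution of the region $B^{c}$.

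For the \enquote{only if} direction — which I expect to be the main obstacle — I would give a non-identification argument. Suppose there is a set $A\subseteq B^{c}$ and a treatment $t$ with $P(X\in A)>0$, $\pi(t\mid x)>0$ and $p(T=t\mid x)=0$ for $x\in A$. I would construct two joint distributions over $(X,T,Y,\{Y(s)\}_{s\in\cT})$ that (i) induce the same observed marginal $p(X,T,Y)$, (ii) both satisfy consistency and unconfoundedness, yet (iii) differ in the conditional law of $Y(t)\mid X=x$ on $A$ — for instance by a constant shift $\Delta$ — which is unconstrained by the data because $\{T=t,\,X\in A\}$ carries zero probability. These two models give values of $\E[Y(\pi)]$ differing by $\int_{A}\pi(t\mid x)\,\Delta\,dP_X(x)\neq 0$, so $\E[Y(\pi)]$ is not a functional of $p(X,T,Y)$, proving the claim.

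The crux, and the step needing the most care, is making the phrase \enquote{only defined w.r.t.\ the observed distribution} precise: the honest content is non-identifiability, so one must fix the class of admissible data-generating processes (those agreeing with $p(X,T,Y)$ and with the structural assumptions) and exhibit two members that disagree. Two minor technical points I would handle explicitly: the statement is naturally an almost-sure one, so \enquote{$x\in B$} should be read up to $P_X$-null sets; and for continuous $\cT$ the sums become integrals and \enquote{$p(T=t\mid x)>0$} must be phrased via densities relative to a dominating measure — I would state and prove the result for finite $\cT$, as in the paper's applications, to keep these issues aside.
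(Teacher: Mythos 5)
Your proof is correct but takes a genuinely different route from the paper's. The paper's argument is the importance-weighting one: under ignorability it multiplies and divides by $p(t\mid x)$ to rewrite $\E[Y(\pi)]$ as $\int y\, p(x,t,y)\,\frac{\pi(t\mid x)}{p(t\mid x)}\,dx\,dt\,dy$, i.e.\ an expectation over the observed law weighted by $\pi(T\mid X)/p(T\mid X)$, and then reads the coverage condition off as the requirement that this weight be well-defined wherever $\pi$ puts mass. You instead go through the outcome-regression (g-formula) decomposition $\E_X\bigl[\sum_t \pi(t\mid X)\,\E[Y\mid X,T=t]\bigr]$ and, crucially, you supply an explicit non-identification construction for the \enquote{only if} direction: two data-generating processes agreeing on $p(X,T,Y)$ and on the structural assumptions but whose values of $\E[Y(\pi)]$ differ by $\int_A \pi(t\mid x)\,\Delta\,dP_X(x)$. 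This is a genuine strengthening --- the paper's proof only shows that one particular identification formula (the IPW one) breaks down off $B$, which does not by itself establish that the quantity is undetermined there; your two-model argument is what the word \enquote{only} in the statement actually calls for. What the paper's route buys in exchange is that it leads directly to the importance-sampling estimator in the subsequent display and to the variance rationale for the stricter $\epsilon$-threshold defining $\cB^\epsilon(\pi)$; yours buys rigor on the negative direction and makes the almost-sure and finite-$\cT$ caveats explicit. Both are acceptable proofs of the (deliberately informal) proposition.
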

\begin{proof}
Under the assumption that ignorability \citep{pearl2009causality} holds, we can write out our desired quantity as follows in terms of observed distribution $p(X, T, Y)$. For brevity, let $p(t\mid x) = p(T=t\mid X=x), p(x) = p(X=x)$, et cetera.
\begin{align}
    &\E[Y(\pi)] \\
    =&\int_{\cX, \cT, \cY} y \cdot p(x) \pi(t \mid x) 
     \cdot p(Y(t) = y \mid x, t) dx dt dy \nonumber \\
      =& \int_{\cX, \cT, \cY} y \cdot p(x) \frac{\pi(t \mid x)}{p(t \mid x)} \nonumber \\
      & \cdot p(Y(t) = y \mid x, t) p(t \mid x) dx dt dy \label{eqn:multiply_by_one} \\
      =& \int_{\cX, \cT, \cY} y \cdot p(x) p(t \mid x) \nonumber \\
      & \cdot p(Y = y \mid x, t) \frac{\pi(t\mid x)}{p(t \mid x)} dx dt dy \label{eqn:ignorability} \\
      =& \int_{\cX, \cT, \cY} y \cdot p(x, t, y) 
       \cdot \frac{\pi(t \mid x)}{p(t \mid x)} dx dt dy \label{eqn:collect} 
\end{align}

Where in Equation~(\ref{eqn:multiply_by_one}) we multiply by one, in Equation~(\ref{eqn:ignorability}) we use the assumption of ignorability to write $p(Y(t) = y \mid X = x, T = t) = p(Y = y \mid X = x, T = t)$ and rearrange terms, and in Equation~(\ref{eqn:collect}) we collect the terms which represent the observed distribution.  For our purposes, it is sufficient to look at the integral in Equation~(\ref{eqn:collect}) to see that it requires the condition that for all $(x, t) \in \cX \times \cT$, the relationship $\pi(T = t \mid X = x) > 0 \implies p(T = t \mid X = x) > 0$ must hold.
\end{proof}

The condition given in Proposition~\ref{thmprop:coverage} is sometimes referred to as the condition of \textit{coverage} \citep[see][Section 5.5]{Sutton2017} in off-policy evaluation.  Rewriting Equation~(\ref{eqn:collect}) as an expectation over the observed distribution, we can see that this leads naturally to the importance sampling \citep{Kahn1955} estimator 
\begin{equation}
\E\left[Y \frac{\pi(T = t \mid X = x)}{p(T = t \mid X = x)}\right] \approx \frac{1}{n} \sum_{i = 1}^{n} y_i \frac{\pi(t_i \mid x_i)}{p(t_i \mid x_i)} \label{eqn:ISexpectation},
\end{equation}
which approximates our desired quantity.  If $\epsilon > p(t | x) > 0$ for some small value of $\epsilon$, then the variance of the importance sampling estimator increases dramatically. This motivates our notion of ``strict'' coverage, that for each value of $x \in \cB^\epsilon(\pi)$, we require that for all actions $t$ such that $\pi(t | x) > 0 $, the condition $p(t | x) > \epsilon$ must hold.

Note that this differs conceptually from the binary treatment case in an important respect:  Since we are not seeking to contrast all treatments, we do not require that $\mu(t | x) > \epsilon, \forall t \in \cT$, but rather just for those treatments which have positive probability of being taken under the target policy.

\paragraph{Algorithmic Details} As described in the main paper, applying OverRule to the policy evaluation setting only requires a single change to the procedure, which is that the set $\hB^\epsilon(\pi)$ is used in place of the set $\hB^\epsilon$ in Equation~\eqref{eqn:binaryclass2} in Section~\ref{sec:overlaprules}.  Nonetheless, we provide an explicit self-contained sketch of the procedure here to avoid any confusion:
\begin{enumerate}
    \item Given a dataset, find an $\alpha$-MV set $\cS^\alpha$ using the approach given in the main paper.
    \item Using this set, learn the conditional probabilities of each possible treatment $t \in \cT$, resulting in estimated propensities $\hat{p}(T = t \mid X = x)$
    \item For each data point in the support set $\cS^\alpha$, assign the label $$\hat{b}_i(\pi) = \prod_{t \in \pi(x_i)} \mathds{1}[\hat{p}(T=t \mid X=x_i) \geq \epsilon],$$ where $\pi(x_i) \coloneqq \{t: \pi(t | x_i) > 0\}$.  The set $\hB^\epsilon(\pi)$ is the collection of data points such that $\hat{b}_i(\pi) = 1$.  Note that we know the target policy $\pi$ that we are evaluating, so we can evaluate $\pi(t | x_i)$ for each data point.
    \item Solve the following Neyman-Pearson-like classification problem, using the techniques discussed in the main paper.  Note that this is identical to solving Equation~\eqref{eqn:binaryclass2} in Section~\ref{sec:overlaprules}, with the substitution of $\hB^\epsilon(\pi)$ for $\hB^\epsilon$:
\begin{align*}\label{eqn:binaryclasspolicyappendix}
    \hcB(\pi) := \argmin_C \;\; \frac{1}{\lvert\hcS \setminus \hB\rvert} \sum_{i\in\hcS \setminus \hB^\epsilon(\pi)} \mathds{1}[x_i \in \cC]  + R(\cC)
    \\
    \text{s.t.} \;\; \sum_{i\in\hcS\cap\hB^\epsilon(\pi)} \mathds{1}[x_i \in \cC] \geq \beta \lvert\hcS\cap\hB^\epsilon(\pi)\rvert~.
\end{align*}
\end{enumerate}

\section{Additional Experimental Results}

As a general note across all experiments:  When estimating support in OverRule, we use $m_R = c \cdot m\cdot d$ uniform reference samples where $c > 0$ is some constant, $m$ is the number of data samples and $d$ their dimension. Continuous features were binarized by deciles unless otherwise specified.
Finally, for propensity-based base estimators, we use the standard threshold $\epsilon=0.1$~\citep{crump2009dealing} throughout.

\subsection{Iris}
For the results given in the paper, we fit OverRule using a $k$-NN base estimator ($k=8$) and DNF Boolean rules for both support and overlap rules, with $\alpha=0.9$ and regularization $\lambda_0=2 \cdot 10^{-2}, \lambda_1=0$ for support rules, a cutoff of $\epsilon=0.1$, and $\beta = 0.9, \lambda_0=10^{-2}, \lambda_1=0$ for overlap rules.

\subsection{Jobs}
For the results given in the main paper, we use the following hyperparameters:
\begin{enumerate}
    \item {\bf Support Rules}: CNF formulation, along with hyperparamters $\alpha = 0.98, \lambda_0 = 10^{-2}, \lambda_1 = 10^{-3}$.  
    \item {\bf Base Estimators}: For CBB we used $\alpha=0.1$, for the logistic regression propensity estimator we used $C = 1$ in \verb|LogisticRegression| in scikit-learn, and other hyperparameters were chosen based on cross-validation:  For $k$-NN, we selected $k\in \{2, 4, \ldots, 20\}$ based on held-out accuracy in predicting group membership and used $1/k$ as threshold. For OSVM, we use a Gaussian RBF-kernel with bandwidth $\gamma \in [10^{-2}, 10^{2}]$, selected based on the held-out likelihood of kernel density estimation.
    \item {\bf Overlap Rules}: We use a DNF formulation with $\beta = 0.9$ and select $\lambda_0 \in [10^{-4}, 10^{-1}]$ and $\lambda_1 \in [10^{-4}, 10^{-2}]$.  Within each class of base estimators, we choose these parameters based on average \textit{training} performance over 5-fold CV, choosing the setting in each class that achieves a balanced accuracy (with respect to the base-estimator overlap labels) within 1\% of the best performing model in the class, while minimizing the number of rules.  
\end{enumerate}

Note that the reported results are using the held-out portions of each 5-fold CV run, and using the ground-truth overlap labels, which are at no point used during the hyperparameter tuning process. This reflects a real-world scenario where ground-truth is unknown and only the base-estimator derived labels are given.  The reported rules in the figure were selected from one of the five cross-validation runs for the same hyperparameter setting chosen using the above procedure.  In Figure~\ref{fig:jobs} we see the correlation between held-out balanced accuracy for the rule set w.r.t. the experimental label, and the balanced accuracy for the rule set in approximating the base estimator. Note that AUC is equal to balanced accuracy for binary predictions.

\begin{figure}[ht!]
    \centering
    \begin{subfigure}{.98\columnwidth}
        \centering
       \includegraphics[width=.95\columnwidth]{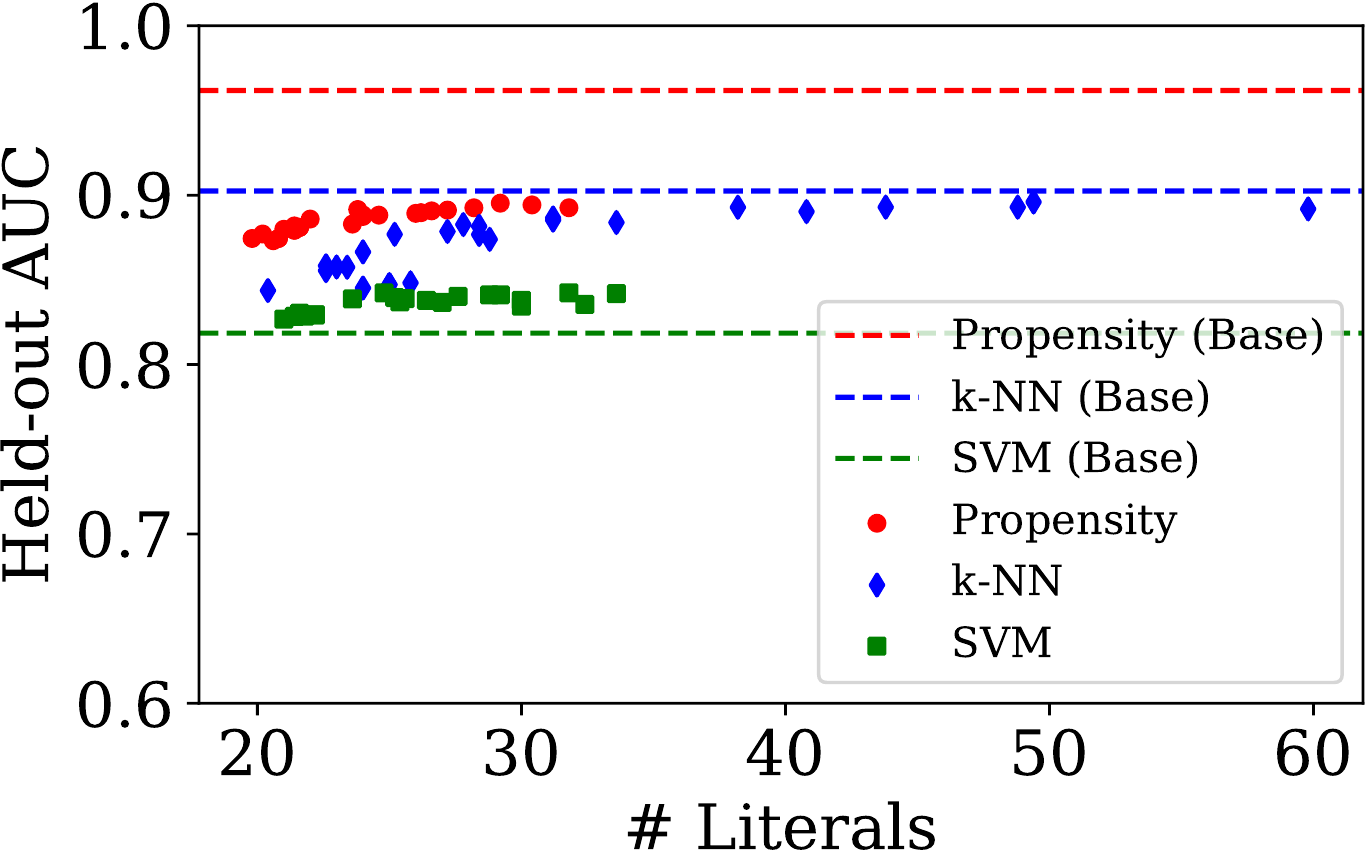}
       \label{fig:jobs_lit}
    \end{subfigure}
    
    \begin{subfigure}{.98\columnwidth}
        \centering
        \includegraphics[width=.95\columnwidth]{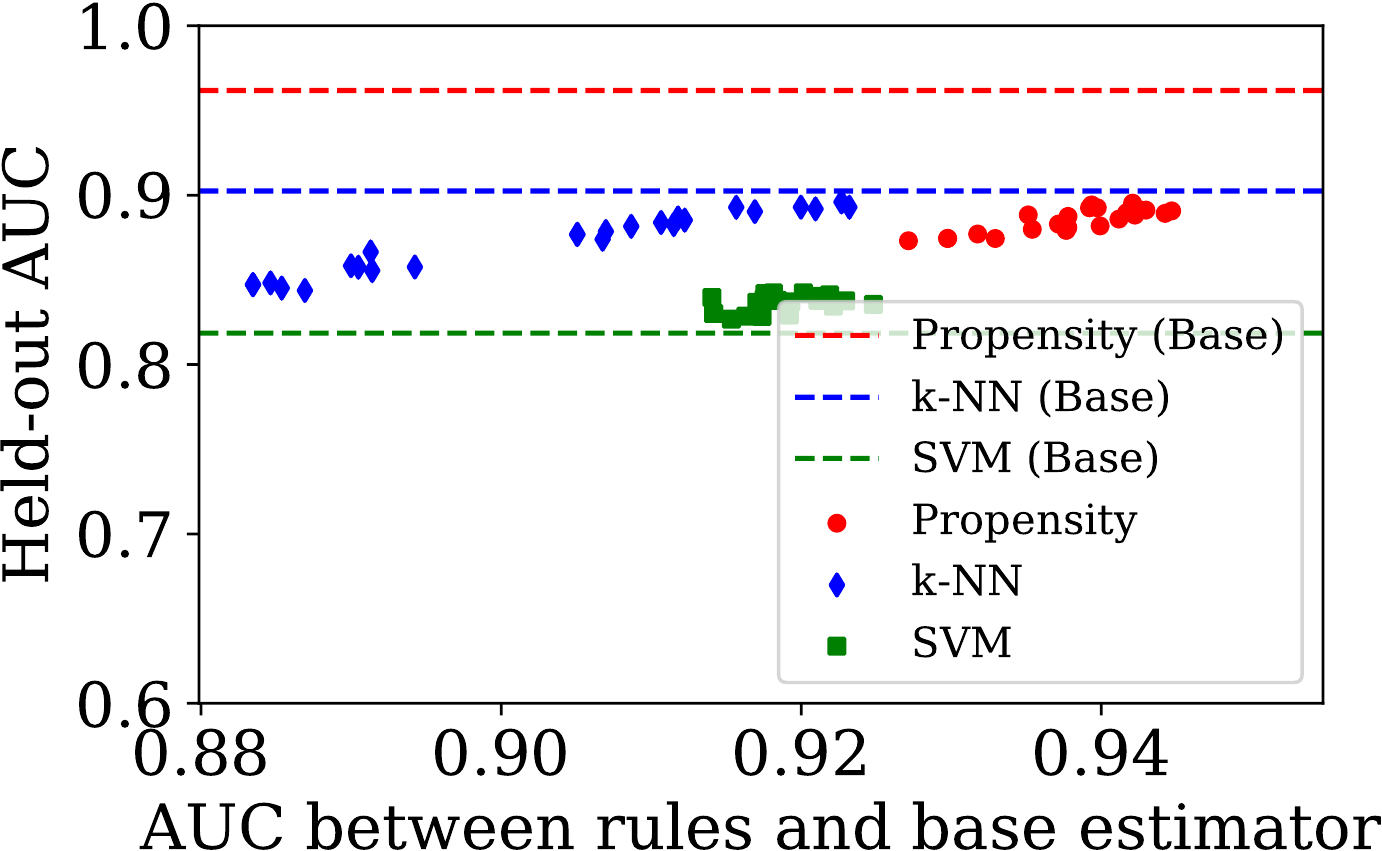}
        \label{fig:jobs_base}
    \end{subfigure}
   \caption{Results from the Jobs datasets for OverRule approximations of different base estimators, sweeping $\lambda_0, \lambda_1$. AUC (i.e., balanced accuracy) is measured with respect to the experimental indicator.  The dotted line `Propensity (base)' refers to the logistic regression base estimator, `k-NN (base)` refers to the k-NN base estimator, and `SVM (base)` refers to the one-class SVM.  The colored points refer to performance of OverRule using the respective base estimator, for different values of $\lambda_0, \lambda_1$}
    \label{fig:jobs}
\end{figure}

\subsection{Opioids}
\label{sub:appendix_opioids}
For the results in the main paper, we fit an OverRule model (OR) to a random forest base estimator with $\beta=0.8$ for $\cB$ and $\alpha=0.9$ for $\cS$ picked a priori. The hyperparameter $\lambda_0$ was set to $\lambda_0=$\num{1e-3} for $\cB$, and $\lambda_0=$\num{1e-5} for $\cS$, and $\lambda_1=0$ for both.

For a full table of covariate statistics for the Opioids dataset, see Table~\ref{tbl:covariates_supp}. For a illustration of the rules learned by OverRule to describe the complement of the overlap set, see Figure~\ref{fig:opioid_rules_comp}.

\begin{figure*}
    \centering
    \includegraphics[width=\textwidth]{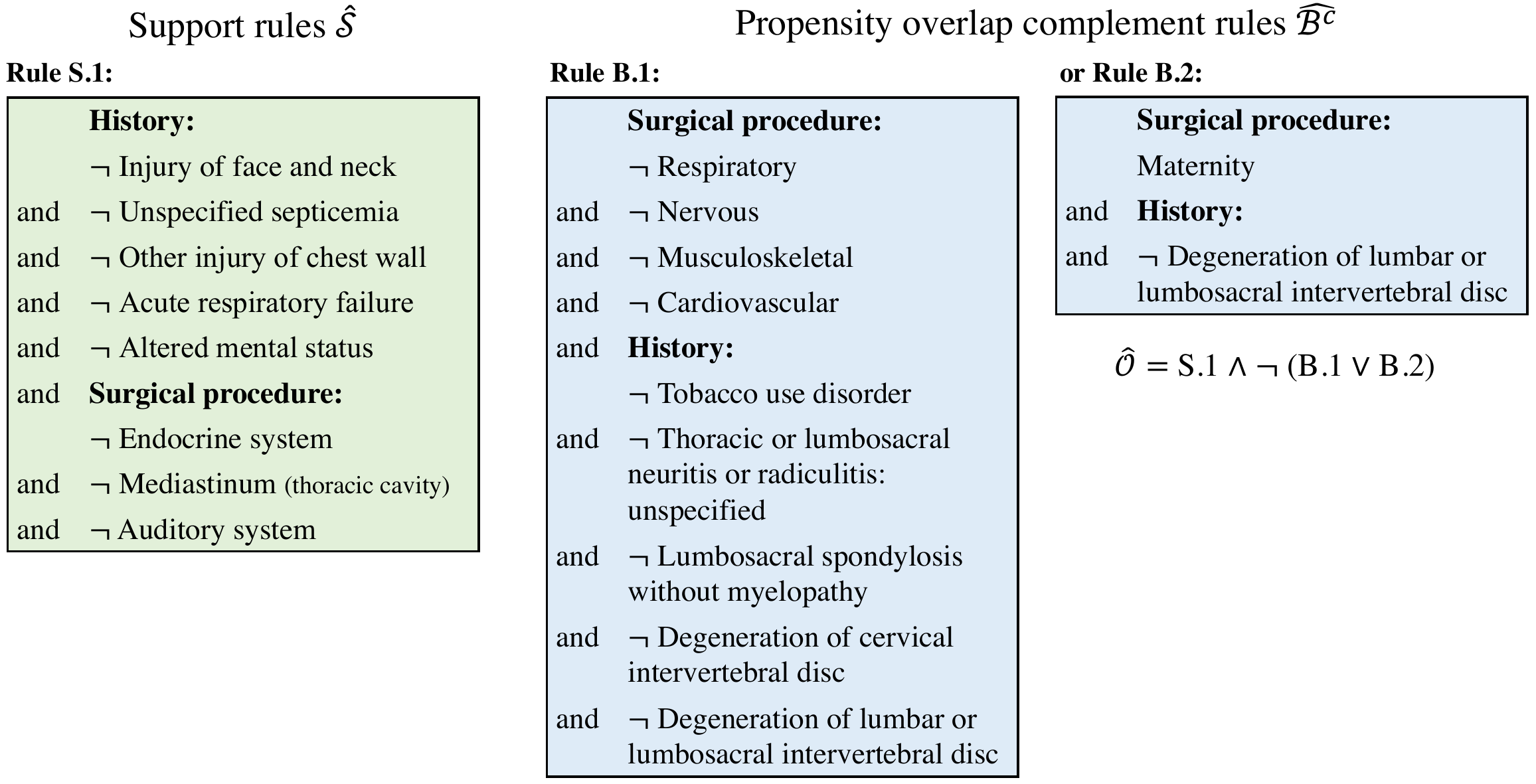}
    \caption{OverRule description of the \emph{complement} of the overlap between post-surgical patients with higher and lower opioid prescriptions. If the support rule (left) applies and \emph{neither} propensity overlap rule (right) applies, a patient is consider to be in the overlap set. $\lnot$ indicates a negation. The rules cover 36\% of patients with balanced accuracy 0.92 w.r.t. the base estimator (random forest). Procedures are not mutually exclusive.\label{fig:opioid_rules_comp}}
\end{figure*}

{\bf Supplemental Rules:}  We learned an additional set of rules, motivated by our experiments in Section~\ref{sec:opioidexp}, where we noted that the support rules did not capture certain combinations of surgery types or conditions that should be rare or non-existent.  This motivated the empirical investigation in Section~\ref{sub:hyperparameter_support}, and this vignette represents the result of re-running our procedure with this goal in mind.

For support rules, we followed the recommendations laid out in Section~\ref{sub:hyperparameter_support}, choosing to use a CNF formulation with $\alpha=0.98, \lambda_0 = 0, \lambda_1 = 0.01$.  Continuous features were binarized using deciles.  For our base estimator, we used a random forest classifier with 100 trees and 20 minimum samples per leaf, and we used $\epsilon=0.1$ as our cutoff.  For the overlap rules, we searched over the following grid of hyperparameters, with the goal of maximizing balanced accuracy with respect to the overlap labels on a validation set: $\beta \in \{0.8, 0.9, 0.95\}$ and then a set where $\lambda_0 = 0$ and $\lambda_1 \in \{10^{-3}, 2 \cdot 10^{-3}, 10^{-2}\}$, and a set where $\lambda_1 = 0$ and $\lambda_0 \in \{10^{-3}, 2 \cdot 10^{-3}, 10^{-2}\}$.  The selected hyperparameters were $\beta = 0.95, \lambda_0 = 0, \lambda_1 = 10^{-3}$.  The support rules cover 98.5\% of the test samples, and the overlap rules achieved a balanced accuracy of 0.96 on a held-out test set (with respect to the overlap labels) and covered 36\% of the test samples. The chosen ruleset is given in given in Figures~\ref{fig:supp_newhp_opioids_support}-\ref{fig:supp_newhp_opioids_overlap}. 

We note that the resulting support rules, in line with the findings in Section~\ref{sub:hyperparameter_support}, include a large number of rules that exclude zero training data points, by identifying rare interactions of features.  For instance, the rules identify that there are \textit{no men in our dataset who have maternity surgery}, an intuitive exclusion.

We shared this rule set with one of the participants of the original user study, who made the following observations:  First, the support rules in Figure~\ref{fig:supp_newhp_opioids_support} generally made sense as excluding combinations that are intuitively absent from the data (e.g., men w/maternity surgery) or that are just combinations of features that are themselves rare.  Regarding the overlap rules in Figure~\ref{fig:supp_newhp_opioids_overlap}, they observed that B.1 and B.2 were consistent with clinical intuition, where B.2 likely serves to exclude C-section patients with epidurals.  B.3 and B.4 were intuitive with the exception of the negations, e.g., it is unclear what the role of abdominal pain is in B.3, although it could be correlated with generalized pain syndromes. B.5-B.7 correspond to individuals with lower back pain (Lumbago) and neck pain (Cervicalgia) which are intuitive indicates for higher doses of opioids. B.8 corresponds to plastic surgery, and the broad category of respiratory surgery in B.9 could correspond to thoracic surgery, one of the main surgical categories associated with opioid misuse.  B.10-B.12 relate to back pain, which is associated with higher opioid dosages.

\begin{figure*}[h]
    \centering
    \includegraphics[width=.90\textwidth]{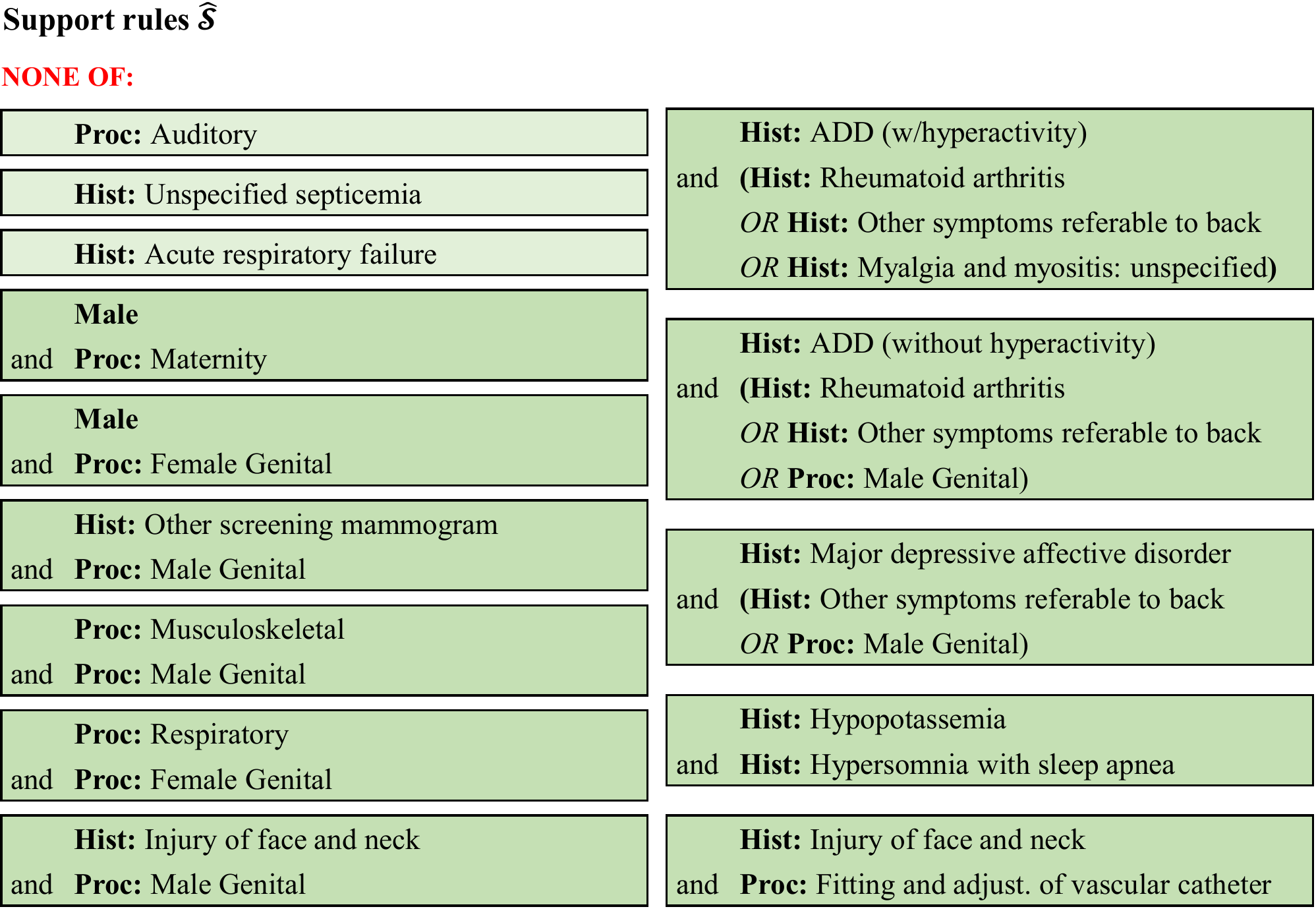}%
    \caption{Support Rules using CNF formulation for the Opioids task. \textbf{Proc} indicates a procedure, and \textbf{Hist} indicates a history of a condition.  A sample is considered in the support set if NONE of the above rules apply.  Note that rules are negated for simplicity of presentation, as ``AND NOT (X AND Y)'' is equivalent to ``AND (NOT X OR NOT Y)'', and in some cases several rules are combined for simplicity of presentation (e.g., those related to Attention Deficit Disorder).  Dark green rules are highlighted to indicate that they cover <4 training samples (and in many cases zero training samples) in line with our findings in Section~\ref{sub:hyperparameter_support} for this setting of hyperparameters.}
    \label{fig:supp_newhp_opioids_support}
\end{figure*}
\begin{figure*}[h]
    \centering
    \includegraphics[width=.90\textwidth]{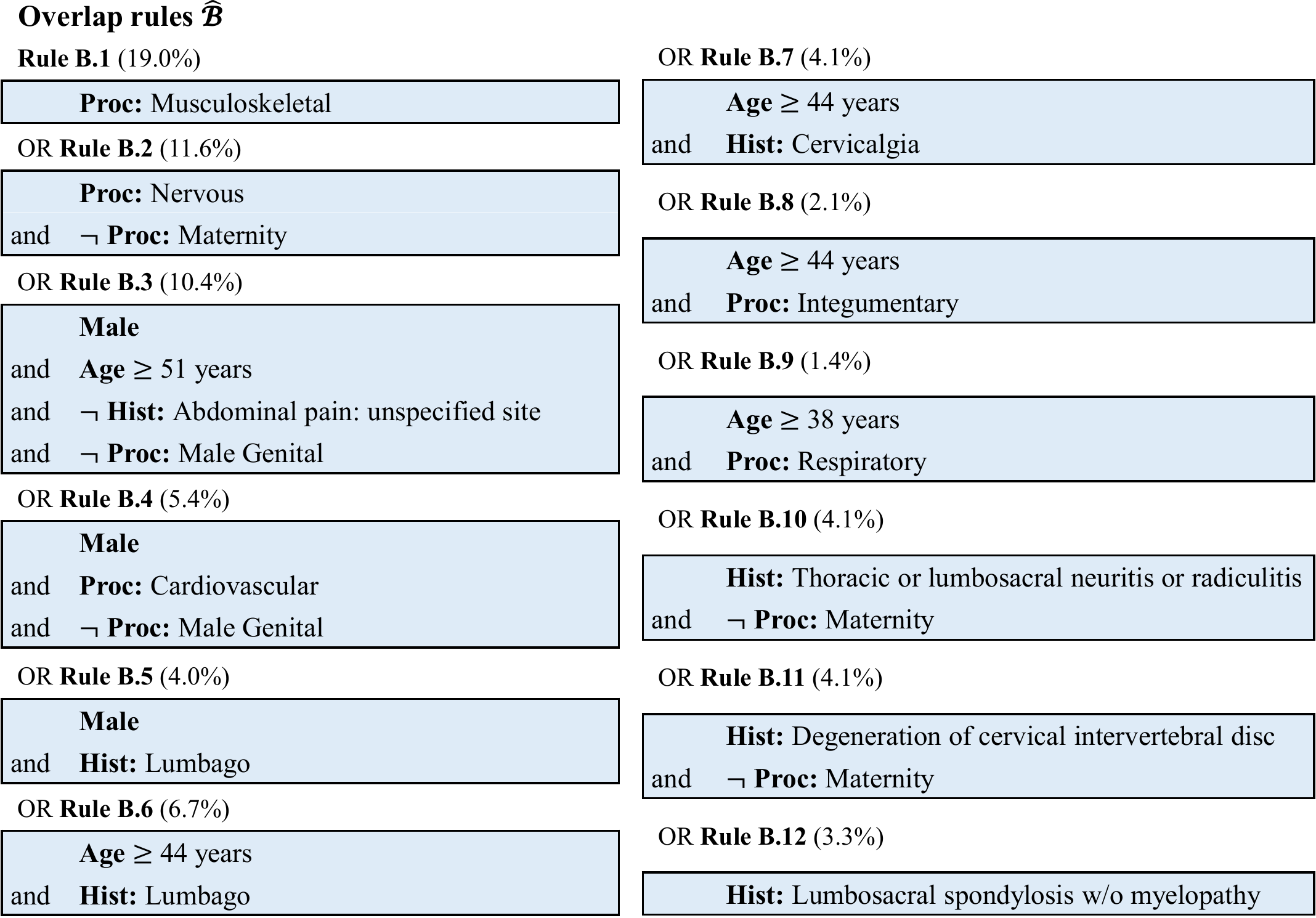}%
    \caption{Overlap rules, where the percentage next to each rule indicates the percentage of the dataset that is covered by that rule.  Collectively, these rules cover 36\% of the held-out datapoints.}
    \label{fig:supp_newhp_opioids_overlap}
\end{figure*}%

\subsection{Observational Study: Policy Evaluation of Antibiotic Prescription Guidelines}
\label{sec:utiappendix}

Antibiotic resistance is a growing problem in the treatment of urinary tract infections (UTI) \citep{Sanchez2016}, a common infection for which more than 1.6 million prescriptions are given annually in the United States \citep{Shapiro2013}.  With this in mind, we are interested in the following clinical problem:  When a patient presents with a UTI, the physician needs to choose between a range of antibiotics, with the dual goals of (a) treating the infection, and (b) minimizing the use of broad-spectrum antibiotics, which are more likely to select for drug-resistant strains of bacteria.

In this context, we might be interested in evaluating a range of potential treatment policies.  For our purposes, we will use a pre-defined policy: The clinical guidelines published by the Infectious Disease Society of America (IDSA) for treatment of uncomplicated UTIs in female patients \citep{Gupta2011}. Using the policy evaluation formulation of $\cB^\epsilon(\pi)$, we will apply OverRule to a conservative interpretation of the IDSA guidelines, using data curated from the Electronic Medical Record (EMR) of two academic medical centers.

The official guidelines discuss the importance of patient and population level risk factors in predicting resistance, and include some factors that we do not observe in our data (such as drug allergies). In order to characterize the guideline explicitly as a policy that we can evaluate in our dataset, we used the following interpretation:
\begin{itemize}
    \item Choose the first-line agent, either Nitrofurantoin (NIT) or Trimethoprim/Sulfamethoxazole (SXT), to which the patient did not have previous antibiotic exposure or resistance in the prior 90 days.  Additionally, if local rates of resistance to SXT are $\geq 20\%$ in the prior 30-90 days, then avoid prescription of SXT.
    \item If neither of the first-line agents are indicated, then prescribe Ciprofloxacin (CIP), a second-line agent.
\end{itemize}

\paragraph{Experimental details} From our data set, we selected all patients from 2007--2017 which had a UTI, and were prescribed one of the four most common antibiotics: NIT, SXT, CIP, or Levofloxacin (LVX).
Features include demographics (race, gender, age, and veteran status), comorbidities observed in the past 90 days, information about previous infections (organism, antibiotics given, and resistance profile), hospital ward (inpatient, outpatient, ER, and ICU), and indicators for pregnancy and nursing home residence in the past 90 days. The local rates of resistance (for each hospital ward) are given over the past 30--90 days, and used at the patient level as a feature, as well as an input to the decision of the guidelines.  

We preprocess our data first, removing any binary feature with a prevalence of less than 0.1\%, and any associated subject:  This results in the removal of 48 binary features with less than 0.1\% prevalence and 888 corresponding subjects. This leaves a total of 156 (150 binary, 6 continuous) features and 64593 subjects.  Detail on all remaining features are given in Table~\ref{tbl:covariates_uti}.  For the purposes of running our algorithm, we convert all continuous variables into binary variables by using indicator functions for deciles.

We then characterize the support set $\cS^\alpha$ as described in the main paper, using a DNF formulation, along with $\alpha = 0.95, \lambda_0 = 0.01, \lambda_1 = 0$.  Using the data points which fall into the support set, we then estimate the propensity $p(t | x)$ of prescribing each of the four drugs using a random forest classifier, with hyperparameter selection done using 5-fold cross-validation on 80\% of the remaining cohort used as a training set, over the following parameter grid: Number of estimators $\in [100, 500]$, Minimum samples per leaf (as fraction of total) $\in [0.005, 0.01, 0.02]$.  The resulting calibration curves for each antibiotic are given in Figure~\ref{fig:comp-calibration}, using the remaining held-out 20\% of the data.  Using these propensity scores, we apply the procedure described in Section~\ref{sec:policyevalappendix} to estimate the region of strict coverage, $\hcB^\epsilon(\pi)$ using Boolean rules, and the resulting rules are given in Figure~\ref{fig:comp-rules}. For this stage, we used a DNF formulation and hyperparameters of $\beta = 0.9, \lambda_0 = 0.03, \lambda_1 = 0$.

\begin{figure}[ht!]
    \centering
    \includegraphics[width=\columnwidth]{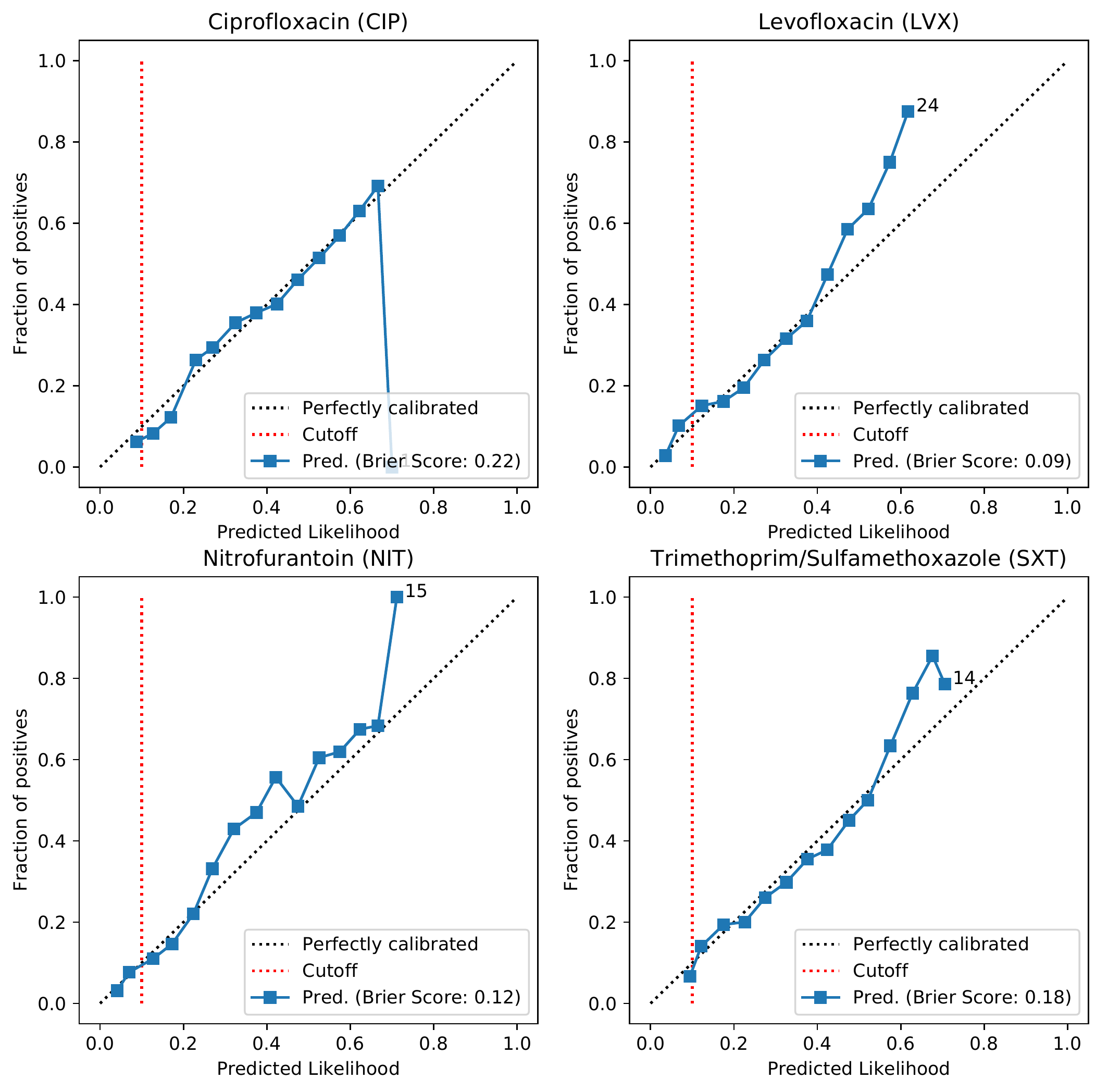}
    \caption{Calibration curves for each antibiotic, using 20 evenly spaced bins in the range $[0, 1]$.  Numbers indicate the number of samples, and are given when when the number of samples in a bin is less than 0.5\% of the total.  The cutoff is a reminder that $\epsilon = 0.1$ in this experiment:  For any subject with covariates $x$, the propensity must be above this cutoff for every treatment under the target policy (i.e., for all $t$ such that $\pi(t | x) > 0$) for them to be included in the coverage region.} 
    \label{fig:comp-calibration}
\end{figure}

\begin{figure*}[ht!]
    \centering
    \includegraphics[width=\textwidth]{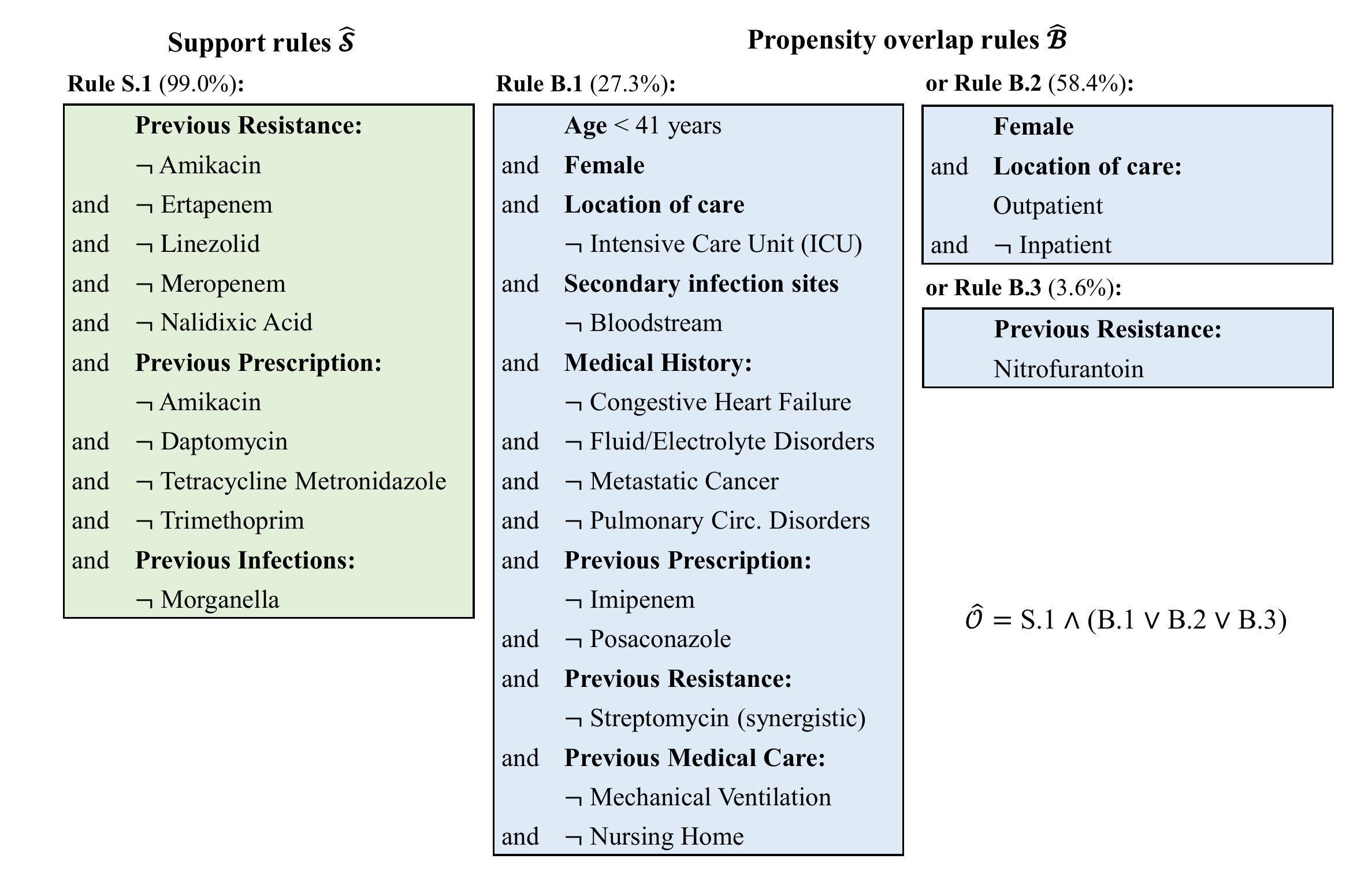}
    \caption{OverRule description of the coverage region for policy evaluation of the clinical guidelines.  Beside each rule we give the percentage of subjects that are covered by the rule in the test set.  Overall, the rules for $\hat{B}$ cover 65.4\% of the data points in the support region (compared to the 71\% of points labelled by our base estimator), and they have an balanced accuracy of 0.96 versus the base estimator.}
    \label{fig:comp-rules}
\end{figure*}

\paragraph{Clinical Validity / Interpretation} Towards understanding the clinical validity of these rules, we interviewed a clinician who specialises in infectious diseases\fxnote{confirm appropriate title w/SJ}.  First, we asked them, based on the available features, which they would expect to differentiate between subjects for whom the policy is or is not followed.  They noted that the guidelines are designed for uncomplicated cases:  In particular, patients who have a Foley catheter (a catheter used to drain urine from the bladder) are not covered under these guidelines, because infections in these patients tend to be more complex (e.g., the infection could have been introduced by the catheter itself).  The use of the Foley catheter is common during intensive care (e.g., in the ICU), so complex hospitalized patients are less likely to be treated according to the guidelines.

With that in mind, they reviewed the available features and noted the following: \begin{enumerate*}[label=(\roman*)] 
    \item While UTIs are common for women, they are rare for men;  Men with UTIs tend to be more complicated cases, because it is indicative of deeper abnormalities.  Similarly, pregnant women are excluded from the guidelines.
    \item Of the comorbidities given, none of them should directly disqualify patients from the guidelines, except potentially for complicated diabetes.  
    \item Prior organisms / resistance / prescriptions should not directly disqualify patients from the guidelines, though they will influence the type of antibiotic given.  In particular, if a patient has had previous resistance to an antibiotic, they are unlikely to be prescribed it again.
    \item The previous procedures given (with the exception of surgery) are associated with ICU patients.  For instance, mechanical ventilation and parenteral nutrition are exclusive to the ICU, and those patients likely have a Foley catheter as well.  Surgery is too broad of a category to draw any conclusions.
    \item In terms of locations besides the ICU, patients who are admitted to the hospital and who are on intravenous (IV) antibiotics already will be treated differently.  The guidelines are focused on oral antibiotics, whereas if an IV already exists, additional IV antibiotics are likely to be given instead.
\end{enumerate*}

Having discussed these points first, we then showed them the rules learned by the OverRule algorithm, and asked for their interpretation, as well as for any critiques of the rules based on their clinical knowledge.  Their reaction to each of the rules was as follows: 
\begin{itemize}
    \item \textbf{Rule B.1}: This appears to correspond to a relatively straightforward young inpatient female (given that Rule B.2 covers all outpatient females).  In particular, it rules out ICU patients directly, as well as those with recent mechanical ventilation, which would indicate a recent ICU stay.  It also rules out patients with current bloodstream infections, and those who had previously been tested for (and found to be) resistance to Streptomycin (synergistic):  This is only tested for in the context of bloodstream infections by enterococcus, and would be an indicator of previous bloodstream infections.  Imipenem is an IV antibiotic only given in inpatient settings, and posaconazole is an antifungal used in bone marrow transplant patients.  Patients who are both young and in a nursing home tend to be more complex, e.g., they may be paralysed or otherwise unable to perform activities independently.  Finally, the excluded comorbidities are less intuitive, because some of them (e.g., congestive heart failure) manifest with a range of severity:  For patients with controlled congestive heart failure, this is not a contraindication for following the guidelines, but if they are fully decompensated, then they would likely be on a Foley catheter.
    \item \textbf{Rule B.2}: This concisely describes the most common manifestation of UTI and the set of patients who are most likely to be treated according to the guidelines\footnote{Note that outpatient and ``not inpatient'' can appear in the same rule without being redundant, because multiple specimens collected on the same day for the same patient are collapsed into a single subject.}.
    \item \textbf{Rule B.3}: The conjecture is that this represents patients who have had an uncomplicated UTI in the past, since patients are usually tested for the antibiotics under consideration by a physician, and since nitrofurantoin is one of the first-line treatments for uncomplicated UTIs.
\end{itemize}

From a quantitative perspective, we compared the learned region with an explicitly constructed cohort of patients whose inclusion criteria were explicitly designed to make them eligible for application of the IDSA guidelines.  In particular, we defined this cohort as including non-pregnant women between the ages of 18 to 55 years of age with no record of genitourinary surgery or instrumentation, immunosuppression, indwelling catheters, or neurologic dysfunction in the preceding 90 days.  There were 14k of these patients, 21\% of the total.

In relationship to this conservative subset, the learned region (covering 42k patients, 64\% of total) covers 96\% of the explicitly constructed cohort, while also demonstrating that a broader set of patients are treated according to these guidelines in practice.

\section{Theoretical Results on Regularized Minimum-Volume Boolean Rules}
\label{sec:MVtheory}

\subsection{Bounds on minimum volume}
\label{sec:volBound}

In this subsection, we derive lower bounds on the volume of optimal DNF Boolean rules in problem \eqref{eq:MVobj}. 

First we obtain an expression for the normalized volume of a clause in a DNF (we use the terms clause and conjunction interchangeably in the case of a DNF). We express the domain $\cX$ as the Cartesian product $\cX_1 \times \dots \times \cX_d$. A DNF rule with $K$ clauses $a_k$ is written as 
\begin{equation}\label{eqn:DNF}
r(x) = \bigvee_{k=1}^K a_k(x) = \bigvee_{k=1}^K \bigwedge_{j\in\cJ_k} \left(x_j \in \cS_{jk}\right),
\end{equation}
where $\cJ_k$ is the set of covariates participating in clause $k$, and each $x_j \in \cS_{jk} \subseteq \cX_j$ is a subset membership condition on an individual covariate. Examples of such conditions are (Age $\geq 30$) for a continuous-valued covariate and (Sex = Female) for a discrete-valued one. For $j \notin \cJ_k$, it is understood that $x_j \in \cX_j$, i.e.~there is no restriction on $x_j$. The volume of clause $a_k$ is then given by the product 
\[
V(a_k) = \prod_{j\in\cJ_k} \lvert\cS_{jk}\rvert \prod_{j\notin\cJ_k} \lvert\cX_j\rvert, 
\] 
where $\lvert\cS_{jk}\rvert$ is the length of subset $\cS_{jk}$ for a continuous covariate $j$ or the cardinality of $\cS_{jk}$ for a discrete covariate, and similarly for $\lvert\cX_j\rvert$. 
Likewise, the volume of $\cX$ is $\prod_{j=1}^d \lvert\cX_j\rvert$,
and the normalized volume of $a_k$ is therefore 
\begin{equation}\label{eqn:volConj}
\bar{V}(a_k) = \prod_{j\in\cJ_k} f_{jk}, \quad f_{jk} = \frac{\lvert\cS_{jk}\rvert}{\lvert\cX_j\rvert} \in [0,1].
\end{equation}

We define $p_k = \lvert\cJ_k\rvert$ to be the \emph{degree} of conjunction $k$.
\begin{thmprop}
Assume that the regularization $R(r)$ follows \eqref{eqn:R}. Then in any optimal solution to \eqref{eq:MVobj}, all clauses $a_k$ of degree $p_k$ have normalized volume satisfying $\bar{V}(a_k)^{(p_k-1)/p_k} - \bar{V}(a_k) \geq \lambda_1$.
\label{prop:volConj}
\end{thmprop}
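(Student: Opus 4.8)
The plan is to argue by a local exchange argument: if the stated inequality fails for some clause $a_k$ in a purportedly optimal solution to \eqref{eq:MVobj}, I will produce a strictly better feasible rule by deleting a single condition from $a_k$, contradicting optimality. I may assume $\lambda_1 > 0$: if $\lambda_1 = 0$ the claim reads $\bar{V}(a_k)^{(p_k-1)/p_k} \geq \bar{V}(a_k)$, which holds trivially since $\bar{V}(a_k) \in [0,1]$. With $\lambda_1 > 0$, note that in any optimal solution every $f_{jk} > 0$: an always-false clause (some $\cS_{jk} = \emptyset$) can be deleted without changing coverage while strictly reducing $R$.

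First I would record the effect of deleting one condition $(x_j \in \cS_{jk})$ from clause $a_k$, which replaces $a_k$ by an enlarged clause $a_k'$ with $\bar{V}(a_k') = \bar{V}(a_k)/f_{jk}$ (by the product formula \eqref{eqn:volConj}), and turns the rule $\cC$ into $\cC' = \cC \cup a_k'$. Two observations: (i) since $a_k' \supseteq a_k$ we have $\cC' \supseteq \cC$, so $P(\cC') \geq P(\cC) \geq \alpha$ and feasibility is preserved for free; (ii) since $a_k \subseteq \cC$, the extra volume lies in $a_k' \setminus \cC \subseteq a_k' \setminus a_k$, hence $\bar{V}(\cC') - \bar{V}(\cC) = \bar{V}(a_k' \setminus \cC) \leq \bar{V}(a_k') - \bar{V}(a_k) = \bar{V}(a_k)\bigl(1/f_{jk} - 1\bigr)$. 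Meanwhile $R$ drops by at least $\lambda_1$ — exactly $\lambda_1$ when $p_k \geq 2$ (same clause count, one fewer condition), and at least $\lambda_1$ when $p_k = 1$ (the clause, hence the rule, becomes all-true, and the simplified rule has even smaller $R$).

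Optimality of $\cC$ then forces $\bar{V}(\cC') + R(\cC') \geq \bar{V}(\cC) + R(\cC)$, i.e. the volume gain is at least the regularization saving: $\bar{V}(a_k)\bigl(1/f_{jk} - 1\bigr) \geq \lambda_1$ for every $j \in \cJ_k$. Now pick $j^\star \in \cJ_k$ maximizing $f_{jk}$ and set $f^\star = \max_{j\in\cJ_k} f_{jk}$. Since the maximum dominates the geometric mean, $f^\star \geq \bigl(\prod_{j\in\cJ_k} f_{jk}\bigr)^{1/p_k} = \bar{V}(a_k)^{1/p_k}$, so $\bar{V}(a_k)/f^\star \leq \bar{V}(a_k)^{(p_k-1)/p_k}$. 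Applying the optimality inequality at $j = j^\star$ gives $\bar{V}(a_k)^{(p_k-1)/p_k} - \bar{V}(a_k) \geq \bar{V}(a_k)/f^\star - \bar{V}(a_k) \geq \lambda_1$, which is the claim (in the $p_k = 1$ case this degenerates to $1 - \bar{V}(a_k) \geq \lambda_1$, consistent with $f^\star = \bar{V}(a_k)$).

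The main subtlety — and the only place that is not entirely routine — is step (ii): bounding the increase of the whole-rule volume $\bar{V}(\cC)$ when a single clause is enlarged, given that clauses in a DNF may overlap and $\bar{V}(\cC) \neq \sum_k \bar{V}(a_k)$ in general. The set identity $\cC' = \cC \cup a_k'$ together with $a_k \subseteq \cC$ handles this cleanly, localizing the extra volume to $a_k' \setminus a_k$, whose normalized volume is exactly $\bar{V}(a_k)(1/f_{jk}-1)$ by \eqref{eqn:volConj}. The remaining steps (feasibility monotonicity, the max-versus-geometric-mean inequality, and the degenerate $p_k=1$ and $\lambda_1=0$ cases) are elementary.
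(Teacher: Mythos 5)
Your proof is correct and follows essentially the same route as the paper's: delete one condition from a clause, note that feasibility is preserved and the volume increase is at most $\bar{V}(a_k)(1/f_{jk}-1)$ while $R$ drops by $\lambda_1$, then apply this at the condition with maximal $f_{jk}$ and use $\max_{j}f_{jk}\geq \bar{V}(a_k)^{1/p_k}$. Your handling of the overlap issue via $\cC'=\cC\cup a_k'$ and of the degenerate cases ($\lambda_1=0$, $p_k=1$, empty clauses) is slightly more explicit than the paper's, but the argument is the same.
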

\begin{proof}
Suppose that rule $r$ with corresponding set $\cC$ is an optimal solution to \eqref{eq:MVobj}. Recalling the expansion in \eqref{eqn:DNF}, we consider modifications to $r$ in which one condition $(x_j \in \cS_{jk})$ is removed from a clause $a_k$. The modified rule satisfies the mass constraint $P(\cC) \geq \alpha$ because it covers at least those points covered by $r$. From \eqref{eqn:volConj}, the increase in volume is at most $\bar{V}(a_k)((1/f_{jk}) - 1)$, with equality if none of the additional volume is already covered by another clause in $r$, while the complexity penalty decreases by $\lambda_1$. The change in objective value is thus bounded from above by 
\[
\bar{V}(a_k)\left(\frac{1}{f_{jk}} - 1\right) - \lambda_1.
\]
This upper bound must be non-negative as otherwise $r$ is not optimal. In particular, for $f_{jk} = \max_{j'\in\cJ_k} f_{j'k}$ and all $k$ we have
\[
\bar{V}(a_k)\left(\frac{1}{\max_{j\in\cJ_k} f_{jk}} - 1\right) \geq \lambda_1.
\]
Since \eqref{eqn:volConj} implies that $\max_{j\in\cJ_k} f_{jk} \geq \bar{V}(a_k)^{1/p_k}$, the desired result follows.
\end{proof}

For $p > 1$, the function $\bar{V}^{(p-1)/p} - \bar{V}$ is positive and concave on $(0, 1)$ with roots at $0$ and $1$. For $\lambda_1 > 0$, the equation $\bar{V}^{(p-1)/p} - \bar{V} = \lambda_1$ therefore has either two roots, $0 < \bar{V}_L < \bar{V}_U < 1$, which define an interval where the inequality $\bar{V}^{(p-1)/p} - \bar{V} \geq \lambda_1$ is satisfied, or no roots if $\lambda_1$ is too large. We are interested primarily in the root $\bar{V}_L$ as a lower bound on volume. While $\bar{V}_L$ is not available in closed form for $p > 2$, the following corollary gives a simple expression that is a lower bound on $\bar{V}_L$.
\begin{thmcol}\label{cor:volConj}
Under the assumption in Proposition~\ref{prop:volConj}, in any optimal solution to \eqref{eq:MVobj}, all clauses $a_k$ of degree $p_k > 1$ have normalized volumes of at least $\lambda_1^{p_k/(p_k-1)}$. 
\end{thmcol}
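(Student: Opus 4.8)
The plan is to read off the bound directly from the inequality proved in Proposition~\ref{prop:volConj}. That proposition already does all the real work: by considering the modification that drops the highest-normalized-length condition from a clause and invoking optimality for \eqref{eq:MVobj}, it establishes that every clause $a_k$ of degree $p_k$ in an optimal rule satisfies
\[
\bar{V}(a_k)^{(p_k-1)/p_k} - \bar{V}(a_k) \geq \lambda_1 .
\]
So the corollary is purely a matter of extracting a clean closed-form consequence of this; there is essentially no new combinatorial or optimization content.

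First I would restrict to clauses with $p_k > 1$, so that the exponent $(p_k-1)/p_k \in (0,1)$ and its reciprocal $p_k/(p_k-1)$ is a well-defined positive real. The single observation that drives the proof is that $\bar{V}(a_k)$ is a \emph{normalized volume}, hence nonnegative, so dropping the subtracted term only weakens the left-hand side: the displayed inequality implies $\bar{V}(a_k)^{(p_k-1)/p_k} \geq \lambda_1$. Then, because $t \mapsto t^{p_k/(p_k-1)}$ is monotonically increasing on $[0,\infty)$, raising both sides to the power $p_k/(p_k-1)$ gives $\bar{V}(a_k) \geq \lambda_1^{p_k/(p_k-1)}$, which is exactly the claim. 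The only things to be careful about are that the exponent manipulation requires $p_k>1$ (consistent with the corollary's hypothesis) and that one uses monotonicity in the correct direction.

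Finally, I would add a sentence relating this to the discussion immediately preceding the corollary: the same argument applied to the smaller root $\bar{V}_L$ of $\bar{V}^{(p-1)/p} - \bar{V} = \lambda_1$ (when it exists) shows $\bar{V}_L \geq \lambda_1^{p/(p-1)}$, so $\lambda_1^{p_k/(p_k-1)}$ is a uniform, if somewhat loose, closed-form surrogate for the exact but non-closed-form lower bound $\bar{V}_L$. The ``main obstacle'' is thus not a mathematical difficulty at all — everything hard is already in Proposition~\ref{prop:volConj} — it is just presenting the one-line bound-on-the-bound cleanly and stating the $p_k>1$ caveat.
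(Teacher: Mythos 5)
Your argument is exactly the paper's proof: drop the nonnegative term $-\bar{V}(a_k)$ from the inequality of Proposition~\ref{prop:volConj} to get $\bar{V}(a_k)^{(p_k-1)/p_k} \geq \lambda_1$, then raise both sides to the power $p_k/(p_k-1)$. The only nitpick is the phrase ``dropping the subtracted term only weakens the left-hand side''---it \emph{increases} the left-hand side, which is why the inequality is preserved---but your logic is correct and matches the paper.
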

\begin{proof}
Proposition~\ref{prop:volConj} implies $\bar{V}(a_k)^{(p_k-1)/p_k} \geq \lambda_1$ after dropping $-\bar{V}(a_k)$ from the left-hand side.
\end{proof}

Lastly, since the volume of a DNF rule is at least that of any of its clauses, we have the following.
\begin{thmcol}
Under the assumption in Proposition~\ref{prop:volConj}, any optimal solution to \eqref{eq:MVobj} has normalized volume of at least $\lambda_1^{p_{\max}/(p_{\max}-1)}$, where $p_{\max} = \max_k p_k$ is the largest degree of its clauses. 
\end{thmcol}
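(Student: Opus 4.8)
The plan is to exploit the fact that a DNF rule is a disjunction, hence describes a \emph{union} of clause-sets, so that monotonicity of volume immediately reduces the claim to a statement about a single clause, to which Corollary~\ref{cor:volConj} already applies.

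Concretely, let $r$ be an optimal solution to \eqref{eq:MVobj}, written in the form \eqref{eqn:DNF} with clauses $a_1,\dots,a_K$ of degrees $p_1,\dots,p_K$, and let $\cC$ be the corresponding set. First I would pick an index $k^\star$ with $p_{k^\star} = p_{\max} = \max_k p_k$. Since $\cC = \bigcup_{k=1}^K a_k$ (identifying each clause with the set of points it covers), we have $\cC \supseteq a_{k^\star}$ and therefore $\bar{V}(r) = \bar{V}(\cC) \ge \bar{V}(a_{k^\star})$ by monotonicity of the normalized volume. Second, assuming $p_{\max} > 1$, Corollary~\ref{cor:volConj} applied to the clause $a_{k^\star}$ (a clause of the optimal rule $r$, of degree $>1$) gives $\bar{V}(a_{k^\star}) \ge \lambda_1^{p_{k^\star}/(p_{k^\star}-1)} = \lambda_1^{p_{\max}/(p_{\max}-1)}$. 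Chaining the two inequalities yields $\bar{V}(r) \ge \lambda_1^{p_{\max}/(p_{\max}-1)}$, the claim.

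As a sanity check on why $p_{\max}$ (and not some other degree) is the right exponent: Corollary~\ref{cor:volConj} bounds \emph{every} clause of degree $p_k > 1$ below by $\lambda_1^{p_k/(p_k-1)}$, and since $p/(p-1) = 1 + 1/(p-1)$ is decreasing in $p$ while $\lambda_1 \in (0,1)$, the map $p \mapsto \lambda_1^{p/(p-1)}$ is increasing; hence among all these per-clause lower bounds the one coming from the maximum-degree clause is the strongest, which is exactly the bound we record.

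There is essentially no analytical obstacle here: the corollary is a one-line consequence of Corollary~\ref{cor:volConj} and measure monotonicity. The only point requiring care is the degenerate regime $p_{\max} \le 1$, where either every clause is a single literal ($p_{\max}=1$, so $p_{\max}/(p_{\max}-1)$ is undefined) or the rule is trivial ($p_{\max}=0$, giving $\lambda_1^0 = 1$ and $\bar{V}(r) = 1$); stating the corollary implicitly restricts to $p_{\max} > 1$, the case of interest for lower-bounding volume.
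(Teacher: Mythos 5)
Your argument is correct and is essentially the paper's own: the paper proves this corollary with the single observation that the volume of a DNF rule is at least that of any of its clauses, then invokes Corollary~\ref{cor:volConj} on a maximum-degree clause exactly as you do. Your additional remarks (that $\lambda_1^{p/(p-1)}$ is increasing in $p$ for $\lambda_1\in(0,1)$, so the max-degree clause yields the strongest per-clause bound, and the caveat about $p_{\max}\le 1$) are sound but not needed beyond what the paper states.
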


\subsection{Bounds on the number of candidate DNF rules}

The results in the previous subsection are necessary conditions of optimality for problem \eqref{eq:MVobj}. The implication is that in searching for optimal solutions to \eqref{eq:MVobj}, we may restrict the class $\scrC$ of DNF rules considered to those satisfying these necessary conditions. In this subsection, we develop the consequences of this restriction, culminating in a bound on $\lvert\scrC\rvert$, the number of candidate DNF rules (Lemma~\ref{lem:classSize}).

For simplicity, we assume in the following that all variables $X_j$ are binary-valued. An extension to non-binary categorical variables and continuous variables (discretized using interval conditions $l_j \leq x_j \leq u_j$) is likely possible with the additional complications of accounting for the cardinalities of categorical variables and bounding the fractions $f_{jk}$ associated with continuous variables.

First, the simplified lower bound on volume in Corollary~\ref{cor:volConj} implies an upper bound on conjunction degree.
\begin{thmlem}\label{lem:maxDeg}
Assume that the regularization $R(r)$ follows \eqref{eqn:R} and that all variables are binary. Then in any optimal solution to \eqref{eq:MVobj}, the maximum degree of a conjunction is $p_{\max} := 1 + \lfloor \log_2(1/\lambda_1) \rfloor$.
\end{thmlem}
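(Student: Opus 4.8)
The plan is to specialize the volume lower bound of Corollary~\ref{cor:volConj} to the binary setting and then solve for the largest degree still consistent with it. The first step is to compute the normalized volume of a clause when all variables are binary. In any optimal solution to \eqref{eq:MVobj} we may assume that each condition $(x_j \in \cS_{jk})$ appearing in a clause restricts $x_j$ to a proper nonempty subset of $\cX_j$: a condition with $\cS_{jk} = \cX_j$ contributes a factor $1$ to the volume while inflating the degree $p_k$ and hence $R(r)$, and a condition with $\cS_{jk} = \emptyset$ makes the clause empty; either way the rule can be strictly improved, contradicting optimality. Since $\lvert\cX_j\rvert = 2$, this forces $\lvert\cS_{jk}\rvert = 1$, so every fraction $f_{jk}$ in \eqref{eqn:volConj} equals $1/2$, and a clause $a_k$ of degree $p_k$ in an optimal rule has $\bar V(a_k) = \prod_{j\in\cJ_k} f_{jk} = 2^{-p_k}$.

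Next I would invoke Corollary~\ref{cor:volConj}: in any optimal solution, every clause $a_k$ of degree $p_k > 1$ satisfies $\bar V(a_k) \ge \lambda_1^{p_k/(p_k-1)}$, hence $2^{-p_k} \ge \lambda_1^{p_k/(p_k-1)}$. Taking $\log_2$ of both sides, using $\log_2\lambda_1 = -\log_2(1/\lambda_1)$, and dividing through by the positive quantities $p_k$ and $p_k - 1$ yields $p_k - 1 \le \log_2(1/\lambda_1)$, i.e.\ $p_k \le 1 + \log_2(1/\lambda_1)$. As $p_k$ is an integer, $p_k \le 1 + \lfloor \log_2(1/\lambda_1)\rfloor = p_{\max}$. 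Clauses of degree $0$ or $1$ satisfy $p_k \le p_{\max}$ trivially in the operative regime $\lambda_1 \le 1$ (where $p_{\max} \ge 1$), so no conjunction in an optimal solution has degree exceeding $p_{\max}$.

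The arithmetic is routine; the one step that deserves care — and the only real obstacle — is the reduction to the exact identity $\bar V(a_k) = 2^{-p_k}$, since if trivial or redundant conditions were permitted the volume could fail to be as small as $2^{-p_k}$ and the inequality from Corollary~\ref{cor:volConj} would no longer bound $p_k$; so the argument genuinely relies on the necessary-optimality reductions invoked in Theorem~\ref{thm:MV}(c). I would also remark in passing that substituting $\bar V(a_k) = 2^{-p_k}$ into the sharper Proposition~\ref{prop:volConj} gives the slightly stronger bound $p_k \le \lfloor \log_2(1/\lambda_1)\rfloor$, but the stated $p_{\max}$ is all that is needed for the subsequent counting of candidate rules.
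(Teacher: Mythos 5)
Your proof is correct and follows essentially the same route as the paper's: compute $\bar{V}(a_k) = 2^{-p_k}$ for binary variables, substitute into Corollary~\ref{cor:volConj}, take logarithms, and round down since $p_k$ is an integer. Your extra care in justifying the exact identity $\bar{V}(a_k)=2^{-p_k}$ and your side remark that Proposition~\ref{prop:volConj} would yield the sharper bound $p_k \le \lfloor \log_2(1/\lambda_1)\rfloor$ are both sound additions, but the core argument is the paper's.
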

\begin{proof}
The normalized volume of a conjunction of degree $p_k$ is $2^{-p_k}$. Corollary~\ref{cor:volConj} then requires 
\[
2^{-p_k} \geq \lambda_1^{p_k/(p_k-1)}.
\]
Taking logarithms and rearranging, we obtain
\begin{align*}
    -1 &\geq \frac{1}{p_k-1} \log_2 \lambda_1,\\
    p_k &\leq 1 + \log_2(1 / \lambda_1).
\end{align*}
The right-hand side can be rounded down since $p_k$ is integer.
\end{proof}

Given Lemma~\ref{lem:maxDeg}, we may enumerate DNF rules satisfying the lemma according to the numbers of clauses of degree $p = 1,\dots,p_{\max}$ that they possess. Denote by $K_p$ the number of clauses of degree $p$ and call $\bK = (K_1, \dots, K_{p_{\max}})$ the \emph{signature} of a DNF rule. The signatures of optimal DNF rules obey the following constraint.
\begin{thmlem}\label{lem:sigCons}
Under the assumptions of Lemma~\ref{lem:maxDeg}, the signature $\bK = (K_1, \dots, K_{p_{\max}})$ of an optimal solution to \eqref{eq:MVobj} must satisfy 
\begin{equation}\label{eqn:sigCons}
\sum_{p=1}^{p_{\max}} K_p (\lambda_0 + p\lambda_1) < 1.
\end{equation}
\end{thmlem}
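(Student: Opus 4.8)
The plan is to recognize that the left-hand side of \eqref{eqn:sigCons} is exactly the complexity penalty $R(\cC)$ of a rule with signature $\bK$, and then to bound $R(\cC)$ from above by comparing the optimal objective value of \eqref{eq:MVobj} to that of the trivial all-true rule. Concretely, a DNF rule with signature $\bK = (K_1,\dots,K_{p_{\max}})$ has $K = \sum_{p} K_p$ clauses and $\sum_k p_k = \sum_{p} p\,K_p$ total conditions, so \eqref{eqn:R} gives
\[
R(\cC) = K\lambda_0 + \lambda_1 \sum_k p_k = \sum_{p=1}^{p_{\max}} K_p(\lambda_0 + p\lambda_1),
\]
and the claim \eqref{eqn:sigCons} is therefore equivalent to $R(\cC) < 1$ for any optimal $\cC$.

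Next I would exhibit a feasible competitor with objective value $1$: the all-true rule covers all of $\cX$, so $P(\cX) = 1 \geq \alpha$, it has complexity $R = 0$ by assumption, and its normalized volume is $\bar V(\cX) = V(\cX)/V(\cX) = 1$, giving objective value $Q = 1$. Hence the optimal value of \eqref{eq:MVobj} satisfies $q^*(\alpha) \le 1$, so any optimal solution $\cC$ obeys $\bar V(\cC) + R(\cC) = Q(\cC) = q^*(\alpha) \le 1$, i.e. $R(\cC) \le 1 - \bar V(\cC)$.

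The only point requiring care — and hence the "main obstacle," though it is a mild one — is upgrading this to the strict inequality, which needs $\bar V(\cC) > 0$. Since $\alpha \in (0,1)$, the feasibility constraint $P(\cC) \geq \alpha$ excludes the all-false (empty) rule; and with all variables binary (the standing assumption of Lemma~\ref{lem:maxDeg}), any nonempty union of conjunctions covers a strictly positive fraction of $\cX$, because a conjunction of degree $p$ has normalized volume $2^{-p} > 0$. Thus $\bar V(\cC) > 0$, and combining with the previous display yields $R(\cC) < 1$, which is precisely \eqref{eqn:sigCons}. Everything else is the routine bookkeeping that identifies $R(\cC)$ with $\sum_p K_p(\lambda_0 + p\lambda_1)$.
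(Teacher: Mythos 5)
Your proof is correct and follows essentially the same route as the paper's: identify the left-hand side of \eqref{eqn:sigCons} with the complexity penalty $R(\cC)$ and compare the optimal objective value to that of the trivial all-true rule, which has objective value $1$. If anything you are slightly more careful than the paper, which asserts the strict inequality directly, whereas you justify it by noting that feasibility ($P(\cC)\geq\alpha>0$) forces a nonempty rule and hence $\bar V(\cC)>0$ in the binary setting.
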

\begin{proof}
From \eqref{eqn:R}, the complexity penalty of a solution with $K_p$ clauses of degree $p$, $p = 1,\dots,p_{\max}$ is given by the left-hand side of \eqref{eqn:sigCons}. For a solution to be optimal, it must have lower cost than the trivial ``all true'' rule, which has a normalized volume of $1$ and complexity penalty of $0$. In particular, the complexity penalty must be less than $1$.
\end{proof}

Let $\Delta$ denote the set of signatures that satisfy \eqref{eqn:sigCons}, and for $\bK \in \Delta$, let $\scrC(\bK)$ be the set of DNF rules with signature $\bK$. The number of DNF rules satisfying the necessary conditions of optimality in Lemmas~\ref{lem:maxDeg} and \ref{lem:sigCons} can be bounded as follows:
\begin{equation}\label{eqn:classSize}
    \lvert\scrC\rvert = \sum_{\bK\in\Delta} \lvert\scrC(\bK)\rvert
    \leq \lvert\Delta\rvert \max_{\bK\in\Delta} \; \lvert\scrC(\bK)\rvert.
\end{equation}
The next two lemmas provide upper bounds on the two right-hand side factors in \eqref{eqn:classSize}.

\begin{thmlem}\label{lem:simplex}
The number of signatures satisfying \eqref{eqn:sigCons} is bounded as 
\[
\lvert\Delta\rvert \leq 2 \left(\frac{1}{\lambda_1}\right)^{p_{\max}}.
\]
\end{thmlem}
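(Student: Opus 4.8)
\begin{thmproofsketch}[Proof plan]
The plan is to decouple the constraint \eqref{eqn:sigCons} coordinate by coordinate, count the admissible values of each multiplicity $K_p$ in isolation, multiply these counts, and then control the resulting product using the explicit value of $p_{\max}$ fixed in Lemma~\ref{lem:maxDeg}.

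First I would weaken \eqref{eqn:sigCons} using $\lambda_0 \ge 0$. For any signature $\bK \in \Delta$ and any degree $p \in \{1,\dots,p_{\max}\}$, retaining only the $p$-th term of the left-hand side of \eqref{eqn:sigCons} gives $p\lambda_1 K_p \le K_p(\lambda_0 + p\lambda_1) \le \sum_{p'=1}^{p_{\max}} K_{p'}(\lambda_0 + p'\lambda_1) < 1$, so $K_p < \tfrac{1}{p\lambda_1}$. Writing $M := 1/\lambda_1$, each $K_p$ is thus a non-negative integer strictly below $M/p$, and there are at most $M/p + 1$ such integers. Since a signature is precisely the tuple $(K_1,\dots,K_{p_{\max}})$, this yields $\lvert\Delta\rvert \le \prod_{p=1}^{p_{\max}} \bigl(\tfrac{M}{p}+1\bigr)$.

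Next I would simplify this product. Lemma~\ref{lem:maxDeg} gives $p_{\max} = 1 + \lfloor \log_2 M \rfloor$, hence $\log_2 M \ge p_{\max}-1$, i.e.\ $M \ge 2^{p_{\max}-1}$; combined with the elementary inequality $2^{k-1} \ge k$ for integers $k \ge 1$, this forces $M \ge p_{\max}$. Consequently $M/p \ge M/p_{\max} \ge 1$ for every $p$ in the product, so $\tfrac{M}{p}+1 \le \tfrac{2M}{p}$, and
\[
\lvert\Delta\rvert \;\le\; \prod_{p=1}^{p_{\max}} \frac{2M}{p} \;=\; \frac{2^{p_{\max}}}{p_{\max}!}\, M^{p_{\max}}.
\]
To finish, it remains to note that $2^{p_{\max}}/p_{\max}! \le 2$, equivalently $2^{p_{\max}-1} \le p_{\max}!$, which holds for all $p_{\max} \ge 1$ by a one-line induction (at each step the right side multiplies by $p_{\max}+1 \ge 2$ while the left only doubles). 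This gives $\lvert\Delta\rvert \le 2M^{p_{\max}} = 2(1/\lambda_1)^{p_{\max}}$.

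The argument is essentially bookkeeping, so I do not anticipate a real obstacle; the two spots that need attention are remembering to invoke $\lambda_0 \ge 0$ before the per-coordinate count (otherwise the clause penalty gets in the way), and verifying $M \ge p_{\max}$ so that the crude estimate $\tfrac{M}{p}+1 \le \tfrac{2M}{p}$ is valid uniformly over $p \le p_{\max}$.
\end{thmproofsketch}
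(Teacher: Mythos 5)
Your proposal is correct, and it reaches the stated bound by a genuinely different route. The paper also begins by discarding $\lambda_0$, but it then treats the relaxed constraint $\sum_p p\lambda_1 K_p \le 1$ as defining a simplex and invokes a cited formula for the number of non-negative lattice points in a simplex, which produces the factor $\frac{1}{p_{\max}!}\prod_p \frac{1}{p\lambda_1}\bigl(1+\sum_p p\lambda_1\bigr)^{p_{\max}}$; extracting the constant $2$ from that expression requires bounding an auxiliary function $F(p_{\max})$ and checking numerically that its maximum is $2$ at $p_{\max}=1$. You instead enclose $\Delta$ in the axis-aligned box $K_p < 1/(p\lambda_1)$ and count the box, which is a priori looser than the simplex count (you give up roughly a $1/p_{\max}!$ factor at the start), but the slack is absorbed by the same two elementary facts the paper ultimately leans on implicitly, namely $M \ge 2^{p_{\max}-1}$ from Lemma~\ref{lem:maxDeg} and $2^{p_{\max}-1} \le p_{\max}!$. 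What your version buys is a fully self-contained, citation-free argument with no numerical verification step; what the paper's version buys is a tighter intermediate estimate that could matter if one wanted a sharper constant or a refinement that keeps $\lambda_0$. Both correctly handle the one delicate point, dropping $\lambda_0 \ge 0$ before counting.
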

\begin{proof}
For simplicity, we consider a superset $\Delta_0 \supseteq \Delta$ obtained by dropping $\lambda_0$ from \eqref{eqn:sigCons}, i.e.
\begin{equation}\label{eqn:sigCons0}
\sum_{p=1}^{p_{\max}} p \lambda_1 K_p \leq 1.
\end{equation}
Condition \eqref{eqn:sigCons0} together with the implicit non-negativity constraints $K_p \geq 0$, $p = 1,\dots,p_{\max}$ define a simplex in $p_{\max}$ dimensions. Bounding the number of signatures in $\Delta_0$ is thus equivalent to bounding the number of non-negative integer points in this simplex. This problem has been studied extensively by mathematicians. Applying e.g.~\cite[eq.~(1.5)]{yau2006simplex}, we have 
\begin{align*}
\lvert\Delta_0\rvert &\leq \frac{1}{p_{\max}!} \prod_{p=1}^{p_{\max}} \frac{1}{p\lambda_1} \left(1 + \sum_{p=1}^{p_{\max}} p\lambda_1 \right)^{p_{\max}}\\
&= \frac{1}{\bigl(p_{\max}!\bigr)^2} \left(\frac{1}{\lambda_1}\right)^{p_{\max}} \left(1 + \frac{p_{\max} (p_{\max}+1) \lambda_1}{2} \right)^{p_{\max}}\\
&\leq \left(\frac{1}{\lambda_1}\right)^{p_{\max}} \underbrace{ \frac{\left(1 + p_{\max} (p_{\max}+1) 2^{-p_{\max}} \right)^{p_{\max}}}{\bigl(p_{\max}!\bigr)^2} }_{F(p_{\max})},
\end{align*}
where the last inequality is obtained by using the definition of $p_{\max}$ in Lemma~\ref{lem:maxDeg} to bound $\lambda_1 / 2 \leq 2^{-p_{\max}}$. 

To complete the proof, we bound the function $F(p_{\max})$ from above. The numerator of $F(p_{\max})$ converges to $1$ as $p_{\max} \to \infty$, as seen by taking its logarithm and bounding it:
\begin{multline*}
p_{\max} \log \left(1 + p_{\max} (p_{\max}+1) 2^{-p_{\max}} \right)\\ 
\leq p_{\max}^2 (p_{\max}+1) 2^{-p_{\max}} \to 0 \quad \text{as } p_{\max} \to \infty.
\end{multline*}
Thus $F(p_{\max})$ decreases to zero as $p_{\max}$ increases. Numerical evaluation shows that $F(p_{\max})$ attains a maximum value of $2$ at $p_{\max} = 1$.
\end{proof}

\begin{thmlem}\label{lem:classSizeSig}
The maximum number of DNF rules with a given signature $\bK \in \Delta$ is bounded as 
\[
\max_{\bK\in\Delta} \; \lvert\scrC(\bK)\rvert < (2d)^{1/\lambda_1}.
\]
\end{thmlem}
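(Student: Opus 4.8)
The plan is to bound $\lvert\scrC(\bK)\rvert$ by a crude counting argument over the clauses that may appear in a rule, and then control the resulting exponent using the signature constraint \eqref{eqn:sigCons}. First I would count the distinct degree-$p$ conjunctions available over $d$ binary variables: such a conjunction picks a size-$p$ subset $\cJ_k \subseteq \{1,\dots,d\}$ and, for each $j\in\cJ_k$, one of the two literals $x_j = 0$ or $x_j = 1$, giving exactly $\binom{d}{p} 2^p$ possibilities, which I would bound above by $\frac{(2d)^p}{p!} \le (2d)^p$.

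Next, since a DNF rule with signature $\bK = (K_1,\dots,K_{p_{\max}})$ is fully specified once one selects $K_p$ degree-$p$ clauses for each $p = 1,\dots,p_{\max}$, the number of such rules is at most the number of ways to choose $K_p$ clauses of each degree, counting selections with order and repetition (which only over-counts the unordered clause sets defining rules): $\lvert\scrC(\bK)\rvert \le \prod_{p=1}^{p_{\max}} \bigl(\binom{d}{p} 2^p\bigr)^{K_p} \le (2d)^{\sum_{p=1}^{p_{\max}} p K_p}$. Then I would invoke the defining inequality \eqref{eqn:sigCons} of $\Delta$: dropping the nonnegative $\lambda_0$ terms yields $\lambda_1 \sum_{p=1}^{p_{\max}} p K_p < 1$, i.e.\ $\sum_{p=1}^{p_{\max}} p K_p < 1/\lambda_1$. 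Since $d\ge 1$ gives $2d > 1$, the map $x\mapsto (2d)^x$ is strictly increasing, so $\lvert\scrC(\bK)\rvert \le (2d)^{\sum_p p K_p} < (2d)^{1/\lambda_1}$, and taking the maximum over $\bK\in\Delta$ gives the claim.

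I do not expect any real obstacle here; the only points needing care are (i) confirming that the ordered-with-repetition count genuinely over-counts, not under-counts, the set of DNF rules of the given signature — which holds because every such rule arises from at least one such selection — and (ii) preserving the strict inequality coming from \eqref{eqn:sigCons}, which is exactly why the stated bound is strict. One could sharpen the constant by further restricting $\scrC$ to rules meeting the necessary optimality conditions of the preceding subsection, but this refinement is not needed for the bound as stated.
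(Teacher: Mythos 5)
Your proof is correct and rests on the same counting as the paper's: there are $\binom{d}{p}2^p$ degree-$p$ conjunctions, a rule of signature $\bK$ is over-counted by $\prod_p\bigl(\binom{d}{p}2^p\bigr)^{K_p}$, and dropping the nonnegative $\lambda_0$ terms from \eqref{eqn:sigCons} controls the resulting exponent. The only difference is in the last step: you bound $\binom{d}{p}2^p \leq (2d)^p$ termwise and conclude directly from $\sum_p pK_p < 1/\lambda_1$, whereas the paper maximizes $\sum_p K_p\log\bigl(\binom{d}{p}2^p\bigr)$ as a linear program over the simplex, locates the optimum at the vertex $p=1$, and then drops $\lambda_0$ --- your route reaches the identical bound with less machinery.
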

\begin{proof}
The number of conjunctions of degree $p$ is $\binom{d}{p} 2^p$, where the factor of $2^p$ is due to there being two choices of conditions on each of the $p$ selected variables. The number of DNF rules with signature $\bK$ is therefore 
\[
\lvert\scrC(\bK)\rvert = \prod_{p=1}^{p_{\max}} \binom{\binom{d}{p} 2^p}{K_p} < \prod_{p=1}^{p_{\max}} \frac{\left(\binom{d}{p} 2^p\right)^{K_p}}{K_p!}.
\]
Taking logarithms, we obtain 
\begin{align}
\max_{\bK\in\Delta} \; \log &\lvert\scrC(\bK)\rvert <\nonumber\\ 
\max_{\bK} \;\; &\sum_{p=1}^{p_{\max}} K_p \log\left(\binom{d}{p} 2^p\right) - \log(K_p!)\nonumber\\
\text{s.t.} \;\; &\sum_{p=1}^{p_{\max}} K_p (\lambda_0 + p\lambda_1) \leq 1.\label{eqn:classSizeSig}
\end{align}
For simplicity, we drop the nonlinear term $-\log(K_p!) \leq 0$. The right-hand side of \eqref{eqn:classSizeSig} then becomes a maximization of a linear function over a simplex. The maximum value is given by 
\begin{equation}\label{eqn:classSizeSig2}
\max_{p=1,\dots,p_{\max}} \; \frac{\log\left(\binom{d}{p} 2^p\right)}{\lambda_0 + p\lambda_1}
\end{equation}
(attained by setting $K_{p^*} = 1 / (\lambda_0 + p^*\lambda_1)$ for a maximizing value $p^*$ and $K_p = 0$ otherwise). Again for simplicity, we further bound \eqref{eqn:classSizeSig2} from above by dropping $\lambda_0$ from the denominator, resulting in 
\[
\max_{\bK\in\Delta} \; \log \lvert\scrC(\bK)\rvert < \frac{1}{\lambda_1} \max_{p=1,\dots,p_{\max}} \; \frac{1}{p} \log\binom{d}{p} + \log 2
\]
(otherwise \eqref{eqn:classSizeSig2} may require solving a transcendental equation). Since $\log\binom{d}{p}$ increases sublinearly with $p$, the maximum occurs at $p=1$, yielding the desired result.
\end{proof}

By combining \eqref{eqn:classSize}, Lemmas~\ref{lem:simplex} and \ref{lem:classSizeSig}, we obtain the desired bound on the number of DNF rules satisfying the optimality conditions in Lemmas~\ref{lem:maxDeg} and \ref{lem:sigCons}.
\begin{thmlem}\label{lem:classSize}
Under the assumptions of Lemma~\ref{lem:maxDeg}, the number of DNF rules satisfying the necessary conditions of optimality in Lemmas~\ref{lem:maxDeg} and \ref{lem:sigCons} is bounded as 
\[
\lvert\scrC\rvert < 2 (2d)^{1/\lambda_1} \left(\frac{1}{\lambda_1}\right)^{p_{\max}}.
\]
\end{thmlem}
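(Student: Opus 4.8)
The plan is to assemble the bound directly from the decomposition in \eqref{eqn:classSize} together with the two counting lemmas already established, so the final argument is essentially bookkeeping. First I would recall that every DNF rule satisfying the optimality conditions of Lemmas~\ref{lem:maxDeg} and \ref{lem:sigCons} has a well-defined signature $\bK = (K_1,\dots,K_{p_{\max}})$ lying in the set $\Delta$ of signatures obeying \eqref{eqn:sigCons}, and that the family $\{\scrC(\bK)\}_{\bK\in\Delta}$ partitions $\scrC$. Consequently
\[
\lvert\scrC\rvert = \sum_{\bK\in\Delta}\lvert\scrC(\bK)\rvert \le \lvert\Delta\rvert \cdot \max_{\bK\in\Delta}\lvert\scrC(\bK)\rvert,
\]
which is exactly \eqref{eqn:classSize}.

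Next I would invoke Lemma~\ref{lem:simplex} to bound the first factor by $2(1/\lambda_1)^{p_{\max}}$ and Lemma~\ref{lem:classSizeSig} to bound the second factor by $(2d)^{1/\lambda_1}$. Multiplying the two bounds gives $\lvert\scrC\rvert \le 2(1/\lambda_1)^{p_{\max}} (2d)^{1/\lambda_1}$, and since Lemma~\ref{lem:classSizeSig} delivers a \emph{strict} inequality while Lemma~\ref{lem:simplex} delivers a non-strict one, the product inequality is strict, yielding the claimed
\[
\lvert\scrC\rvert < 2 (2d)^{1/\lambda_1} \left(\frac{1}{\lambda_1}\right)^{p_{\max}}.
\]

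The only points requiring care — and none of them is a real obstacle — are: confirming the partition in the first step is exact (no admissible rule is double-counted or omitted, and the trivial all-true / all-false rules are treated consistently with the convention $R(\cC)=0$ so that the degree counts $K_p$ are well defined), and checking that the strict/non-strict inequalities compose as claimed. Since every quantitative estimate has already been carried out in Lemmas~\ref{lem:maxDeg}–\ref{lem:classSizeSig}, no new bounds are needed and the statement follows in a couple of lines.
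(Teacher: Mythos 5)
Your proposal is correct and matches the paper's own argument exactly: the paper likewise obtains the bound by combining the decomposition in \eqref{eqn:classSize} with Lemma~\ref{lem:simplex} and Lemma~\ref{lem:classSizeSig} and multiplying the two factors. The additional care you note about the partition by signature and the composition of strict and non-strict inequalities is sound but not treated explicitly in the paper, which states the combination in a single sentence.
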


\subsection{Proof of Theorem~\ref{thm:MV}}

\fxnote{The following gives a straightforward bound on the absolute regret. The proof is written out for completeness. Bounding the relative regret (relative to $Q(\cC^*)$, i.e.~with a $1+\epsilon$ factor) is somewhat more involved but doable.}

We prove the theorem in two steps, first relating the empirical estimator in \eqref{eqn:binaryclass} to a problem intermediate between \eqref{eq:MVobj} and \eqref{eqn:binaryclass}, 
\begin{equation}\label{eq:MVobjEmpProb}
\begin{split}
    \cS^* := \argmin_{\cC} \;\; &Q(\cC) := \bar{V}(\cC) + R(\cC)\\
    \text{subject to} \;\; &\sum_{i\in\cI} \mathds{1}[x_i \in \cC] \geq \alpha m, 
\end{split}
\end{equation}
and then relating this intermediate problem \eqref{eq:MVobjEmpProb} to \eqref{eq:MVobj}. Problem \eqref{eq:MVobjEmpProb} has the same regularized volume objective as \eqref{eq:MVobj} but with the empirical probability constraint of \eqref{eqn:binaryclass}.

For the first step, let $\hat{V}(\cC)$ denote the empirical volume in \eqref{eqn:binaryclass} (i.e.~the first term in the objective function). As noted in Section~\ref{sec:MVruleSet}, $\hat{V}(\cC)$ is a scaled binomial random variable with $n$ trials and mean $\bar{V}(\cC)$. Hoeffding's inequality thus provides the following tail bound: 
\[
    \Pr\bigl(\bigl\lvert \hat{V}(\cC) - \bar{V}(\cC)\bigr\rvert > \epsilon_n \bigr) \leq 2 e^{-2n\epsilon_n^2}.
\]
Defining $\hat{Q}(\cC) = \hat{V}(\cC) + R(\cC)$ and recalling that $Q(\cC) = \bar{V}(\cC) + R(\cC)$, the same bound holds for the difference $\hat{Q}(\cC) - Q(\cC)$. Taking the union bound over the hypothesis class $\scrC$ yields 
\begin{equation}\label{eqn:tailBound}
    \Pr\bigl(\exists \cC \in \scrC : \bigl\lvert \hat{Q}(\cC) - Q(\cC)\bigr\rvert > \epsilon_n \bigr) \leq 2 \lvert\scrC\rvert e^{-2n\epsilon_n^2}.
\end{equation}

Assuming that the event in \eqref{eqn:tailBound} is not true, we obtain the following sequence of bounds, where the second inequality is due to the optimality of $\hat{\cS}$ in \eqref{eqn:binaryclass}:
\begin{align}
    Q(\hat{\cS}) &\leq \hat{Q}(\hat{\cS}) + \epsilon_n
    \leq \hat{Q}(\cS^*) + \epsilon_n
    \leq Q(\cS^*) + 2\epsilon_n.\label{eqn:pfMVstep1}
\end{align}
For this to hold with probability at least $1-\delta$, we set $\delta$ equal to the right-hand side of \eqref{eqn:tailBound} to obtain
\begin{equation}\label{eqn:epsilon}
\epsilon_n = \sqrt{\frac{\log(2\lvert\scrC\rvert / \delta)}{2n}}.
\end{equation}

For the second step, we observe that the empirical probability $\hat{P}(\cC) = \frac{1}{m} \sum_{i\in\cI} \mathds{1}[x_i \in \cC]$ is also a scaled binomial random variable, this time with $m$ trials and mean $P(\cC)$. We thus have a similar bound as in \eqref{eqn:tailBound}, 
\[
\Pr\bigl(\exists \cC \in \scrC : \bigl\lvert \hat{P}(\cC) - P(\cC)\bigr\rvert > \epsilon_m \bigr) \leq 2 \lvert\scrC\rvert e^{-2m\epsilon_m^2},
\]
and setting the right-hand side equal to $\delta$ yields the same expression for $\epsilon_m$ as in \eqref{eqn:epsilon} with $n$ replaced by $m$. We then use Theorem~3 and Corollary~12 in \citep{scott2006learning} to conclude that with probability at least $1 - \delta$, 
\[
Q(\cS^*) \leq q^*(\alpha + \epsilon_m) \quad \text{and} \quad P(\cS^*) \geq \alpha - \epsilon_m.
\]
Indeed, since $\hat{S} \in \scrC$ and satisfies the constraint $\hat{P}(\hat{\cS}) \geq \alpha$ as well, the above may be changed to 
\begin{equation}\label{eqn:pfMVstep2}
Q(\cS^*) \leq q^*(\alpha + \epsilon_m) \quad \text{and} \quad P(\hat{\cS}) \geq \alpha - \epsilon_m.
\end{equation}
Combining \eqref{eqn:pfMVstep1} and \eqref{eqn:pfMVstep2} gives \begin{align*}
    Q(\hat{\cS}) &\leq q^*(\alpha + \epsilon_m) + 2\epsilon_n \quad \text{and} \quad 
    P(\hat{\cS}) \geq \alpha - \epsilon_m
\end{align*}
with probability at least $1 - 2\delta$.

Lastly, we use Lemma~\ref{lem:classSize} to bound $\epsilon_n$ from above by 
\[
\sqrt{\frac{\lambda_1^{-1} \log(2d) + p_{\max} \log\lambda_1^{-1} + \log(4/\delta)}{2n}}
\]
and similarly for $\epsilon_m$.

\section{Generalization of the product estimator}
\label{sec:generalizationtheory}
Below, we give a Theorem bounding the expected error of the two-stage estimate $\hcO = \hcS \cap \hcB$ as a function of the error of the base estimators $\hcS, \tcB$. This justifies the two-stage nature of our algorithm and motivates selecting hyperparameters for overlap rules $\hcB$ based on the error with respect to the base estimator $\tcB$.
Before we state the result, we give a Lemma bounding the error of an estimator of a product of functions in terms of estimators of the respective terms in the product. 

Consider the task of predicting the binary deterministic label $g(X) = g_1(X)g_2(X)$ by approximating the product of estimators $f_1, f_2$ of $g_1, g_2$. Now, let $R_g(f)$ denote the expected zero-one loss of $f$ with respect to $g$ over $p$, 
$$
R_g(f) = \E_{X\sim p}[\mathds{1}[f(x) \neq g(x)]]~.
$$
\begin{thmlem}\label{lem:prodclassifier}
For $f_1$ and $f_2$ such that $R_{g_1}(f_1) \leq A \leq \min\{p(f_2(X)=1), p(g_2(X)=1)\}$, $R_{g_2}(f_2) \leq B \leq \min\{p(f_1(X)=1), p(g_1(X)=1)\}$ and $\max\{A + B, C\} \leq 1/2$, let $f(X)$ approximate $f_1(X)f_2(X)$ and assume that $R_{f_1f_2}(f) \leq C$. Then, 
$$
R_g(f) \leq A + B + C
$$
\end{thmlem}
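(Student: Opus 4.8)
The plan is to view the zero-one risk $R_{(\cdot)}(\cdot)$ as a pseudometric on $\{0,1\}$-valued functions, apply the triangle inequality once to peel off the term controlled by $C$, and then reduce the remaining term to a union bound over the two coordinate errors. For any $a,b,c\colon\cX\to\{0,1\}$ we have the elementary inclusion $\{x: a(x)\neq c(x)\}\subseteq\{x: a(x)\neq b(x)\}\cup\{x: b(x)\neq c(x)\}$, hence $R_c(a)\le R_b(a)+R_c(b)$. Taking $a=f$, $b=f_1f_2$ and $c=g=g_1g_2$, and using the hypothesis $R_{f_1f_2}(f)\le C$, gives
\[
R_g(f)\;\le\;R_{f_1f_2}(f)+R_{g_1g_2}(f_1f_2)\;\le\;C+R_{g_1g_2}(f_1f_2).
\]

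It then suffices to show $R_{g_1g_2}(f_1f_2)=\Pr_{X\sim p}\bigl(f_1(X)f_2(X)\neq g_1(X)g_2(X)\bigr)\le A+B$. The key (and essentially only) observation is that the product of two binary functions is ``$1$-Lipschitz in Hamming distance'' in each argument: if $f_1(x)=g_1(x)$ and $f_2(x)=g_2(x)$ then trivially $f_1(x)f_2(x)=g_1(x)g_2(x)$, so contrapositively
\[
\{x: f_1(x)f_2(x)\neq g_1(x)g_2(x)\}\subseteq\{x: f_1(x)\neq g_1(x)\}\cup\{x: f_2(x)\neq g_2(x)\}.
\]
A union bound over these two events gives $R_{g_1g_2}(f_1f_2)\le R_{g_1}(f_1)+R_{g_2}(f_2)\le A+B$, and combining with the previous display yields $R_g(f)\le A+B+C$.

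There is no real obstacle: both set inclusions are immediate once one notes that all five functions are $\{0,1\}$-valued, and nothing about $p$ beyond finite additivity is used. I would, however, flag in the write-up that the remaining hypotheses of the lemma — $A\le\min\{p(f_2(X)=1),p(g_2(X)=1)\}$, $B\le\min\{p(f_1(X)=1),p(g_1(X)=1)\}$, and $\max\{A+B,C\}\le 1/2$ — are not needed for this \emph{absolute} bound; they become relevant only when the lemma is invoked to control the \emph{relative} (or expected) error of the final estimator $\hcO=\hcS\cap\hcB$, e.g.\ to keep denominators of the form $p(\cdot=1)$ bounded away from zero. Accordingly I would present the proof using just the triangle inequality and the union bound, carrying the extra conditions only for consistency with the downstream theorem.
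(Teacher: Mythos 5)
Your proof is correct, and it reaches the paper's conclusion by a cleaner and more elementary route. The paper's own proof has the same two-step skeleton --- first bound $R_{g}(f_1f_2)\le A+B$, then combine with $R_{f_1f_2}(f)\le C$ --- but it executes the first step by splitting $\{f_1f_2\neq g_1g_2\}$ into the cases $f_1=f_2=1\land(g_1=0\lor g_2=0)$ and $(f_1=0\lor f_2=0)\land g_1=g_2=1$, expanding into four terms via Fr\'echet-type inequalities, and then invoking minima against the marginals $p(f_1=1), p(g_1=1)$, etc.\ (this is where the hypotheses $A\le\min\{p(f_2=1),p(g_2=1)\}$ and $B\le\min\{p(f_1=1),p(g_1=1)\}$ enter the paper's write-up; the displayed chain there also contains an undefined symbol $h_2$). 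Your single inclusion $\{f_1f_2\neq g_1g_2\}\subseteq\{f_1\neq g_1\}\cup\{f_2\neq g_2\}$ plus a union bound replaces all of that, and your second step is the same pseudometric triangle inequality the paper applies (the paper phrases it as an exact two-case decomposition of $\{f\neq g\}$ followed by minima, which collapses to the same thing for binary functions). What your version buys is the explicit observation that the marginal-probability hypotheses and the condition $\max\{A+B,C\}\le 1/2$ are not used anywhere in establishing the absolute bound $R_g(f)\le A+B+C$; this is a correct and worthwhile remark, since even the paper's four-term expansion can be closed without them (the first and third terms are disjoint subsets of $\{f_1\neq g_1\}$, the second and fourth of $\{f_2\neq g_2\}$). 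Your suggestion to carry those extra conditions only for consistency with the downstream Theorem on $\hcO=\hcS\cap\hcB$ is reasonable.
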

\begin{proof}
For convenience, let $f_1 = f_1(X), g_1 = g_1(X)$, et cetera, and let $\gamma = p(g(X)=1)$.
\begin{align*}
& R_g(f_1f_2) = p(f_1f_2 \neq g_1g_2) \\
& = p(f_1=f_2=1 \land (g_1=0 \lor g_2=0)) \\
& + p((f_1=0 \lor f_2=0) \land g_1=g_2=1) \\
& \leq  p(f_1=f_2=1 \land g_1=0) + p(f_1=f_2=1 \land g_2=0) \\
& + p(g_1=g_2=1 \land f_1=0) + p(g_1=g_2=1 \land f_2=0) \\
& \leq \min\{p(h_2=1), p(f_1=1 \land g_1=0)\} \\
& + \min\{p(f_1=1), p(f_2=1 \land g_2=0)\} \\
& + \min\{p(g_2=1), p(g_1=1 \land f_1=0)\} \\
& + \min\{p(g_1=1), p(g_2=1 \land f_2=0)\} \\
& \leq A + B
\end{align*}
In the first inequality, we use the standard Frechet inequalities. In the second and third, we use the assumptions in the statement. 
Alternatively, we could arrive at the same result by assuming that $h_2$ and $(f_1, h_1)$ as well as $h_1$ and $(f_2, h_2)$ are independent and decomposing the joint distributions. This could be guaranteed by sample splitting. We could then remove the assumption that the marginal probability of the label is larger than the error. 
In either case, 
\begin{align*}
R_g(f) & = p(f = f_1f_2 \land f_1f_2 \neq g) \\
& + p(f \neq f_1f_2 \land f_1f_2 = g) \\
& \leq \min\{p(f = f_1f_2), p(f_1f_2 \neq g)\} \\
& + \min\{p(f \neq f_1f_2), p(f_1f_2 = g)\} \\
& = p(f_1f_2 \neq g) + p(f \neq f_1f_2) \\
& \leq A + B + C~.
\end{align*}
\end{proof}

We now state our result. First, we view membership in $\hcO = \hcS \cap \hcB$ as given by an instance of the hypothesis class $\cF = \{f(x) \coloneqq \mathds{1}[x \in \hcS]h(x); h\in \cH\}$, for some function family $\cH$. Then, let $R_g(f) = \E_{X\sim p}[\mathds{1}[f(x) \neq g(x)]]$ denote the expected risk of $f$ with respect to $g$ over $p$, and $\hR_g(f) = \frac{1}{m}\sum_{i=1}^m \mathds{1}[f(x_i) \neq g(x_i)]$ the empirical risk. 

\begin{thmthm}\label{thm:generalization_app}%
Given are classifiers $\hat{s}, \tilde{b}$ of support membership $s$ and propensity boundedness $b$, with overlap defined as $o(x) = s(x)b(x)$, such that for all $n > N$ it holds for $A_n, C_n \in \tcO(1/\sqrt{n})$ with $\max\{A_n, C_n\} \leq 1/4$ that $R_s(\hat{s}) \leq A_n, R_b(\tilde{b}) \leq C_n$. Then, for any function $\hat{o}\in \cH$ approximating $\hat{s}\cdot \tilde{b}$, with probability larger than $1-\delta$, 
$$
R_o(\ho) \leq \hat{R}_{\hs \cdot \tilde{b}}(\ho) + \frac{D_{\cF, \delta, n}}{\sqrt{n}} + \tcO\left(\frac{1}{\sqrt{n}} \right)~,%
$$%
\end{thmthm}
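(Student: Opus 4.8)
The plan is to route the error of $\ho$ against the true overlap $o$ through the \emph{fixed} intermediate labeling $\tilde{o} \coloneqq \hs\cdot\tilde{b}$, in two steps: (i) bound $R_o(\ho)$ by $R_{\tilde{o}}(\ho) + A_n + C_n$ using Lemma~\ref{lem:prodclassifier}, which controls the error incurred by working with the product of the base estimators rather than with the true support/propensity sets; and (ii) bound $R_{\tilde{o}}(\ho)$ by $\hR_{\tilde{o}}(\ho)$ plus a uniform-convergence term for the class $\cF = \{\mathds{1}[x\in\hcS]h(x) : h\in\cH\}$, which is a completely standard binary-classification generalization bound because $\tilde{o}$ is a deterministic target.

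\textbf{Step (i).} First I would instantiate Lemma~\ref{lem:prodclassifier} with $f_1 = \hs$, $g_1 = s$, $f_2 = \tilde{b}$, $g_2 = b$ (so $g = s\cdot b = o$), $f = \ho$, and $C = R_{\tilde{o}}(\ho)$. The hypotheses $R_s(\hs)\le A_n$ and $R_b(\tilde{b})\le C_n$ are assumed; $A_n + C_n \le 1/2$ follows from $\max\{A_n,C_n\}\le 1/4$; the mass side conditions ($A_n$ at most the mass of $\tilde{b}$ and of $b$, and $C_n$ at most the mass of $\hs$ and of $s$) hold for all $n$ beyond a threshold, which we may fold into $N$, because those masses are fixed positive constants (overlap has positive probability) while $A_n, C_n\in\tcO(1/\sqrt{n})$ vanish; and the condition $R_{\tilde{o}}(\ho)\le 1/2$ may either be assumed of the learned $\ho$ or, if it fails, one falls back on the plain triangle inequality $R_o(\ho)\le R_{\tilde{o}}(\ho) + R_o(\tilde{o})$ together with $R_o(\tilde{o}) = R_o(\hs\tilde{b}) \le A_n + C_n$ (the first half of the lemma's proof, which needs no extra condition). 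As noted in that proof, sample-splitting so that $\hs,\tilde{b}$ are fit on a separate fold removes the mass conditions outright. Either way this yields $R_o(\ho)\le R_{\tilde{o}}(\ho) + A_n + C_n$.

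\textbf{Step (ii) and conclusion.} Since $\hs$ and $\tilde{b}$ are given, $\tilde{o}$ is a fixed binary labeling and predicting it with members of $\cF$ is ordinary binary classification, so a standard uniform-convergence argument --- a symmetrization/Rademacher bound, or a VC bound, together with a McDiarmid step for the confidence term --- gives, with probability at least $1-\delta$, $\sup_{f\in\cF}\bigl(R_{\tilde{o}}(f) - \hR_{\tilde{o}}(f)\bigr) \le D_{\cF,\delta,n}/\sqrt{n}$, where $D_{\cF,\delta,n}$ collects the capacity of $\cF$ ($\sqrt{n}$ times its Rademacher complexity, or $\sqrt{\mathrm{VCdim}(\cF)\,\log n}$) and a $\sqrt{\log(1/\delta)}$ term. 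Specializing to $\ho$ and substituting into Step (i) gives
\[
R_o(\ho) \;\le\; \hR_{\hs\cdot\tilde{b}}(\ho) + \frac{D_{\cF,\delta,n}}{\sqrt{n}} + A_n + C_n ,
\]
and since $A_n + C_n\in\tcO(1/\sqrt{n})$ this is the claimed bound. I expect the main obstacle to be the bookkeeping around Lemma~\ref{lem:prodclassifier}: ensuring the relevant set masses are bounded below by an absolute constant (which is what forces the ``$n>N$'' qualifier), handling the regime where $\ho$ poorly approximates $\tilde{o}$, and tracking all constants so that they collapse cleanly into the single $\tcO(1/\sqrt{n})$ term; the uniform-convergence step itself is routine.
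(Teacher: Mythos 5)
Your proposal is correct and follows essentially the same route as the paper: decompose the error through the intermediate target $\hs\cdot\tilde{b}$ via Lemma~\ref{lem:prodclassifier}, then apply a standard VC uniform-convergence bound over $\cF$ to relate $R_{\hs\cdot\tilde{b}}(\ho)$ to its empirical counterpart. Your treatment is in fact somewhat more careful than the paper's two-line argument, since you explicitly verify (or explain how to discharge) the side conditions of the lemma that the paper's proof leaves implicit.
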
%
where $D_{\cF, \delta, n} = \sqrt{8d(\log\frac{2m}{d}+1)+8\log\frac{4}{\delta}}$, with $d$ the VC-dimension of $\cF$ and $\tilde{O}$ hides logarithmic factors.
\begin{proof}
From Lemma~\ref{lem:prodclassifier} and assumptions, we have that 
$$
R_{o}(\hat{o}) \leq R_{\hat{s}\cdot\tilde{b}}(\hat{o}) + R_{s}(\hat{s}) + R_{b}(\tilde{b}) \leq  R_{\hat{s}\cdot\tilde{b}}(\hat{o}) +  \tcO\left(\frac{1}{\sqrt{n}} \right)~.
$$
By applying standard VC-theory w.r.t. $\cF$, we have our result.
\end{proof}

Theorem~\ref{thm:generalization_app} bounds the generalization error of (e.g., Boolean rule) approximations of $\sqrt{n}$-consistent base estimators. It may be generalized to other rates, but convergence at \emph{some} rate is necessary for consistency of the final estimator. Critically, the bias incurred by the approximation is observable and may be traded off for interpretability.


\bibliography{AISTATS_ArXiv.bbl}

\onecolumn
{\small
\begin{longtable}{lcccc}
    \caption{Population averages for covariates in Opioids in order of difference between the overlapping and non-overlapping set. DMME, MME and Duration are the medians of daily MME, total MME and prescription duration days in each group.}\\%
    \toprule
    & {\bf Total} & {\bf DMME} & {\bf MME} & {\bf Duration} \\
    Total sample & 35106 & 46 & 225 & 5 \\
    Male & 9301 & 50 & 300 & 5 \\
    Female & 25805 & 45 & 225 & 5 \\ \midrule
    {\bf Age groups} \\
    <15 & 847 & 20 & 100 & 5 \\
    15-24 & 3334 & 45 & 200 & 5 \\
    25-34 & 9994 & 45 & 210 & 4 \\
    35-44 & 6820 & 46 & 225 & 5 \\
    45-54 & 6196 & 50 & 250 & 5 \\
    55-64 & 7915 & 50 & 300 & 5 \\
    >=65 & 0 & 0 & 0 & 0 \\\midrule
    {\bf Surgery type} \\
    Auditory & 29 & 18 & 135 & 6 \\
    Cardiovascular & 3633 & 45 & 270 & 5 \\
    Integumentary & 1507 & 48 & 225 & 5 \\
    Mediastinum & 54 & 47 & 300 & 5 \\
    Female genital & 3913 & 48 & 225 & 5 \\
    Hemic & 885 & 50 & 225 & 5 \\
    Respiratory & 665 & 45 & 250 & 5 \\
    Endocrine & 214 & 45 & 200 & 5 \\
    Nervous & 4350 & 60 & 375 & 6 \\
    Urinary & 1476 & 45 & 225 & 5 \\
    Musculoskeletal & 6678 & 60 & 450 & 7 \\
    Maternity & 13553 & 45 & 200 & 4 \\
    Male genital & 585 & 45 & 225 & 5 \\ \midrule
    {\bf Year} \\
    2011 & 7547 & 45 & 225 & 5 \\
    2012 & 10743 & 46 & 225 & 5 \\
    2013 & 9651 & 50 & 225 & 5 \\
    2014 & 7165 & 45 & 225 & 5 \\ \midrule
    {\bf Diagnosis history (until day before surgery)} \\
    Other specified gastritis: without mention of hemorrhage & 491 & 42 & 225 & 5 \\
    Other ascites & 233 & 45 & 225 & 5 \\
    Lumbosacral spondylosis without myelopathy & 1135 & 60 & 400 & 6 \\
    Nausea with vomiting & 1914 & 45 & 225 & 5 \\
    Other respiratory abnormalities & 1935 & 45 & 225 & 5 \\
    Vomiting alone & 765 & 45 & 200 & 5 \\
    Myalgia and myositis: unspecified & 1522 & 50 & 250 & 5 \\
    Attention deficit disorder with hyperactivity & 370 & 45 & 225 & 5 \\
    Attention deficit disorder without mention of hyperactivity & 444 & 45 & 225 & 5 \\
    Depressive disorder: not elsewhere classified & 2221 & 50 & 225 & 5 \\
    Dysthymic disorder & 752 & 50 & 225 & 5 \\
    Tachycardia: unspecified & 631 & 45 & 225 & 5 \\
    Degeneration of cervical intervertebral disc & 904 & 56 & 337 & 6 \\
    Flatulence: eructation: and gas pain & 427 & 45 & 225 & 5 \\
    Generalized anxiety disorder & 833 & 45 & 225 & 5 \\
    Other symptoms referable to back & 368 & 50 & 300 & 5 \\
    Cellulitis and abscess of leg: except foot & 450 & 45 & 225 & 5 \\
    Constipation: unspecified & 1136 & 45 & 225 & 5 \\
    Thoracic or lumbosacral neuritis or radiculitis: unspecified & 1676 & 60 & 326 & 6 \\
    Anxiety state: unspecified & 2205 & 50 & 225 & 5 \\
    Lumbago & 4559 & 50 & 250 & 5 \\
    Abdominal pain: generalized & 1607 & 45 & 225 & 5 \\
    Degeneration of lumbar or lumbosacral intervertebral disc & 1542 & 60 & 388 & 6 \\
    Other and unspecified noninfectious gastroenteritis and colitis & 1254 & 45 & 225 & 5 \\
    Major depressive affective disorder: recurrent episode: moderate & 507 & 45 & 225 & 5 \\
    Asthma: unspecified type: unspecified & 2044 & 45 & 225 & 5 \\
    Arthrodesis status & 178 & 60 & 450 & 7 \\
    Chest pain: unspecified & 4701 & 45 & 225 & 5 \\
    Routine general medical examination at a health care facility & 9529 & 50 & 225 & 5 \\
    Diarrhea & 1714 & 50 & 225 & 5 \\
    Fitting and adjustment of vascular catheter & 318 & 45 & 225 & 5 \\
    Hypopotassemia & 721 & 45 & 225 & 5 \\
    Bariatric surgery status & 302 & 40 & 200 & 5 \\
    Sprain of neck & 816 & 50 & 225 & 5 \\
    Unspecified gastritis and gastroduodenitis: without mention of hemorrhage & 960 & 45 & 225 & 5 \\
    Injury of face and neck & 271 & 46 & 300 & 5 \\
    Backache: unspecified & 2471 & 50 & 225 & 5 \\
    Unspecified septicemia & 222 & 45 & 225 & 5 \\
    Acute pharyngitis & 4219 & 45 & 225 & 5 \\
    Acute bronchitis & 3311 & 46 & 225 & 5 \\
    Abdominal pain: other specified site & 2890 & 45 & 225 & 5 \\
    Atrophic gastritis: without mention of hemorrhage & 537 & 45 & 225 & 5 \\
    Cough & 3946 & 45 & 225 & 5 \\
    Altered mental status & 202 & 45 & 225 & 5 \\
    Cervicalgia & 2758 & 50 & 250 & 5 \\
    Abdominal pain: unspecified site & 6339 & 45 & 225 & 5 \\
    Other chronic pain & 346 & 56 & 300 & 6 \\
    Headache & 3514 & 45 & 225 & 5 \\
    Tobacco use disorder & 1834 & 50 & 225 & 5 \\
    Other screening mammogram & 5722 & 50 & 240 & 5 \\
    Observation and evaluation for other specified suspected conditions & 337 & 45 & 225 & 5 \\
    Unspecified sinusitis (chronic) & 1624 & 46 & 225 & 5 \\
    Rheumatoid arthritis & 353 & 50 & 300 & 5 \\
    Brachial neuritis or radiculitis NOS & 1147 & 50 & 300 & 5 \\
    Loss of weight & 455 & 46 & 225 & 5 \\
    Hypersomnia with sleep apnea: unspecified & 424 & 42 & 225 & 5 \\
    Insomnia: unspecified & 968 & 50 & 225 & 5 \\
    Other malaise and fatigue & 5178 & 46 & 225 & 5 \\
    Other injury of chest wall & 210 & 50 & 300 & 5 \\
    Dehydration & 841 & 45 & 225 & 5 \\
    Acute respiratory failure & 120 & 40 & 225 & 5 \\
 \bottomrule
 \label{tbl:covariates_supp}
\end{longtable}
}

{\small
\begin{longtable}{lcc}
    \caption{Population averages for the 156 features in the UTI cohort.  Mean values and total (for binary features) are given, and there are 64593 subjects in total.} \label{tbl:covariates_uti} \\
    \toprule
& {\bf Mean} & {\bf Total} \\
{\bf Demographics} \\
Age & 55.1 &  \\
Male & 16.53\% & 10685 \\
White & 72.17\% & 46662 \\
Veteran & 4.61\% & 2981 \\ \midrule
{\bf Current Location}\\
Outpatient & 64.89\% & 41957 \\
Emergency Room & 15.69\% & 10142 \\
Inpatient & 17.26\% & 11159 \\
Intensive Care Unit (ICU) & 2.69\% & 1736 \\ \midrule
{\bf Local Resistance Rates (Past 30-90 days, at this location)} \\
Trimethoprim/Sulfamethoxazole & 18.61\% &  \\
Nitrofurantoin & 19.85\% &  \\
Ciprofloxacin & 22.70\% &  \\
Levofloxacin & 24.19\% &  \\ \midrule
{\bf Secondary Site of Infection} \\
Skin / Soft Tissue & 0.20\% & 132 \\
Blood & 1.59\% & 1031 \\
Respiratory Tract & 0.53\% & 341 \\
Nasal or Rectal Swab & 0.19\% & 124 \\ \midrule
{\bf Medical History (Past 90 Days)} \\
Alcohol abuse & 1.66\% & 1074 \\
Deficiency anemia & 2.84\% & 1837 \\
Cardiac arrhythmias & 17.08\% & 11041 \\
Blood loss anemia & 0.49\% & 315 \\
Congestive heart failure & 10.16\% & 6571 \\
Coagulopathy & 3.81\% & 2466 \\
Diabetes, uncomplicated & 14.13\% & 9135 \\
Diabetes, complicated & 5.00\% & 3232 \\
Depression & 11.80\% & 7627 \\
Drug abuse & 1.72\% & 1114 \\
Fluid and electrolyte disorders & 13.84\% & 8946 \\
AIDS/HIV & 0.43\% & 281 \\
Hypertension, uncomplicated & 32.51\% & 21017 \\
Hypertension, complicated & 5.43\% & 3513 \\
Hypothyroidism & 7.86\% & 5085 \\
Liver disease & 4.36\% & 2822 \\
Lymphoma & 1.63\% & 1051 \\
Metastatic cancer & 5.50\% & 3559 \\
Other neurological disorders & 6.68\% & 4319 \\
Obesity & 6.70\% & 4332 \\
Pulmonary circulation disorders & 3.13\% & 2025 \\
Peptic ulcer disease, excluding bleeding & 0.61\% & 393 \\
Peripheral vascular disorders & 5.68\% & 3672 \\
Paralysis & 3.08\% & 1992 \\
Psychoses & 2.42\% & 1563 \\
Chronic pulmonary disease & 11.29\% & 7299 \\
Renal & 8.87\% & 5735 \\
Rheumatoid arthritis / collagen vascular diseases & 3.76\% & 2428 \\
Solid tumor without metastasis & 12.00\% & 7760 \\
Valvular disease & 7.79\% & 5034 \\
Weight loss & 3.59\% & 2319 \\
Preganant & 3.08\% & 1989 \\ \midrule
{\bf Previous Care (Past 90 days)}\\
Inpatient Stay & 18.38\% & 11882 \\
Nursing Home Stay & 1.20\% & 779 \\ \midrule
{\bf Previous Procedures (Past 90 days)} \\
Central Venous Catheder & 5.27\% & 3410 \\
Hemodialysis & 0.66\% & 427 \\
Mechanical Ventilation & 5.74\% & 3714 \\
Parenteral Nutrition & 0.67\% & 434 \\
Surgery & 59.84\% & 38689 \\ \midrule
{\bf Previous Organisms (Past 90 days)} \\
Citrobacter species & 0.42\% & 270 \\
Coagulate negative Staphylococcus species & 1.15\% & 741 \\
Enterobacter aerogenes & 0.15\% & 95 \\
Escherichia coli & 7.82\% & 5057 \\
Enterococcus species & 2.66\% & 1718 \\
Enterobacter cloacae & 0.29\% & 186 \\
Group B Streptococcus & 0.17\% & 109 \\
Klebsiella pneumoniae & 2.02\% & 1307 \\
Morganella species & 0.11\% & 73 \\
Pseudomonas aeruginosa & 0.92\% & 594 \\
Proteus species & 0.69\% & 445 \\
Staph aureus & 1.55\% & 1003 \\
Serratia species & 0.22\% & 145 \\ \midrule
{\bf Previous Resistance, measured by culture (Last 90 Days)} \\
Amoxicillin Clavulanate & 2.34\% & 1511 \\
Amikacin & 0.10\% & 67 \\
Ampicillin & 7.44\% & 4808 \\
Aztreonam & 0.95\% & 616 \\
Ceftazidime & 0.30\% & 197 \\
Cefazolin & 9.22\% & 5962 \\
Chlorampenicol & 0.17\% & 111 \\
Ciprofloxacin & 4.62\% & 2984 \\
Clindamycin & 0.97\% & 624 \\
Ceftriaxone & 1.24\% & 804 \\
Doxycycline & 0.39\% & 249 \\
Ertapenem & 0.14\% & 88 \\
Erythromycin & 3.71\% & 2399 \\
Cefepime & 0.54\% & 351 \\
Cefoxitin & 0.49\% & 319 \\
Gentamicin & 1.65\% & 1066 \\
Gentamicin (Synergistic) & 0.47\% & 307 \\
Imipenem & 0.47\% & 303 \\
Levofloxacin & 5.32\% & 3439 \\
Linezolid & 0.09\% & 58 \\
Meropenem & 0.13\% & 85 \\
Moxifloxacin & 0.86\% & 556 \\
Nalidixic Acid & 0.09\% & 60 \\
Nitrofurantoin & 4.06\% & 2628 \\
Oxacillin & 1.79\% & 1158 \\
Penicillin & 2.41\% & 1559 \\
Piperacillin & 0.62\% & 402 \\
Polymyxin B & 1.22\% & 790 \\
Rifampin & 0.80\% & 518 \\
Ampicillin Sulbactam & 1.63\% & 1056 \\
Streptomycin (Synergistic) & 0.23\% & 150 \\
Trimethoprim Sulfamethoxazole & 3.10\% & 2006 \\
Tetracycline & 5.33\% & 3443 \\
Ticarcillin & 0.24\% & 153 \\
Tobramycin & 0.31\% & 203 \\
Piperacillin Tazobactam & 0.53\% & 341 \\
Vancomycin & 0.92\% & 598 \\ \midrule
{\bf Previous Antibiotic Prescription (Last 90 Days)}\\
Amikacin & 0.09\% & 60 \\
Amoxicillin & 2.47\% & 1596 \\
Amoxicillin/Clavulanate & 2.15\% & 1388 \\
Amphotericin B & 0.16\% & 102 \\
Ampicillin/Sulbactam & 0.34\% & 217 \\
Azithromycin & 2.86\% & 1847 \\
Aztreonam & 0.25\% & 159 \\
Cefadroxil & 0.15\% & 96 \\
Cefazolin & 4.87\% & 3150 \\
Cefepime & 2.30\% & 1489 \\
Cefixime & 0.26\% & 166 \\
Cefotetan & 0.18\% & 114 \\
Cefoxitin & 0.25\% & 161 \\
Cefpodoxime & 0.88\% & 570 \\
Ceftazidime & 0.73\% & 475 \\
Ceftriaxone & 2.75\% & 1775 \\
Cefuroxime & 0.24\% & 156 \\
Cephalexin & 2.31\% & 1496 \\
Ciprofloxacin & 11.09\% & 7170 \\
Clarithromycin & 0.35\% & 226 \\
Clindamycin & 1.84\% & 1187 \\
Daptomycin & 0.10\% & 63 \\
Dicloxacillin & 0.19\% & 126 \\
Doxycycline & 1.73\% & 1119 \\
Ertapenem & 0.22\% & 140 \\
Erythromycin & 0.39\% & 249 \\
Fluconazole & 3.56\% & 2301 \\
Fosfomycin & 0.36\% & 232 \\
Gentamicin & 0.94\% & 607 \\
Imipenem & 0.33\% & 216 \\
Levofloxacin & 5.94\% & 3838 \\
Linezolid & 0.73\% & 470 \\
Meropenem & 0.40\% & 256 \\
Metronidazole & 4.49\% & 2906 \\
Micafungin & 0.24\% & 154 \\
Minocycline & 0.20\% & 129 \\
Moxifloxacin & 0.27\% & 174 \\
Nafcillin & 0.24\% & 157 \\
Nitrofurantoin & 2.73\% & 1767 \\
Norfloxacin & 4.25\% & 2749 \\
Penicillin & 0.31\% & 199 \\
Piperacillin & 0.41\% & 268 \\
Piperacillin/Tazobactam & 0.23\% & 148 \\
Polymyxin B & 0.52\% & 333 \\
Posaconazole & 0.18\% & 118 \\
Tetracycline Metronidazole & 0.09\% & 59 \\
Trimethoprim & 0.12\% & 79 \\
Trimethoprim/Sulfamethoxazole & 3.96\% & 2558 \\
Vancomycin & 8.80\% & 5690 \\
Vancomycin Gentamicin & 3.35\% & 2165 \\
 \bottomrule
\end{longtable}
}
\twocolumn

\end{document}